\newcommand{\R}{\mathbb{R}}
\newcommand{\Z}{\mathbb{Z}}
\newcommand{\I}{\mathbb{I}}
\newcommand{\Fcal}{\mathcal{F}}
\newcommand{\Bcal}{\mathcal{B}}
\newcommand{\Gcal}{\mathcal{G}}
\newcommand{\Hcal}{\mathcal{H}}
\newcommand{\Ical}{\mathcal{I}}
\newcommand{\Acal}{\mathcal{A}}
\newcommand{\Pcal}{\mathcal{P}}
\DeclareMathOperator{\stab}{Stab}
\DeclareMathOperator{\coor}{coor}
\newcommand{\sG}{\mathscr{G}}
\newcommand{\sH}{\mathscr{H}}
\newcommand{\sS}{\mathscr{S}}
\newcommand{\sK}{\mathscr{K}}
\newcommand{\sT}{\mathscr{T}}
\newcommand{\fg}{\mathfrak{g}}
\newcommand{\fh}{\mathfrak{h}}
\newcommand{\ft}{\mathfrak{t}}
\newcommand{\fa}{\mathfrak{a}}
\newcommand{\fb}{\mathfrak{b}}
\newcommand{\fe}{\mathfrak{e}}
\newcommand{\fs}{\mathfrak{s}}
\DeclareMathOperator{\lip}{Lip}
\newcommand{\relu}{\mathrm{ReLU}}
\newcommand{\Hcalode}{\mathcal{H}_{\text{ode}}}
\newcommand{\ch}{\mathrm{CH}}
\newcommand{\chbar}{\overline{\mathrm{CH}}}
\theoremstyle{plain}
\newtheorem{theorem}{Theorem}
\newtheorem{proposition}{Proposition}
\newtheorem{corollary}{Corollary}
\newtheorem{lemma}{Lemma}
\theoremstyle{definition}
\newtheorem{definition}{Definition}
\newtheorem{example}{Example}
\theoremstyle{remark}
\newtheorem{remark}{Remark}
\begin{document}

\title{
        Deep Neural Network Approximation of Invariant Functions through Dynamical Systems
}

\author{Qianxiao Li$^1$, Ting Lin$^2$ and Zuowei Shen$^3$}
\institute{National University of Singapore,\email{qianxiao@edu.nus.sg}
\and
Peking University, \email{lintingsms@pku.edu.cn}
\and
National University of Singapore, \email{matzuows@edu.nus.sg}
}

\maketitle

\begin{abstract}
    We study the approximation of functions which are invariant with respect to
    certain permutations of the input indices using flow maps of dynamical
    systems.  Such invariant functions includes the much studied
    translation-invariant ones involving image tasks, but also encompasses many
    permutation-invariant functions that finds emerging applications in science
    and engineering. We prove sufficient conditions for universal approximation
    of these functions by a controlled equivariant dynamical system,
    which can be viewed as a general abstraction of deep residual networks with symmetry constraints.
    These results not only imply the universal approximation for a variety
    of commonly employed neural network architectures for symmetric function approximation,
    but also guide the design of architectures with approximation guarantees
    for applications involving new symmetry requirements.
\end{abstract}

\tableofcontents

\section{Introduction}
\label{sec:intro}

Deep learning enabled significant progress in computer vision and natural
language processing, arguably due to its ability to exploit structures of the
data.  For example, convolution neural networks (CNN) targets translational
symmetry~\cite{azulay2018deep}, whereas recurrent neural networks (RNN) accounts
for causal structures and
time-homogeneity~\cite{li2021on,li_approximation_2022}.  Recently, we witness
increasing interest to apply machine learning to problems arising in science and
engineering~\cite{goh2017deep,butler2018machine,noe2020machine}.  Here, very
different structures present themselves in the underlying data.  For instance,
in modelling structure-property relationships involving atomic
systems~\cite{butler2018machine}, the input data often comes in the form of a
list of atoms with their respective property descriptions, and the goal is to
predict some macroscopic quantities depending on these input specifications.
Examples of such properties include energy, elasticity constants, etc.  In these
cases, there is symmetry with respect to permutations on the list of atoms, and
sometimes on a subset of the properties, e.g. 3D atomic coordinates, via a
choice of basis.  These transformations can be viewed as specific subgroups of
the symmetric group on the coordinates of the feature vectors, and such
transformation leaves the macroscopic property invariant.  In the computational
chemistry literature, current methods to tackle this rely on graphical
representations to induce
invariance~\cite{xieCrystalGraphConvolutional2018,chenGraphNetworksUniversal2019},
but such methods have limited applications if atomic properties are not spatial
coordinates, e.g. correlation functions~\cite{yeomans1992statistical}.  Hence,
to learn structure-property relationships in general, it is necessary to devise
methods to approximate functions that are invariant under the action of a
specified subgroup of the permutation group on the feature indices.

It is the purpose of this work to investigate the role of deep learning in approximating
functions that are invariant under the action of permutation subgroups on input indices.
This includes the CNN as a special case, with the subgroup being the group of translations.
However, to address new challenges arising in scientific applications, it is necessary to generalize the theory to accommodate other types of symmetries.
There is an interesting interaction between deep neural networks and symmetry that is worth noting.
On the one hand, enforcing a certain type of symmetry on any model hypothesis space necessarily restricts its approximation flexibility.
On the other hand, in function approximation applications the goal is to develop a hypothesis space that has the power
to approximate arbitrary functions.
This dilemma leads to considerable challenge in building hypothesis spaces to approximate symmetric function families,
especially when the symmetry group under consideration is complex~\cite{bogatskiy2020lorentz,finzi2020generalizing}.
In this regard, deep learning offers an attractive solution to this problem.
The distinguishing feature of deep learning, compared with traditional hypothesis spaces, is the presence of compositional structures.
A deep neural network represents a function in compositional form:
\begin{equation}
	\bm{F}(\bm x)
	= \bm g \circ \bm{F}_T \circ \dots \circ \bm{F}_1 (\bm x),
\end{equation}
where each $\bm F_i$ is a shallow neural network layer and $\bm g$ is the last layer
used to match output dimensions.
The crucial point is that each $\bm F_i$ and $\bm g$ can be made simple,
yet it can yield a complex $\bm F$ by simply increasing the number of layers $T$.
Suppose now that we want to build an $\bm F$ which is invariant under a transformation
$\fg$ on the input data.
Deep learning can accomplish this by simply choosing $\bm F_i$ to be equivariant,
i.e. $\bm F_i (\fg(\bm x)) = \fg (\bm F_i (\bm x))$
and $\bm g$ to be invariant, i.e. $\bm g(\fg (\bm x)) = \bm g(\bm x)$.
Check that this implies that $\bm F$ is $\fg$ invariant.
These symmetry restrictions may force each $\bm F_i$ and $\bm g$ to be simple,
but this now no longer necessarily limits approximation power of the deep neural network,
which builds complexity through increasing $T$.
In other words, by building complexity through composition, deep learning
may circumvent the contradicting requirements of symmetry and flexibility.


In this paper, we present some basic theoretical results on when approximation of arbitrary symmetric
functions can be achieved via compositional structures.
The approximation of functions through composition has been studied in a number of recent works.
For example, non-asymptotic optimal approximation rates for fully connected deep ReLU networks are obtained
in~\cite{shen2021optimal,lu2021deep,shen2019deep}.
In~\cite{shen2021deep,shen2021deepb,shen2021neural}, the approximation of fully connected deep network beyond the ReLU is discussed.
In particular, a family of simple activation functions is designed in~\cite{shen2021deepb},
whose corresponding neural networks can approximate an arbitrary continuous function with arbitrary accuracy by a fixed size network.
Here, our focus is on the interaction of composition and symmetry.
We note that while there are a number of results in the literature on universal approximation of
symmetric functions, they mostly focus on specific architectures and symmetry
groups~\cite{cohenwelling16,bogatskiy2020lorentz,finzi2020generalizing,yarotsky2018universal,
sannai2019universal,maron2019universality,ravanbakhsh2017equivariance,ravanbakhsh2020universal,NIPS2017_f22e4747,zweig2021functional}.
The results in this paper are of a different nature.
Our goal is to give general sufficient conditions for any architecture to achieve universal
approximation under any symmetry constraints induced by suitable permutation subgroups.
Thus, the results can be used to deduce universal approximation results for a variety of architectures
and symmetry groups in essentially the same way, including convolutional neural networks,
permutation-invariant networks, etc.
We illustrate this in Section~\ref{sec:application},
where we show that our results immediately imply universal approximation of residual versions of popular architectures used to approximate symmetric functions.
More importantly, these sufficient conditions can be used to guide the development of new architectures
to accommodate new symmetry structures that may arise in future applications.
We will give illustrations of this in the realm of property prediction for crystalline and amorphous materials~\cite{tian2022tbd}.

To study this problem mathematically, we will adopt the continuous idealization of
deep learning first explored in \cite{weinan2017proposal,li2017maximum,Haber2017}.
In particular, we derive general sufficient conditions for the approximation of
functions invariant to the aforementioned symmetries through composition - now in the form
of dynamics.
These results expands upon those in \cite{li2019deep} by incorporating symmetry considerations.

The paper is organized as follows.
In Section~\ref{sec:prelim}, we introduce notation and prior work on the analysis of approximation
by dynamical hypothesis spaces, which sets the stage for the analysis in this paper.
We then present the main approximation results and their applications in Section~\ref{sec:thm}.
The proofs of these results are presented in Section~\ref{sec:proofs}.
\section{Preliminaries}
\label{sec:prelim}

In this section, we introduce notations, definitions and present some
known previous results relevant to the main results in this paper.
Section~\ref{sec:prelim:jems} recalls main results in \cite{li2019deep}
on the approximation theory of flow maps without symmetry considerations.
Section~\ref{sec:prelim:group} introduces terminologies in group theory
used to describe discrete invariance and equivariance.
Based on these two concepts, in Section~\ref{sec:prelim:uap}
we provide some elementary results on universal approximation property
under invariance and equivariance,
which reveal challenges one encounters in formulating a general approximation result.

Throughout this paper, we adopt the following notations:
\begin{enumerate}
    \item We use boldface letters $\bm x,\bm y,\bm z$ for points in the Euclidean space $\R^n$.
    For scalars such as the component of these vectors, we use non-bold letters $x,y,z$.
    \item We use normal, non-bold letters like $f,g,h$ and $\alpha, \beta, \gamma$ for scalar-valued functions,
    shortened as \emph{functions}, and normal bold letters like $\bm{f}, \bm{g},
    \bm{h}$ and $\bm \alpha, \bm \beta, \bm \gamma$ for vector-valued functions,
    shortened as \emph{mappings}.
    \item We use script font letters
    $\sG, \sH, \sK$ to denote groups,
    and use fraktur font letters $\fa,\fb$ to denote elements in these groups.
    \item We use calligraphic font letters $\Acal, \Fcal, \Gcal$ to denote families of functions or mappings.
    \item
    Unless otherwise stated, we adopt periodic boundary conditions when specifying vector or tensor indices.
    That is, if $\bm x \in \R^n$, then $x_{n+1} := x_1$, $x_{-1} := x_{n}$, and so on.

\end{enumerate}

\subsection{Dynamical Hypothesis Spaces}
\label{sec:prelim:jems}

We first recall the problem formulation and main results in~\cite{li2019deep}
relevant to the present analysis.
The key problem investigated there is the approximation of functions through
dynamical evolution. In particular, associated with an ordinary differential equation (ODE)
\begin{equation}
    \dot{\bm z}(t) = \bm f_t (\bm z(t)),
    \qquad
    \bm z(0) = \bm x,
    \qquad
    t \in [0, T],
\end{equation}
is the mapping $\bm x \mapsto \bm z(T)$, which can be used to approximate functions
by choosing the vector field ${\bm f}_t$ from a family of functions $\mathcal{F}$.
We call $\Fcal$ a \emph{control family}, since it serves to control the dynamics of $\bm z$,
as in the study of optimal control of differential equations (see e.g.~\cite{evans1983introduction}).
In this process, the complexity of the resulting mapping depends on $\mathcal{F}$ and $T$.
The results in \cite{li2019deep} concerns the density of mappings of this type,
and we recall them below.
We first define the \emph{flow map} for time-homogenous continuous dynamical systems.
\begin{definition}[Flow map]
    Let $\bm f:\mathbb R^n \to \mathbb R^n$ be Lipschitz.
    We define the flow map associated with $\bm f$ at time horizon $T$
    as $\bm \phi(\bm f, T)(\bm x) = \bm z(T)$, where $\dot{\bm z}(t) = \bm f(\bm z (t))$ with initial data ${\bm z}(0) = \bm x$.
    It is well-known (see e.g.~\cite{Arnold1973Ordinary}) that the mapping
    $\bm \phi(\bm f,T)$ is Lipschitz for any real number $T$,
    and the inverse of $\bm \phi(\bm f,T)$ is $\bm \phi(-\bm f,T)$.
    In particular, the flow map is bi-Lipschitz.
\end{definition}
Based on the flow map, we can define the \emph{attainable set} for a given control family $\Fcal$,
which contains the compositions of flow maps generated by dynamics driven by
vector fields chosen from $\Fcal$.
\begin{definition}[Attainable set]
    For a given control family $\Fcal$ of Lipschitz mappings,
    we define its attainable set
    \begin{equation}
        \Acal_{\Fcal} :=
        \{
        \bm{\phi}(\bm f_1,\tau_1) \circ \cdots \circ \bm \phi(\bm f_k,\tau_k)
        :
        \bm f_1,\cdots, \bm f_k \in \Fcal,
        \tau_j \geq 0,
        k \geq 1
        \}.
    \end{equation}
\end{definition}

The attainable set builds complexity through compositions of flow maps, and can
be used to approximate mappings in $\R^n$.  However, often in applications we
aim to approximate relationships whose range is not $\R^n$.  An example is
scalar regression problems, where we aim to construct approximations of
functions from $\R^n$ to $\R$.  Thus, to achieve approximation we require an
additional composition with a terminal family of functions to fix the range.
This gives rise to the following dynamical hypothesis space on which we study
approximation properties.

\begin{definition}[Dynamical hypothesis space]
    Given a control family $\Fcal$ and a terminal family $\Gcal$ of functions
    from $\R^n$ to $\R$, both assumed Lipschitz,
    the dynamical hypothesis space $\Hcalode$ is defined as
    \begin{equation}
        \Hcalode = \Hcalode(\Fcal, \Gcal) := \{ \bm g \circ \bm \varphi : \bm g \in \Gcal, \bm\varphi \in \Acal_{\Fcal}\}.
    \end{equation}
\end{definition}

The key approximation problem seeks conditions on $\Fcal$ and $\Gcal$
that induce the density of $\Hcalode$ in appropriate function spaces.
This is also known as the universal approximation property
in the machine learning literature.
To establish such results in appropriate generality,
the concept of \textit{well functions} was introduced in \cite{li2019deep}
in order to provide sufficient conditions to achieve
universal approximation.
Here, we recall its definition.

\begin{definition}[Well function]
    \label{defn:well-function}
    We say a Lipschitz function $h: \mathbb{R}^n \to \mathbb{R}$ is a well function
    if there exists a bounded open convex set $\Omega \subset \R^n$ such that
    \begin{equation}
        \{ \bm x \in \R^n : h(\bm x) = 0\} = \overline{\Omega}
    \end{equation}
    Moreover, we say that a vector valued function $\bm h : \R^n \to \R^{n'}$
    is a well function if each of its component $h_i:\R^n \to \R$ is a well function in the sense above.
    Specifically, a Lipschitz function $h : \R \to \R$ is a one-dimensional well function if $\{x: h(x) = 0\}$ is a non-degenerate closed interval.
\end{definition}

We now state the main density result in \cite{li2019deep} concerning the dynamical hypothesis space for $n \geq 2$.
In the following, for any collection $\Fcal$ of functions on $\R^d$, we denote by $\ch(\Fcal)$ its convex hull and
$\chbar(\Fcal)$ its closure in the topology of compact convergence.

\begin{theorem}[Main result in \cite{li2019deep}]
    \label{thm:jems}
    Suppose $n \ge 2$.
    Let $\bm {F}: \mathbb R^n \to \mathbb R^m$, $m\geq 1$, be continuous.
    If the control family $\mathcal F$ and the terminal family $\mathcal G$
    are both Lipschitz and satisfy
    \begin{enumerate}
        \item For any compact set $K \subset \mathbb{R}^n$,
        there exists $\bm g \in \mathcal{G}$ such that $\bm F(K) \subset \bm g(\mathbb R^n)$.
        \item $\Fcal$ is restricted affine invariant.
        That is, $\bm f \in \Fcal$ implies $Df(A\cdot + \bm b) \in \Fcal$,
        where $\bm b \in \R^n$ is any vector,
        and $D,A$ are any $n \times n$ diagonal matrices
        such that the entries of $D$ are $\pm 1$ or $0$,
        and the entries of $A$ are smaller than or equal to 1 in absolute value.
        \item $\chbar(\Fcal)$ contains a well function.
    \end{enumerate}
    Then for any $p \in [1,\infty)$, compact $K \subset \mathbb{R}^n$ and $\varepsilon>0$,
    there exists $\hat{\bm F} \in \mathcal{H}_{ode}$ such that
    $\| \bm F - \hat{\bm F}\|_{L^p(K)} \le \varepsilon.$
\end{theorem}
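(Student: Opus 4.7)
The plan is to reduce the problem in stages, peeling off the terminal family, then reducing to diffeomorphism approximation, then realizing diffeomorphisms as compositions of flows driven by $\Fcal$.

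First, I would use condition (1) to strip off $\Gcal$. Pick $\bm g \in \Gcal$ with $\bm F(K) \subset \bm g(\R^n)$; by continuity of $\bm F$ and compactness, I can produce a continuous selection $\bm \psi : K \to \R^n$ (extended, for instance by Tietze, to all of $\R^n$) such that $\bm g \circ \bm \psi$ agrees with $\bm F$ on $K$ up to arbitrarily small uniform error, hence $L^p$ error. The problem then reduces to approximating a continuous map $\bm \psi : \R^n \to \R^n$ in $L^p(K)$ by elements $\bm\varphi \in \Acal_{\Fcal}$.

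Second, because every $\bm\varphi \in \Acal_{\Fcal}$ is a finite composition of flow maps, and each flow map is an orientation-preserving bi-Lipschitz homeomorphism, we cannot hope to match $\bm \psi$ uniformly. However the target is only $L^p$ on $K$, and the hypothesis $n \geq 2$ is precisely what enables $L^p$-approximation of arbitrary continuous maps by orientation-preserving Lipschitz diffeomorphisms (this fails in 1D, since flow maps are monotone and cannot approximate nonmonotone targets). So the task reduces further to approximating a Lipschitz diffeomorphism $\bm\phi : \R^n \to \R^n$ in the uniform norm on $K$ by some $\bm\varphi \in \Acal_{\Fcal}$.

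Third, and this is where the structural hypotheses (2) and (3) enter, I would express $\bm\phi$ as a composition of many diffeomorphisms close to the identity along a path in the diffeomorphism group; each small step is the time-one flow of a nonautonomous vector field, which in turn can be approximated by a composition of short autonomous flows. The autonomous vector fields needed are built from the well function $h \in \chbar(\Fcal)$: using condition (2), affinely reparametrised versions $\bm x \mapsto D\, h(A\bm x + \bm b)$ lie in (a suitable analogue of) $\Fcal$ in each coordinate direction, and they generate flows whose support is localised in the axis-aligned box corresponding to the well $\overline{\Omega}$. Such localised coordinate-direction flows form the atomic building blocks that, by a careful combinatorial composition, can realise any diffeomorphism of a compact set up to a small error. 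Finally, to pass from $\chbar(\Fcal)$ back into $\Fcal$ itself, I would use a Lie--Trotter / Chernoff product argument: the flow of a convex combination $\sum_i \lambda_i \bm f_i$ is the uniform-on-compacts limit of alternating short flows of the individual $\bm f_i$, and convergence in compact-open topology propagates to convergence of flows by continuous dependence on the vector field.

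The main obstacle will be the third step, specifically the geometric/combinatorial assembly: producing quantitative control that explains how many compositions of localised, axis-aligned well-function flows are needed to approximate a generic Lipschitz diffeomorphism on $K$, and how the resulting errors combine with the Lie--Trotter approximation error (for reaching $\chbar(\Fcal)$ from $\Fcal$) and the diffeomorphism-to-continuous-map $L^p$ approximation error. Getting all three error budgets to close simultaneously, while respecting the restricted affine invariance (entries of $A$ bounded by $1$ in absolute value, entries of $D$ in $\{0, \pm 1\}$), is the delicate part; I expect it to require a multi-scale argument, approximating $\bm\phi$ on an increasingly fine grid and composing elementary flows at each scale.
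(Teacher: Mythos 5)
The paper does not reprove Theorem~\ref{thm:jems}; it is quoted from \cite{li2019deep}. What the paper \emph{does} show (in the proof of Theorem~\ref{thm:framework}, which generalises the argument of \cite{li2019deep} to the symmetric setting) is enough to reconstruct the proof strategy, and your proposal departs from it in a way that introduces a genuine gap.

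The gap is in your third step, where you reduce to approximating a Lipschitz diffeomorphism \emph{uniformly} on $K$ by elements of $\mathcal{A}_{\mathcal{F}}$, and then claim that ``localised coordinate-direction flows form the atomic building blocks that, by a careful combinatorial composition, can realise any diffeomorphism of a compact set up to a small error.'' This is a strictly stronger property than what is needed, and there is no indication it holds: the flows available to you come only from a well function and its restricted affine reparametrisations, and compositions of such flows need not be dense in the identity component of $\mathrm{Diff}(K)$ in the compact-open topology. Nothing in conditions~(2)--(3) puts enough tangent directions at your disposal to drive a path in the full diffeomorphism group. The argument in \cite{li2019deep} (mirrored in the proof of Theorem~\ref{thm:framework} here) avoids this by never needing uniform diffeomorphism approximation. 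Instead it (i) approximates the intermediate map $\bm\varphi$ by a piecewise constant map on a $\delta$-grid, (ii) composes with a \emph{coordinate zooming} map $u^{\otimes}$ that collapses each small sub-cube to nearly a single grid vertex, and (iii) invokes a \emph{finite point matching} lemma: one only needs a flow composition $\bm f$ that sends $O(\delta^{-n})$ marked grid points approximately to their $O(\delta^{-n})$ targets. Point matching of a finite set is where the well function's special structure is actually used --- its bounded convex zero set lets you pin some points (inside the well) while translating others (outside), and repeated composition resolves the combinatorics. Swapping ``match finitely many points'' for ``approximate an arbitrary diffeomorphism uniformly'' is where your budget fails to close.

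Two smaller issues: first, in your step~1 the assertion that ``by continuity of $\bm F$ and compactness, I can produce a continuous selection $\bm\psi$ \ldots\ such that $\bm g\circ\bm\psi$ agrees with $\bm F$ on $K$ up to arbitrarily small uniform error'' is not correct in general --- $\bm g$ need not admit a continuous right inverse over $\bm F(K)$ when it is not injective, and the correct statement is an $L^p$ one (the construction in \cite{li2019deep} builds $\bm\psi$ piecewise, accepting a measure-zero discontinuity set, exactly because a global continuous section may not exist). Second, the Lie--Trotter/Chernoff argument for passing from $\overline{\mathrm{CH}}(\mathcal{F})$ back to $\mathcal{A}_{\mathcal{F}}$ is sound and is indeed used in the paper (cf.\ the appeal to closure of $\mathcal{A}_{\mathcal{F}}$ under convex hull limits), so that part of your plan is fine. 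If you replace the uniform-diffeomorphism-approximation step by the discretise-and-match strategy, your outline lines up with the actual proof.
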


The straightforward but important deep learning application of Theorem~\ref{thm:jems}
is when $\Fcal = \Fcal_{\sigma} := \{V\sigma(W\cdot+b)\}$,
where $\sigma$ is a nonlinear activation function, such as $\relu$ or the
Sigmoid function.  This then corresponds to a universal approximation theorem of
deep residual neural networks\footnote{While Theorem~\ref{thm:jems} is stated in the
continuous setting, basic numerically analysis shows that the result
carries over to discretized architectures, see Sec.~\ref{sec:thm:discrete}}
through composing fixed-width residual layers,
which are building blocks of residual neural networks~\cite{He2016}.  However,
we note that Theorem~\ref{thm:jems} makes
no explicit reference to neural networks and can be viewed
as a general result on approximation of functions by the flow maps of dynamical systems.

The main purpose of this paper is to derive similar results under symmetry constraints,
where we aim to establish an analogue density result for group invariant dynamical hypothesis spaces.
One may wonder if the following simple argument suffices:
if we additionally require $\Fcal$ to be equivariant and $\Gcal$ to be invariant, then $\Hcalode$ is indeed invariant and thus if all conditions are Theorem~\ref{thm:jems} are satisfied, we deduce the universal approximation property.
It turns out that this argument is vacuous, since the restricted affine invariance condition cannot be satisfied for general control families that are equivariant with respect to a non-trivial subgroup of the permutation group.
For example, full permutation equivariance will force $D,A$ to be multiples of the identity matrix.
Hence, to make headway we will have to suitably relax the affine invariance condition.
This will be the primary challenge in establishing the main results of this paper,
and further highlights the competition between restrictions induced by symmetry
and flexibility required for universal approximation.

\subsection{Group Theory Notation}
\label{sec:prelim:group}

In this section, we fix some terminologies involving basic group theory.
The readers are expected to be familiar with groups and subgroups.
By $\sG \le \sH$ we mean $\sG$ is a subgroup of $\sH$.

\paragraph{Permutation Groups.} Given a finite set $S$, a permutation group on $S$ consists of some permutations on $S$ which form a group under the composition.
Without loss of generality, we can identify $S$ with $\{n\} := \{1,\cdots,n\}$,
and all groups considered here will be permutation groups.
For fixed $S$, the group of all permutations is called the symmetric group, denoted as $\sS$.
The identity element is denoted as $\fe$.
We denote by $(i~j)$ the transposition
element in $\sS$ that exchanges $i,j$, keeping the others fixed.

\paragraph{Group Actions on Indices and Vectors.}
Given a permutation $\fs$ on $\{n\}$, it is natural to describe how $\fs$ acts on $\{n\}$.
We use $i \mapsto \fs(i)$ to denote this mapping. For a vector $\bm x = [x_1,\cdots,x_n]
\in \R^n$, we define the action on $\R^n$ by
$
    \fs(\bm x) = [x_{\fs(1)},\cdots, x_{\fs(n)}]
$
and for a point set $A$, define $\fs(A) := \{ \fs(\bm x): \bm x \in A \}$. All the transformation in $\R^n$ considered in this paper are of this type.

\paragraph{Transitivity.}
We call a permutation group $\sG$ on $\{n\}$ transitive if for each $i,j \in \{n\}$,
there exists a permutation $\fg \in \sG$ such that $\fg(i) = j$.

\paragraph{Stabilizer.}
Given $\sG \le \sS$, define the \emph{stabilizer} of element $i$ as $\stab_i(\sG) = \{ \fg \in \sG : \fg(i) = i\}$.
A basic fact is that if $\fa(i) = j$, then $\fa\stab_i(\sG)\fa^{-1} = \stab_j(\sG)$.
Thus, for a transitive group $\sG$, all stabilizers are conjugate to each other.

\paragraph{Cross Section.}
Given $\fg \in \sS$, we define the \emph{cross section}
\begin{equation}
    Q_{\fg} := \{\bm{x} \in \R^n: x_{\fg^{-1}(1)} > x_{\fg^{-1}(2)} \cdots > x_{\fg^{-1}(n)}\}.
\end{equation}
If we write $Q := Q_{\fe}$,
the cross section of the identity element keeping all indices fixed,
then we have $Q_{\fg} = \fg(Q)$.

\paragraph{Transversal.}
Given $\sG \le \sS$, and denote by $k:= |\sS|/|\sG|$. We say a collection
$A:= \{\fa_1,\fa_2,\cdots,\fa_k\} \subset \sS$
is a (right) \emph{transversal} with respect to $\sG$ if
\begin{itemize}
    \item $\fa_i \fa_j^{-1} \not \in \sG$ for $i\neq j$,
    \item For any $\fb \in \sS$, there exists $\fa_i \in A$ such that $\fa_i \fb^{-1} \in \sG$.
\end{itemize}
We define the \emph{cross section} with respect to $A$ as
\begin{equation}
    \label{eq:cross-section-A}
    Q_{A} := \bigcup_{i = 1}^{k} Q_{\fa_i}.
\end{equation}

\paragraph{Invariant functions and Equivariant Mappings.}

We now give precise definitions of invariant and equivariant mappings, together with related concepts.
Let $\sG$ be a permutation subgroup.
We say $f: \R^n \rightarrow \R$ is a $\sG$ \emph{invariant function} if for all $\fg \in \sG$,
\begin{equation}
    f(\fg(\bm x)) = f(\bm x).
\end{equation}
We say $\bm \varphi : \R^n \rightarrow \R^n$ is a
$\sG$ \emph{equivariant mapping} if for all $\fg \in \sG$,
\begin{equation}
    \bm \varphi(\fg(\bm x)) = \fg(\bm \varphi(\bm x)).
\end{equation}
We introduce some examples of invariant functions and equivariant mappings to close this part.
These motivate us to build approximation frameworks under such symmetry considerations,
and serve as important examples for the application of our theory to deep learning subsequently.
\begin{example}[Translation]
    We first recall the definition of translation in one and multiple dimensions.
    For the one dimensional case, we define $\sT = \{\ft^1,\ft^2,\cdots,\ft^n\}$, where
    $\ft(i) = i+1$ is the shift operator in one dimension.
    Recall that the periodic condition is assumed.
    For the multidimensional case,
    we define the translation group as $\sT_{d_1}\times \cdots \times \sT_{d_N}$,
    for $n = d_1\cdots d_N$, and
    \begin{equation}
        \ft_k([i_1,\cdots,i_N]) = [i_1,\cdots,\ft(i_k),\cdots,i_N].
    \end{equation}
    Here, the ambient Euclidean space is identified as
    $\R^{n} := \R^{d_1} \times \cdots \times \R^{d_N}$,
    and the corresponding multi-index $[i_1, \cdots, i_N]$ is a sequence of length $N$ and $i_s \in \{d_s\}$.

    One of the most commonly used $\sT$ equivariant mapping is constructed
    by the convolution operation.
    Given $\bm w = [w_1,\cdots, w_p]$, the convolution operation\footnote{
        As is customary in the deep learning literature,
        this definition of convolution is in fact the \emph{cross correlation} and differs from the
        classical convolution in the order of indices for the filters.
        Note that such conventions do not affect any approximation results,
        so we choose to stick to the usual deep learning convention.
    }
    in one dimension is
    \begin{equation}\label{eq:conv-1d}
        \bm w \ast \bm x
        :=
        \left[
            \sum_{k=1}^{p} x_{k} w_{k},
            \sum_{k=1}^{p} x_{k+1} w_{k},
            \cdots,
            \sum_{k=1}^{p} x_{n-1+k} w_{k}
        \right].
    \end{equation}
    We also write down the definition the general $N$-dimensional case using the
    multi-index $\bm{i} = (i_1,\cdots,i_N)$, $1 \leq i_k \leq d_k$.
    The convolution operation in $N$ dimensions is
    \begin{equation}\label{eq:conv-nd}
        [\bm w \ast \bm x]_{\bm i} =
        \sum_{\bm k}
        x_{\bm k + \bm i}
        w_{\bm k},
    \end{equation}
    where the summation is over $k_j = 1,\dots,d_j$
    and $\bm k + \bm i := [k_1+i_1 - 1,\cdots,k_N+i_N - 1]$.
    The $N=2$ case is most relevant to image applications.
\end{example}

\begin{example}[Full permutation symmetry]
    In this example, we consider the symmetric group $\sS$.
    Some common $\sS$ invariant functions include
    $x_1+x_2+\cdots+x_n$ and $x_1x_2\cdots x_n$.
    $\sS$ equivariant mapping may be built from these, such as
    \begin{equation}
        [x_1, x_2, \cdots, x_n]
        \mapsto
        [x_1+x_2+\cdots+x_n,\cdots,x_1+x_2+\cdots + x_n],
    \end{equation}
    or
    \begin{equation}
        [x_1, x_2, \cdots, x_n]
        \mapsto
        [x_1+x_2x_3\cdots x_n, x_2+x_1x_3\cdots x_n, \cdots x_n + x_1x_2\cdots x_{n-1}].
    \end{equation}
    Functions respecting full permutation symmetry are often used for the study of
    physical systems whose attributes are set-like features with no order structures,
    e.g. a list of constituent atoms in a crystal lattice.
    These are also called set functions, which features in a variety of recent
    studies~\cite{zweig2021functional,NIPS2017_f22e4747,qi2017pointnet}.
\end{example}

The results of this paper apply not only to the aforementioned symmetries, but
also other types of partial permutation subgroups that naturally arise in
scientific applications.
We will discuss this in greater detail in Section~\ref{sec:application}.

\subsection{Universal Approximation Property under Invariance}
\label{sec:prelim:uap}

The approximation setting studied in this paper concerns the universal approximation
property (UAP) under symmetry induced by a permutation group $\sG \le \sS$.
The following definition makes this precise.

\begin{definition}[$\sG$ UAP]
    \label{def:guap}
    Let $\Hcal$ be a family of $\sG$ invariant functions from
    $\R^n\to \R^m$. $\Hcal$ is said to possess the $\sG$ \textbf{universal
    approximation property} ($\sG$ UAP) in $L^p$ sense if for any $\sG$
    invariant continuous function $\bm F : \R^n \to \R^m$, compact set $K\subset \R^n$
    and $\varepsilon>0$, there exists $\hat{\bm F} \in \Hcal$ such that
    \begin{equation}
        \|\bm F- \hat{\bm F}\|_{L^p(K)} \le \varepsilon.
    \end{equation}
    Similarly, let $\Acal$ be a family of $\sG$ equivariant mappings from $\R^n \to \R^n$.
    $\Acal$ is said to possess $\sG$ UAP if for any $\sG$ equivariant continuous mapping $\bm \varphi : \R^n \to \R^n$,
    compact set $K\subset \R^n$ and $\varepsilon>0$, there exists $\bm{\hat \varphi} \in \Acal$ such that
    \begin{equation}
        \|\bm \varphi - \bm{\hat \varphi}\|_{L^p(K)} \le \varepsilon.
    \end{equation}
\end{definition}

Before discussing the main result of this paper,
we highlight that symmetry constraints naturally limit approximation capabilities.
More concretely, if a function $\bm F$ (resp. mapping $\bm \varphi$) can be approximated by $\sG$
invariant functions (resp. equivariant mapping), then $\bm F$ (resp. $\bm \varphi$) itself is $\sG$ invariant (resp. equivariant), see the following proposition.

\begin{proposition}[Closure property of invariant functions]
    \label{prop:negative-invar}
    \label{prop:negative-equivar}
    Suppose $\bm F \in C(\R^n, \R^m)$ and for any compact $K\subset \R^n$, tolerance
    $\varepsilon > 0$, there exists a $\sG$ invariant function $\hat{\bm F}\in
    C(\mathbb R^n)$ such that $\|\bm F - \hat{\bm F}\|_{L^p(K)} < \varepsilon $.  Then
    $\bm F$ is $\sG$ invariant.

    Moreover, suppose $\bm{\varphi} \in C(\R^n; \R^n)$ and for any compact
    $K \subset \R^n$, tolerance $\varepsilon > 0$, there exists a $\sG$ equivariant
    mapping $\bm{\hat{\varphi}}$ such that $\|\bm{\varphi} -
    \bm{\hat{\varphi}}\|_{L^p(K)} < \varepsilon$.  Then $\bm{\varphi}$ is $\sG$
    equivariant.
\end{proposition}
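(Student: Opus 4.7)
The plan is to show both claims via a standard two-step argument: first extract pointwise (almost everywhere) equality of $\bm F\circ\fg$ and $\bm F$ by the triangle inequality, then upgrade to everywhere via continuity. Fix an arbitrary $\fg\in\sG$ and any compact $K\subset\R^n$. Because $\fg$ acts by coordinate permutation, the set $K\cup\fg(K)$ is also compact, so applying the hypothesis on this enlarged set produces, for each $\varepsilon>0$, a $\sG$-invariant $\hat{\bm F}$ with $\|\bm F-\hat{\bm F}\|_{L^p(K\cup\fg(K))}<\varepsilon$.

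The key technical observation I would use repeatedly is that a permutation $\fg$ is a linear isometry of $\R^n$ with $|\det\fg|=1$, so the change-of-variables $\bm y=\fg(\bm x)$ gives
\begin{equation}
\|\bm F\circ\fg-\hat{\bm F}\circ\fg\|_{L^p(K)}=\|\bm F-\hat{\bm F}\|_{L^p(\fg(K))}<\varepsilon.
\end{equation}
Combining this with $\hat{\bm F}\circ\fg=\hat{\bm F}$ (invariance of the approximant) and the triangle inequality,
\begin{equation}
\|\bm F\circ\fg-\bm F\|_{L^p(K)}\le\|\bm F\circ\fg-\hat{\bm F}\|_{L^p(K)}+\|\hat{\bm F}-\bm F\|_{L^p(K)}<2\varepsilon.
\end{equation}
Since $\varepsilon$ was arbitrary, $\bm F\circ\fg=\bm F$ almost everywhere on $K$; continuity of both sides then forces equality everywhere on $K$, and since $K$ was arbitrary, on all of $\R^n$. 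As $\fg\in\sG$ was arbitrary, $\bm F$ is $\sG$-invariant.

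For the equivariant statement, I would mirror the argument, inserting $\fg\circ\bm{\hat\varphi}$ as the intermediate object:
\begin{equation}
\|\bm\varphi\circ\fg-\fg\circ\bm\varphi\|_{L^p(K)}\le\|\bm\varphi\circ\fg-\bm{\hat\varphi}\circ\fg\|_{L^p(K)}+\|\fg\circ\bm{\hat\varphi}-\fg\circ\bm\varphi\|_{L^p(K)},
\end{equation}
where the first term is controlled by the same Jacobian-one change of variables, while the second uses that $\fg$ acts as a permutation on the output coordinates and hence preserves whatever vector norm is used inside the $L^p$ integrand, giving $\|\fg\circ\bm{\hat\varphi}-\fg\circ\bm\varphi\|_{L^p(K)}=\|\bm{\hat\varphi}-\bm\varphi\|_{L^p(K)}$. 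The rest is identical: $2\varepsilon$ bound, a.e.\ equality, then everywhere equality by continuity.

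No step here poses a deep obstacle; the only points requiring care are (i) enlarging $K$ to $K\cup\fg(K)$ so that the single approximant $\hat{\bm F}$ controls both integrals simultaneously, (ii) invoking $|\det\fg|=1$ to move the approximation error back to the original domain, and (iii) confirming that the chosen norm on $\R^n$ used in the definition of $L^p$ for vector-valued functions is permutation-invariant (true for $\ell^q$ norms, which is the standard convention). After that, the passage from a.e.\ equality to pointwise equality is immediate from continuity of $\bm F$ (resp.\ $\bm\varphi$) and of the permutation action.
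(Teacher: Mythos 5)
Your argument is correct and follows essentially the same route as the paper: triangle inequality through the invariant (resp. equivariant) approximant, measure-preservation of the permutation action to control the composed error term, then pass to pointwise equality via continuity. Your explicit enlargement of the domain to $K\cup\fg(K)$ before invoking the hypothesis is a small but welcome refinement: the paper writes $\|\hat{\bm F}\circ\fg-\bm F\circ\fg\|_{L^p(K)}=\|\hat{\bm F}-\bm F\|_{L^p(K)}$, which strictly speaking requires $\fg(K)=K$, a detail you handle cleanly rather than leaving implicit.
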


\begin{proof}
    Given $\fg \in \sG$,
    compact set $K$ and $\varepsilon>0$,
    we choose $\hat{\bm F}$ as stated.
    Since $\hat{\bm F}$ is $\sG$ invariant, we have
    \begin{equation}
        \begin{split}
            \|\bm F(\bm{x}) - \bm F(\fg(\bm{x}))\|_{L^p(K)} &\le \|\bm F(\bm{x}) - \hat{\bm F}(\bm x)\|_{L^p(K)} + \|\hat{\bm F}(\bm{x}) - \hat{\bm F}(\fg(\bm{x}))\|_{L^p(K)}  \\ &~~~~~~+ \|\hat{\bm F}(\fg(\bm{x})) - \bm F(\fg(\bm x))\|_{L^p(K)}\\
            &\le 2\varepsilon.
        \end{split}
    \end{equation}
    This holds because $\fg$ is measure-preserving,
    so $\|\hat{\bm F}\circ \fg - \bm F\circ \fg\|_{L^p(K)} = \|\hat{\bm F}- \bm F\|_{L^p(K)}$.
    Note that both $\bm F$ and $\bm F \circ \fg$ are continuous, and $\varepsilon>0$ is arbitrary, yielding that $\bm F = \bm F\circ\fg$. Since this holds for all $\fg \in \sG$, we conclude that $\bm F$ must be $\sG$ invariant.

    The proof of the second part is similar.
    Given $\fg \in \sG$, and compact set $K$ and $\varepsilon>0$,
    we choose $\hat{\varphi}$ as stated.
    Then we have
    \begin{equation}
        \begin{split}
            \|\fg \circ \bm \varphi - \bm \varphi \circ \fg\|_{L^p(K)} \le ~& \|\fg \circ \bm \varphi - \fg \circ  \hat{\bm \varphi}\|_{L^p(K)} + \|\fg \circ  \hat{\bm \varphi} -  \hat{\bm \varphi} \circ \fg\|_{L^p(K)} + \| \hat{\bm \varphi} \circ \fg - \bm \varphi \circ \fg\|_{L^p(K)} \\ \le ~& 2\varepsilon,
        \end{split}
    \end{equation}
    since $\hat{\bm \varphi}$ is $\sG$ equivariant.
    Note that both $\fg \circ \bm \varphi$ and $\bm \varphi \circ \fg$ are continuous,
    and $\varepsilon$ can be arbitrarily chosen, yielding that
    $\bm \varphi\circ \fg = \fg \circ \bm \varphi$. Hence $\bm \varphi$ must be $\sG$ equivariant.
\end{proof}

An immediate consequence is that if $\sG \le \sH$, and our hypothesis space consists of $\sH$ invariant functions, then functions that are $\sG$ invariant but not $\sH$ invariant cannot be approximated to arbitrary precision.
Although we only consider finite permutation groups here, similar arguments yield that
the same limitation arises for continuous groups, raising a significant
problem in designing equivariant/invariant neural networks, e.g. under SO or SE symmetry \cite{weiler20183d}.
Generally, this suggests that the construction of invariant and equivariant architectures will be much more challenging
if the structure of $\sG$ is complex, and using composition gives a convenient way to build complexity while preserving
symmetry.
We will illustrate this point in Section~\ref{sec:application} through applications.

\section{Main Results}
\label{sec:thm}

In this section, we present our main result on the universal approximation of $\sG$ invariant functions
via dynamics driven by equivariant control families.
Concretely, let us fix a transitive subgroup $\sG \leq \sS$ and
consider a target function $\bm F : \R^n \rightarrow \R^m$ that is $\sG$ invariant.
For brevity, we will hereafter use invariant (resp. equivariant) to mean $\sG$ invariant (resp. $\sG$ equivariant).
We begin with definitions that are required to state our main result.

\subsection{Universal Approximation of Invariant Functions by Flows}

\subsubsection{The coor Operator.}
First, we introduce a way to associate with each equivariant control family $\Fcal$, which
consists of mappings $\R^n \rightarrow \R^n$ with a scalar-valued function on $\R^n$ that is
invariant with respect to a stabilizer.
This will allow us to introduce suitable extensions of the concept of well functions
that induce universal approximation under symmetry constraints.
The association rest on the transitive property of $\sG$ (See Section~\ref{sec:prelim:group}), as shown by the following result.

\begin{proposition}
    \label{prop:representation}
    Let $\sG$ be transitive and denote by $\sH := \stab_1(\sG)$.
    Then, for each $\sG$ equivariant mapping $\bm f : \R^n \rightarrow \R^n$, $\bm f = [f_1,\cdots,f_n]$,
    $f_1$ is $\sH$ invariant.
    Conversely, let $f_1 : \R^n \rightarrow \R$ be $\sH$ invariant and
    $\ft_k \in \sG$, $k = 1,2,\cdots,n$ satisfy $\ft_k(1) = k$.
    Then, we can construct a $\sG$ equivariant mapping $\bm{f}$ from $f_1$ as follows
    \begin{equation}
        \label{eq:stab-equi}
        \bm{f}(\bm x) = [f_1(\ft_1(\bm x)), f_1(\ft_2(\bm x)), \cdots, f_1(\ft_n(\bm x))].
    \end{equation}
\end{proposition}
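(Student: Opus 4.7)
The plan is to verify both directions coordinate-wise, exploiting the relation between the indexing action and the group law.

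For the first (forward) direction, I would take an arbitrary $\fh \in \sH = \stab_1(\sG)$ and write out the first component of the equivariance relation $\bm f(\fh(\bm x)) = \fh(\bm f(\bm x))$. The left side reduces to $f_1(\fh(\bm x))$ by definition of $\bm f = [f_1, \ldots, f_n]$, while the right side is $[\bm f(\bm x)]_{\fh(1)} = [\bm f(\bm x)]_1 = f_1(\bm x)$, using $\fh(1) = 1$. Hence $f_1$ is $\sH$ invariant, finishing this direction.

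For the second (constructive) direction, I would fix $\fg \in \sG$ and compare the $k$-th components of $\bm f(\fg(\bm x))$ and $\fg(\bm f(\bm x))$. By the definition of $\bm f$ in \eqref{eq:stab-equi}, the $k$-th component of $\bm f(\fg(\bm x))$ is $f_1(\ft_k(\fg(\bm x)))$, while the $k$-th component of $\fg(\bm f(\bm x))$ is $[\bm f(\bm x)]_{\fg(k)} = f_1(\ft_{\fg(k)}(\bm x))$. Matching them amounts to showing
\begin{equation}
    f_1\bigl(\ft_k(\fg(\bm x))\bigr) = f_1\bigl(\ft_{\fg(k)}(\bm x)\bigr) \qquad \text{for all } \bm x \in \R^n.
\end{equation}
The cleanest way is to invoke $\sH$ invariance of $f_1$: it suffices to exhibit a stabilizer element $\fh \in \sH$ such that applying $\fh$ to $\ft_{\fg(k)}(\bm x)$ yields $\ft_k(\fg(\bm x))$. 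A short permutation computation shows that the natural candidate $\fh := \ft_{\fg(k)}^{-1}\,\fg\,\ft_k$ lies in $\sG$ and fixes $1$, because $\ft_k(1) = k$, $\fg(k) = \fg(k)$, and $\ft_{\fg(k)}^{-1}(\fg(k)) = 1$, so $\fh \in \sH$ as required.

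The main bookkeeping obstacle, which I would flag explicitly at the start of the second part, is the convention for the action: since $[\fs(\bm x)]_i = x_{\fs(i)}$, function composition in $\R^n$ corresponds to the opposite product of permutations, i.e.\ $\fs(\ft(\bm x)) = (\ft\fs)(\bm x)$ when $\ft\fs$ is the composition in $\sS$. Once this is made explicit, verifying $\ft_k \circ \fg = \ft_{\fg(k)} \circ \fh$ (as maps $\R^n \to \R^n$) from $\fh = \ft_{\fg(k)}^{-1}\fg\ft_k$ is a one-line group-law check, and $\sH$ invariance of $f_1$ closes the proof. No other hypotheses beyond transitivity (used to pick $\ft_k$ with $\ft_k(1) = k$) are needed.
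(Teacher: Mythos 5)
Your proof is correct and broadly parallel to the paper's, but it takes a slightly streamlined route in the converse direction. The paper first writes an arbitrary $\fg \in \sG$ as $\fg = \ft_k \circ \fh$ with $k = \fg(1)$ and $\fh \in \sH$, then verifies $\bm f \circ \ft_k = \ft_k \circ \bm f$ coordinate-by-coordinate (handling the $\fh$ factor separately). You skip the decomposition and verify equivariance against an arbitrary $\fg$ directly, reducing the $k$-th coordinate identity to exhibiting the stabilizer element $\fh = \ft_{\fg(k)}^{-1}\fg\ft_k$. Both hinge on exactly the same computation --- that $\ft_{\fg(k)}^{-1}\fg\ft_k$ fixes $1$ --- so the underlying idea is identical; yours just avoids one layer of case-splitting, which arguably makes it cleaner.

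One slip to fix in your last paragraph: the map-composition identity you want is $\fh \circ \ft_{\fg(k)} = \ft_k \circ \fg$ (read: applying $\fh$ after $\ft_{\fg(k)}$ gives $\ft_k$ after $\fg$), not ``$\ft_k \circ \fg = \ft_{\fg(k)} \circ \fh$.'' The two are not the same since the group is non-abelian. Under the right-action convention $\fs(\ft(\bm x)) = (\ft\fs)(\bm x)$ that you correctly flagged, $\fh \circ \ft_{\fg(k)} = \ft_k \circ \fg$ as maps is equivalent to $\ft_{\fg(k)}\fh = \fg\ft_k$ as permutations, which does yield your formula $\fh = \ft_{\fg(k)}^{-1}\fg\ft_k$. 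This is purely a writeup inversion --- your formula for $\fh$, the check that $\fh(1) = 1$, and the invocation of $\sH$-invariance are all correct --- but as written that line would not compile as a ``one-line group-law check,'' so it is worth correcting.
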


\begin{remark}
    The above proposition is presented with respect to the first coordinate, e.g. $f_1$ and $\stab_1$.
    This choice is arbitrary and identical results holds true for any $j \in \{n\}$,
    since it follows from Section~\ref{sec:prelim:group}
    that all stabilizer are conjugate under the transitivity assumption.
\end{remark}

\begin{proof}
    Choose $\fh \in \sH$, then since $\fh \circ \bm f = \bm f \circ \fh$,
    we have $f_1(\fh(\bm x)) = f_1(\bm x)$. Thus $f_1$ is $\sH$ invariant.
    Conversely, given an $\sH$ invariant function $f_1$,
    we can verify that the construction \eqref{eq:stab-equi} is $\sG$ equivariant.
    Given $\fg \in \sG$, suppose $\fg(1) = k$, then we have $\fg = \ft_k \circ \fh$ for some $\fh \in \sH$.

    Now it suffices to check $\bm f \circ \ft_k = \ft_k \circ \bm f$.
    Consider the $j$-th coordinate,
    the left hand side becomes $f_1(\ft_j^{-1} \ft_k(\bm x))$,
    while the right hand side becomes $f_1(\ft_{l}^{-1}\bm x)$,
    where $l = \ft_k^{-1}j$.
    We only need to check that $\ft_j^{-1} \ft_k \ft_{l}$ is $\sH$ invariant.
    Direct calculation yields
    \begin{equation}
        \ft_j^{-1} \ft_k \ft_{l}(1) = \ft_j^{-1} \ft_k l = \ft_j^{-1} \ft_k \ft_k^{-1} j = \ft_j^{-1}j = 1.
    \end{equation}
    Therefore, $\ft_j^{-1} \ft_k \ft_{l}$ is  $\sH$ invariant.
    Combining with $\bm f \circ \fh= \fh \circ \bm f$ gives
    $\bm f \circ \fg = \fg \circ \bm f$.
\end{proof}

A consequence of Proposition~\ref{prop:representation} is that any $\sG$ equivariant mapping can be represented
by a scalar-valued function that is $\sH$ invariant with respect to the stabilizer $\sH:=\stab_1(\sG)$.
Based on this observation, we introduce the $\coor$ operator as follows.
\begin{definition}[$\coor$ operator]\label{def:coor}
    Let $\sG$ be a transitive subgroup and $\Fcal$ be a collection of $\sG$ equivariant mappings on $\R^n$.
    We define the $\coor$ operator
    \begin{equation}
        \coor(\Fcal) := \bigg\{
            f_1 : [f_1 \circ \ft_1,\cdots,f_1\circ \ft_n] \in \Fcal,
            \ft_i \in \sG,
            \ft_i(1) = i
        \bigg\}.
    \end{equation}
\end{definition}
The following result shows that $\coor(\Fcal)$ characterizes $\Fcal$,
and hence we may consider conditions on $\coor(\Fcal)$ directly in our approximation results, and the conditions on $\Fcal$ will be subsequently determined.
Its proof is by direct construction from Proposition~\ref{prop:representation}.
\begin{proposition}[Reconstruction from $\coor$ operator]
    Assume that $\sG$ is transitive, and $\sH = \stab_1(\sG)$.
    Given a function family $\Gcal$ containing $\sH$ invariant functions from $\R^n$ to $\R$,
    there exists a unique $\Fcal$, containing some $\sG$ equivariant mappings
    from $\R^n$ to $\R^n$ such that $\coor(\Fcal) = \Gcal$.
\end{proposition}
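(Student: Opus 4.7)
The plan is to run Proposition~\ref{prop:representation} in reverse: every $\sG$-equivariant mapping is determined by its first coordinate, which must be $\sH$-invariant, and conversely each $\sH$-invariant scalar function extends to a $\sG$-equivariant mapping via formula~\eqref{eq:stab-equi}. Thus $\coor$ behaves essentially like projection onto the first coordinate, and both existence and uniqueness amount to exhibiting and identifying its inverse on $\Gcal$.

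For existence, I would use transitivity of $\sG$ to fix once and for all a system of representatives $\ft_1 = \fe, \ft_2, \ldots, \ft_n \in \sG$ with $\ft_k(1) = k$, and define
\begin{equation}
    \Fcal := \{ [\, f,\; f\circ \ft_2,\; \ldots,\; f\circ \ft_n\,] : f \in \Gcal \}.
\end{equation}
Proposition~\ref{prop:representation} guarantees that each such mapping is $\sG$-equivariant. The inclusion $\Gcal \subseteq \coor(\Fcal)$ is immediate by taking $\ft_1 = \fe$ in the definition of $\coor$. Conversely, if $h \in \coor(\Fcal)$, then there exist $\fs_k \in \sG$ with $\fs_k(1) = k$ and some $f \in \Gcal$ such that $[h\circ \fs_1,\ldots,h\circ \fs_n] = [f, f\circ \ft_2, \ldots, f\circ \ft_n]$; matching first coordinates gives $h \circ \fs_1 = f$ with $\fs_1 \in \sH$, and the $\sH$-invariance of $f$ then forces $h = f \in \Gcal$.

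For uniqueness, suppose $\Fcal_1, \Fcal_2$ are two families of $\sG$-equivariant mappings with $\coor(\Fcal_1) = \coor(\Fcal_2) = \Gcal$. Pick $\bm f = [f_1,\ldots,f_n] \in \Fcal_1$. The equivariance identity $f_k = f_1 \circ \fg$ whenever $\fg(1) = k$ (an immediate consequence of $\bm f \circ \fg = \fg \circ \bm f$, and well-defined by $\sH$-invariance of $f_1$) shows that $\bm f$ is entirely determined by its first coordinate. Since $f_1 \in \coor(\Fcal_1) = \coor(\Fcal_2)$, there exists $\bm g \in \Fcal_2$ whose first coordinate is also $f_1$ (any $\fs_1 \in \sH$ appearing in the $\coor$-witness for $\Fcal_2$ is absorbed by $\sH$-invariance of $f_1$). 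Applying the same identity to $\bm g$ forces $\bm g = \bm f$, so $\bm f \in \Fcal_2$; by symmetry, $\Fcal_1 = \Fcal_2$.

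The one point I would verify carefully — less an obstacle than a bookkeeping item — is the independence of everything from the choice of coset representatives $\ft_k$ modulo $\sH$. Two valid choices of $\ft_k$ differ by right multiplication by an element of $\sH$, so $\sH$-invariance of the first coordinate makes $f\circ\ft_k$ depend only on the coset $\ft_k\sH$. This fact is invoked repeatedly above to translate between the variable $\fs_k$ appearing in the definition of $\coor$ and the fixed $\ft_k$ used to construct $\Fcal$, and is precisely what makes the existence construction well-defined and the uniqueness identification rigorous.
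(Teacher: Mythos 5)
Your proposal is correct and takes precisely the route the paper sketches: the paper's only comment on this proposition is that its proof "is by direct construction from Proposition~\ref{prop:representation}," and your argument carries out exactly that construction (build $\Fcal$ from $\Gcal$ via formula~\eqref{eq:stab-equi} with a fixed choice of $\ft_k$, then verify $\coor(\Fcal)=\Gcal$ and uniqueness using the fact that a $\sG$-equivariant mapping is determined by its first, $\sH$-invariant coordinate). The closing remark about well-definedness under change of the coset representatives $\ft_k$ is a genuine point the paper leaves implicit, and you handle it correctly.
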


\subsubsection{Symmetric Invariant Well Functions.}
Let us now introduce a class of {\it symmetric invariant well functions}, which plays a central role in our
analyses of function approximation using composition or dynamics.
We have recalled the definition of {\it well function} introduced in \cite{li2019deep} to prove approximation results
without symmetry constraints in Section~\ref{sec:prelim:jems}, Definition~\ref{defn:well-function}.
Here, we will modify the notion of well functions to incorporate symmetry considerations.

\begin{definition}[Symmetric invariant well functions] \label{def:gen_well_fn}
Let $\tau : \R^n \rightarrow \R$ be a $n$-dimensional Lipschitz function.
We call it a symmetric invariant well function if the following conditions hold:
\begin{enumerate}
\item $\tau$ is $\sS$ invariant.
\item There exists a finite interval $\I  \subset \R$ such that if
$\bm x \in \I^n$
then $\tau(\bm x) = 0$.
\item Given $i \in \{n\}$ and $b > 0$,
there exists $a > 0$ such that $\tau([x_1,\cdots,x_n])\neq 0 $
for all $|x_i| > a$ and $|x_j| < b$ for $j \neq i$.
\end{enumerate}
\end{definition}

\begin{remark}
    \label{rmk:symmetrc-invariant-well-function}
    It is easy to verify that $[x_1,\cdots,x_n] \mapsto h(x_1+x_2+\cdots+x_n)$
    and $[x_1,\cdots,x_n] \mapsto h(x_1)+h(x_2)+\cdots+h(x_n)$ are both symmetric invariant well functions,
    provided that $h$ is a well function in the sense of Definition~\ref{defn:well-function}.

    We note that this definition is close to,
    but more general than just requiring a well function
    (in the sense of Definition~\ref{defn:well-function}) to be $\sS$ invariant.
    An $\sS$ invariant well function is also a symmetric invariant well function,
    but the converse does not hold in general.
    For example, for any one dimensional well function $h$, we consider $h(x_1+\cdots+x_n)$ again.
    This is a symmetric invariant well function in the sense of Definition~\ref{def:gen_well_fn},
    but it is not a well function since its zero set is unbounded.
    The current broader definition allows for easier application
    of our results to practical architectures.
\end{remark}

    In order to have universal approximation with symmetry constraints,
    it is necessary to rule out some degenerate cases where
    $\Fcal$ does not have the desired control granularity.
    In particular, since we require $\Fcal$ to be equivariant,
    it is possible that two initial conditions that share
    some coordinate values may continue to have identical coordinate
    values under the flow constructed from $\Fcal$,
    thereby limiting approximation capabilities.
    We make this precise by defining the following perturbation
    property that we will subsequently require $\Fcal$ to fulfill.
    We first define a special type of transformation
    function, which we call \textit{coordinate zooming functions}, that we use throughout
    this work to induce a non-linear and
    $\sS$ equivariant change of coordinates.

\begin{definition}[Coordinate zooming function]
    Let $u\in C(\R)$ be an increasing function.
    We define
    \begin{equation}
        u^{\otimes}(\bm{x}) := [u(x_1),\cdots,u(x_n)],
    \end{equation}
    which we call the \emph{coordinate zooming function} with respect to $u$.
    Clearly $u^{\otimes}$ is in $C(\R^n;\R^n)$ and is $\sS$ equivariant.
\end{definition}

\begin{definition}
We say a point $\bm x$ is in \emph{general position},
if none of its coordinates are the same.
\end{definition}
    \begin{definition}[Perturbation property]
        \label{def:pert_prop}
        Define the \emph{similarity} of two points
        \begin{equation}
            \label{eq:similarity}
            \overline{s}(\bm x, \bm y) = |\{ i:x_i = y_j\}|
        \end{equation}
        if at least one of them is in general position.
        Otherwise, the similarity is defined as 0.
        We say that $\Fcal$ satisfies the \emph{perturbation property} if
        for any $\bm x, \bm y$ with $\overline{s}(\bm x, \bm y) \neq 0$,
        there exists $\bm f \in \Fcal$ and a coordinate zooming function $u^{\otimes}$,
        such that $x_i = y_i$ for some $i$,
        but $[\bm f(u^{\otimes} (\bm x))]_i \neq [\bm f(u^{\otimes} (\bm y))]_i$.
    \end{definition}

\subsubsection{Sufficient Conditions for Approximation: Full Permutation Case}
With these definitions in mind, we now state our main universal approximation results,
starting with the full permutation group $\sS$.

\begin{theorem}
    \label{thm:main}
    Let $n\geq 2$ and $\bm F:\R^n \rightarrow \R^m$ be continuous and $\sS$ invariant.
    Suppose that the control family $\Fcal$ is $\sS$ equivariant
    and the terminal family $\Gcal$ is $\sS$ invariant,
    satisfying the following conditions
    \begin{enumerate}\addtocounter{enumi}{0}
        \item For any compact $K\subset \R^n$,
        there exists a Lipschitz $\bm g \in \Gcal$ such that
        $\bm F(K) \subset \bm g(\R^n)$.
        \item
        $\chbar(\Fcal)$ satisfies the perturbation property (Defintion~\ref{def:pert_prop}).
        In addition, $\Fcal$ is scaling and translation invariant along $\bm 1$,
        \textit{i.e.},
        $\bm f \in \Fcal$ implies $a\bm f(b\bm x + c\bm 1) \in \Fcal$
        for any $a,b,c \in \R$.
        \item $\chbar(\coor(\mathcal{F}))$ contains both a symmetric invariant well function $\tau$,
        and a function with the form $\bm x \mapsto h(x_1)$
        such that $h$ is a one-dimensional well function.
    \end{enumerate}
    Then, $\Hcalode(\Fcal,\Gcal)$ satisfies the $\sS$ UAP.
    That is, for any $p \in [1,\infty)$, compact $K\subset \R^n$ and $\varepsilon > 0$,
    there exists an $\sS$ invariant $\hat{\bm F} \in \Hcalode(\Fcal, \Gcal)$ such that
    \begin{equation}
        \| \bm{F} - \hat{\bm F} \|_{L^p(K)} \le \varepsilon.
    \end{equation}
\end{theorem}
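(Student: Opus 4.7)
The plan is to adapt the proof strategy of Theorem~\ref{thm:jems} to the $\sS$-equivariant setting. First, by condition (1), I would choose a Lipschitz $\sS$-invariant $\bm g \in \Gcal$ with $\bm F(K) \subset \bm g(\R^n)$, which reduces the task to producing an $\sS$-equivariant $\bm \varphi \in \Acal_{\Fcal}$ with $\bm g \circ \bm \varphi$ close to $\bm F$ in $L^p(K)$. Since $\bm F$ is $\sS$-invariant, it is determined by its restriction to the closure $\overline{Q}$ of the cross section $Q = \{x_1 > \cdots > x_n\}$, so the core density problem becomes: approximate any $\sS$-equivariant continuous self-map of $\R^n$, in $L^p$ on compacts, by elements of $\Acal_{\Fcal}$.

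To prove this density I would imitate the partition-and-transport construction behind Theorem~\ref{thm:jems}, replacing each ingredient with an $\sS$-equivariant counterpart. The role of the classical well function is split into two: the symmetric invariant well function $\tau$, promoted to an equivariant vector field via the inverse of the $\coor$ operator, acts as an $\sS$-invariant gate whose zero-set is a cube and which activates whenever a coordinate leaves that cube; the one-dimensional well function $h$ from condition (3), lifted to $[h(x_1),\ldots,h(x_n)]$ by the same construction, supplies coordinate-wise gating. In place of the restricted affine invariance of Theorem~\ref{thm:jems}, I would combine the scaling and translation invariance along $\bm 1$ granted by condition (2) with the coordinate zooming maps $u^\otimes$, which are $\sS$-equivariant since they act componentwise by the same scalar map, and hence provide nonlinear but fully equivariant reparametrizations.

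The main obstacle is precisely the gap left by the loss of restricted affine invariance: in the unconstrained setting, arbitrary diagonal scalings and translations are used to place well-function gates at any axis-aligned rectangle, while under $\sS$-equivariance this flexibility collapses to the one-dimensional $\bm 1$-direction alone. Coordinate zoomings recover some freedom, but they cannot by themselves separate coordinates that happen to coincide, which is fatal because equivariance ties such coordinates together under any flow. This is where the perturbation property (Definition~\ref{def:pert_prop}) intervenes: it guarantees that for any pair of points sharing a coordinate, some $\bm f \in \chbar(\Fcal)$ together with a zooming can break the coincidence. Putting these tools together, one can drive essentially every initial point into the open stratum of points in general position, where the equivariance constraint is locally trivial, and then mimic the Theorem~\ref{thm:jems} argument on the fundamental domain $Q$. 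Extending the resulting $\bm \varphi$ equivariantly to $K$ and composing with $\bm g$ yields $\hat{\bm F} \in \Hcalode(\Fcal,\Gcal)$ with the desired approximation bound.
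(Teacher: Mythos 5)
Your proposal gets the scaffolding right: by condition (1) pick a Lipschitz $\sS$-invariant $\bm g \in \Gcal$ covering the range of $\bm F$, reduce to approximating an $\sS$-equivariant self-map of $\R^n$ by flows in $\Acal_{\Fcal}$ (this is the paper's Proposition~\ref{thm:level-set} combined with Lemma~\ref{lem:cross-section}), and you correctly identify that the perturbation property exists to break coincidental shared coordinate values between points in distinct orbits (the role of Lemma~\ref{lem:perturbation-lemma}), while the one-dimensional well function $h$ supplies $\sS$-equivariant coordinate zoomings $u^{\otimes}$ (Lemma~\ref{lem:coordinate-zooming}).

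The gap is in the final step. The assertion that once points are in general position ``the equivariance constraint is locally trivial'' so one can ``mimic the Theorem~\ref{thm:jems} argument on $Q$'' does not hold. Being in general position does not loosen the rigidity: every coordinate zooming applies one and the same monotone $u$ to all coordinates of all points, and the only remaining affine freedom from condition (2) is scaling and translation along $\bm 1$. Theorem~\ref{thm:jems} leans on independent diagonal scalings to place well-function gates coordinate-by-coordinate; none of that survives the $\sS$-equivariance constraint. Concretely, if two source points $\bm x, \bm x'$ have interleaved coordinates (say $x_1 < x'_1$ but $x_2 > x'_2$) and the targets demand a different interleaving, no coordinate zooming can transport both. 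What is missing is the paper's partial-ordering mechanism (Lemma~\ref{lem:partially-ordering}): flowing along the symmetric invariant well function $\tau$, which moves each point rigidly in the $\bm 1$-direction with the gate controlled by a zooming, one first drives the source set into a \emph{partially ordered} configuration $\bm x^1 \succ \bm x^2 \succ \cdots$ (all coordinates of one strictly dominate all coordinates of the next), does the same to the target set via some $\bm \gamma$, applies a single zooming $u^{\otimes}$ to match them, and finishes with $\bm \gamma^{-1}$. This intermediate normalization is precisely what replaces the unrestricted affine invariance of Theorem~\ref{thm:jems}; without it, the transport step of your plan does not go through.
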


Theorem~\ref{thm:main} is a parallel to Theorem~\ref{thm:jems} for the approximation of
$\sS$ invariant functions via equivariant dynamical systems.
In the statement of the result,
condition 1 mirrors that of  Theorem~\ref{thm:jems}.
Conditions 2 and 3 here replace those
in Theorem~\ref{thm:jems} regarding affine invariance and well functions by
with symmetry conditions captured by the $\coor$ operator.
As discussed earlier,
condition 2 is necessarily different from the restricted affine invariance
assumed in Theorem~\ref{thm:jems}, since the requirement that
$\Fcal$ be $\sS$ equivariant precludes the satisfaction
of restricted affine invariance in the general sense.
Thus, we require a weaker affine invariance assumption here,
together with the additional requirement on the perturbation property.

\begin{remark}
We give some examples to explain why we require at least
two types of well functions ---
a symmetric invariant one and a one-dimensional ($\bm x \mapsto h(x_1)$) one ---
in condition 3 of Theorem~\ref{thm:main}.
Consider
\begin{equation}\label{eq:counter_exp_gamma}
    \dot{\bm x} = \gamma(\bm x) \mathbf{1},
\end{equation}
where $\bm \gamma$ is an arbitrary scalar function.
In this case, $\Fcal := \{ \gamma(\bm x) \mathbf{1}:\gamma \in C(\R^n)\}$.
Check that $\Fcal$ satisfies Condition 2 in Theorem~\ref{thm:main}
and $\chbar(\coor(\Fcal))$
contains a symmetric well function, but not $\bm x \mapsto h(x_1)$.
Observe that any flow map $\bm \varphi \in \Acal_{\Fcal}$ will satisfy
$[\bm \varphi(\bm x)]_i - [\bm \varphi(\bm x)]_j = x_i - x_j$.
As a result, $\Acal_{\Fcal}$ cannot approximate every $\sS$ equivariant
function. Further, consider the terminal family as a single scalar function
$g(\bm x) = \max x_i - \min x_i$, then for all $\bm F \in \Hcalode(\Fcal, \{\bm g\})$,
it holds that $F(\bm x) = g(\bm x)$. This shows the approximation
property of invariant functions is therefore limited.
Theorem~\ref{thm:jems} assumes a stronger affine invariance condition
than Theorem~\ref{thm:main}.
Thus, while $\gamma$ in~\eqref{eq:counter_exp_gamma} can be taken as a well function,
the associated control family $\Fcal$ does not satisfy the restricted affine invariance property in Theorem~\ref{thm:jems}.

On the other hand, if we only consider coordinate-wise well functions,
then the following dynamics driven by a coordinate-zooming function
\begin{equation}
    \dot{\bm x} = u^{\otimes}(\bm x),
\end{equation}
can be constructed to satisfy all other conditions of Theorem~\ref{thm:main}.
In this case, $\Acal_{\Fcal}$ only consists of coordinate-wise mappings,
and hence cannot approximate every $\sS$ equivariant function for $n\geq 2$.
For instance, if we choose the terminal family as a single function $g(\bm x) = \max x_i$, then we can conclude that $\Hcalode(\Fcal,\{\bm g\})$ cannot approximate all $\sS$ invariant functions.
\end{remark}

\subsubsection{Sufficient Condition for Approximation: The General Case}
Next, we discuss the generalization of this result to transitive subgroups $\sG \leq \sS$.
While an $\sS$ equivariant control family is automatically $\sG$ equivariant,
we need additional requirements on $\Fcal$ to ensure that it is not too symmetric to
lose the ability to resolve mappings that are $\sG$ invariant but not $\sS$ invariant.
That is, we need to ensure that $\Fcal$ has enough resolution to balance with the structure of $\sG$.
The following example demonstrates a negative result when $\Fcal$ does not possess enough resolution.

\begin{example}
    \label{exa:failure}
We consider the translation group $\sG = \sT$,
the singleton terminal family $\Gcal = \{ g(\bm x) = x_1+ x_2 + x_3 \}$,
and the control family
\begin{equation}
    \mathcal{F} = \{[x_1,x_2,x_3] \mapsto [ah(bs+cx_1 +d) ,ah(bs+cx_2 +d) , ah(bs+cx_2 +d)]:a,b,c,d\in \mathbb{R}\}.
\end{equation}
It is easy to see Condition 2 in Theorem~\ref{thm:main} holds for $\Fcal$.
To verify Condition 3, we notice that, in this case
\begin{equation}
    \coor(\Fcal) = \{ a h(b(x_1+x_2+x_3) + c(x_1)+ d)\}.
\end{equation}
Clearly, $h(x_1) + h(-1-x_1) \in \chbar(\coor(\Fcal))$ is a one-dimensional well function.
Further, it follows from Remark~\ref{rmk:symmetrc-invariant-well-function} that
$h(x_1+x_2+x_3) $ is a symmetric invariant well function.

Therefore, Conditions 1-3 are satisfied for this control system.
However, this control family can only produce $\sS$ equivariant mappings.
By Proposition~\ref{prop:negative-invar}, it suffices to construct a function which is $\sT$ invariant but not $\sS$ invariant.
We provide a concrete example:
\begin{equation}
    \label{eq:approx_counter_example}
    f(\bm x) = (x_1 - x_2)(x_2 - x_3)(x_3 - x_1),
\end{equation}
which is $\sT$ equivariant, but not $\sS$ equivariant.
This function cannot be approximated by $\Hcalode(\Fcal,\Gcal)$ defined above.
\end{example}
In this specific case, the failure to approximate~\eqref{eq:approx_counter_example} in Example~\ref{exa:failure} can be explained as follows. Recall the definition of cross section in \eqref{eq:cross-section-A}.
Observe that any flow map generated from $\Fcal$ maps $Q_{\fa}$ into itself.
In other words, the flow does not have enough resolution to steer points across different cross sections.
However, this motion is necessary if we want to achieve approximation when $\sG \neq \sS$.

Motivated by this observation, we need a condition on $\Fcal$ with respect to
the symmetry group $\Gcal$, such that $\Acal_{\Fcal}$ can map points $Q_{\fa}$
to those in another cross section $Q_{\fb}$, for any $\fa,\fb$ belong to
different $\sG$ orbits.
This is a necessary condition for approximation of $\sG$ equivariant mappings.
We now discuss sufficient conditions to ensure this.

\begin{definition}[Direct Connectivity]
\label{defn:direct_connectivity}
We say two permutation $\fa$ and $\fb$ are directly $\Fcal$ connected if $\fa = (i~j)\fb$ for some $i, j \in \{n\}$, and there exists a $\bm z \in \partial Q_{\fa} \cap \partial Q_{\fb}$, and $\bm f \in \Fcal$, such that
    $
        [\bm f(\bm z)]_i \neq [\bm f(\bm z)]_j.
    $
We say two permutation $\fa$ and $\fb$ are $\Fcal$ connected if there exists
$\fg_0 = a, \fg_1, \cdots, \fg_s = \fb$, where $\fg_i$ and $\fg_{i-1}$ are directly $\Fcal$ connected.
\end{definition}




    \begin{definition}[Resolving a Group]
    \label{defn:resolvence}
    We say $\Fcal$ resolves $\sG$ if $\Fcal$ satisfies the perturbation property
    (Definition~\ref{def:pert_prop}), and moreover, it is $\sG$ transversally transitive:
    i.e. there exists a transversal $A$ such any two distinct elements $\fa,\fb \in A$ are $\Fcal$ connected.
    \end{definition}



Now, we are ready to state our main approximation result that applies to any transitive $\sG \leq \sS$.

\begin{theorem}
    \label{thm:main2}
    Let $\sG \leq \sS$ be transitive,
    $n\geq 2$ and $\bm F:\R^n \rightarrow \R^m$ be continuous and $\sG$ invariant.
    Suppose that the control family $\Fcal$ is $\sG$ equivariant
    and resolves $\sG$, while the terminal family $\Gcal$ is $\sG$ invariant,
    satisfying the following conditions
    \begin{enumerate}\addtocounter{enumi}{0}
        \item For any compact $K\subset \R^n$,
        there exists a Lipschitz $\bm g \in \Gcal$ such that $\bm F(K) \subset \bm g(\R^n)$.
        \item $\Fcal$ is scaling and translation invariant along $\bm 1$, \textit{i.e.},
        $\bm f \in \Fcal$ implies $a\bm f(b\bm x + c\bm 1) \in \Fcal$ for any $a,b,c \in \R$.
        \item $\chbar(\coor(\mathcal{\bm F}))$ contains both a symmetric invariant well function $\tau$,
        and a function $\bm x\mapsto h(x_1)$, where $h$ is a one-dimensional well function.
    \end{enumerate}
    Then, $\Hcalode(\Fcal,\Gcal)$ satisfies the $\sG$ UAP.
    That is, for any $p \in [1,\infty)$, compact $K\subset \R^n$ and $\varepsilon > 0$,
    there exists a $\sG$ invariant $\hat{\bm F} \in \Hcalode(\Fcal,\Gcal)$ such that
    \begin{equation}
        \| \bm F - \hat{\bm F} \|_{L^p(K)} \le \varepsilon.
    \end{equation}
\end{theorem}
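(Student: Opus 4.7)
My plan is to reduce Theorem~\ref{thm:main2} to the $\sS$-invariant case already handled by Theorem~\ref{thm:main}, exploiting the resolving hypothesis to deal with the fact that the fundamental domain for a proper transitive subgroup $\sG \le \sS$ is a union of several disjoint cross sections rather than a single one. Decomposing $\R^n$ (up to a measure-zero set) as the union of $\sG$-orbits of the cross sections $Q_{\fg \fa_i}$, where $A = \{\fa_1, \ldots, \fa_k\}$ is the transversal supplied by the resolving hypothesis, a $\sG$-invariant continuous $\bm F$ is fully determined by its restriction to the fundamental domain $Q_A = \cup_i Q_{\fa_i}$, and any $\sG$-invariant approximant $\hat{\bm F} \in \Hcalode(\Fcal,\Gcal)$ is likewise determined there. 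It therefore suffices to produce an approximation whose values on $K \cap Q_A$ match those of $\bm F$ in the $L^p$ sense.

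Next I would construct a preliminary \emph{sorting flow} $\bm \varphi_0 \in \Acal_{\Fcal}$ that collapses every cross section $Q_{\fa_i}$ into the single $\sG$-orbit of $Q = Q_{\fe}$. This is where direct $\Fcal$-connectivity is used: for any directly connected $\fa$ and $\fb = (i~j)\fa$, there is a boundary point $\bm z \in \partial Q_{\fa} \cap \partial Q_{\fb}$ and a vector field $\bm f \in \Fcal$ with $[\bm f(\bm z)]_i \ne [\bm f(\bm z)]_j$. Combining this with the scaling and translation invariance along $\bm 1$ (condition 2) and the perturbation property (Definition~\ref{def:pert_prop}), I can rescale and compose such fields to produce a flow that crosses an open neighborhood of the shared boundary from $Q_{\fa}$ into $Q_{\fb}$. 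Chaining such crossings along the connectivity graph of the transversal yields $\bm \varphi_0$, which maps each $Q_{\fa_i}$ into the $\sG$-orbit of $Q$.

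Once the $\sG$-orbits of cross sections have been merged, the pulled-back target $\bm F \circ \bm \varphi_0^{-1}$ effectively reduces to a continuous function on the single cross section $Q$, and the remaining approximation is precisely the one solved in the proof of Theorem~\ref{thm:main}. The symmetric invariant well function $\tau$ and the one-dimensional well function $h(x_1)$ in $\chbar(\coor(\Fcal))$ from condition 3 supply the localized gates used there to steer the dynamics on a single cross section. Composing the sorting map $\bm \varphi_0$ with the Theorem~\ref{thm:main}-style approximating flow and a terminal $\bm g \in \Gcal$ yields $\hat{\bm F} \in \Hcalode(\Fcal,\Gcal)$, which is automatically $\sG$-invariant because $\Fcal$ is $\sG$-equivariant and $\Gcal$ is $\sG$-invariant.

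The hardest step is the construction of $\bm \varphi_0$. Direct $\Fcal$-connectivity furnishes only pointwise non-degeneracy at a single boundary point, whereas we need a Lipschitz $\sG$-equivariant flow that moves essentially every point of $Q_{\fa}$ across the shared boundary into $Q_{\fb}$ in finite time. Overcoming this requires combining the perturbation property (to ensure non-degenerate separation at nearby shared-coordinate points), the scaling invariance along $\bm 1$ (to tune the magnitude and duration of each crossing), and a careful successive composition of crossings that preserves $\sG$-equivariance and Lipschitz regularity. With the sorting map in hand, the rest of the argument follows the template of Theorem~\ref{thm:main} almost verbatim.
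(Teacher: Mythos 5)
Your central construction cannot exist, and this breaks the proposal at its core. Every element of $\Acal_{\Fcal}$ is a composition of flow maps of Lipschitz vector fields, hence a homeomorphism of $\R^n$. You ask for a ``sorting flow'' $\bm\varphi_0 \in \Acal_{\Fcal}$ that sends every cross section $Q_{\fa_i}$, $i=1,\dots,k$, into the $\sG$-orbit of $Q = Q_{\fe}$. But by $\sG$-equivariance, such a map would also send every $\sG$-translate $\fh(Q_{\fa_i})$ into $\bigcup_{\fg\in\sG}\fg(Q)$, and by Lemma~\ref{lem:cross-section} the union $\bigcup_{\fh\in\sG}\fh(Q_A)$ exhausts $\R^n$ up to a measure-zero set. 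So $\bm\varphi_0$ would map a full-measure open set into the proper open subset $\bigcup_{\fg}\fg(Q)$ (whose complement has positive measure whenever $k = |\sS|/|\sG| > 1$), contradicting the fact that a homeomorphism of $\R^n$ is surjective. Direct $\Fcal$-connectivity only gives a local, pointwise crossing device near one shared boundary point $\bm z$; no amount of rescaling or composition along the connectivity graph can turn that into a bulk transport of whole cross sections, precisely because of this bijectivity obstruction.

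The paper avoids this by never attempting to move regions. After reducing, via Proposition~\ref{thm:level-set} and Theorem~\ref{thm:framework}, to matching a \emph{finite} set of grid points, it applies the perturbation and partial-ordering lemmas (Lemmas~\ref{lem:perturbation-lemma},~\ref{lem:partially-ordering}) to normalize the configuration, and then moves the finitely many general-position representatives into $Q$ \emph{one point at a time} using Lemma~\ref{lem:one-point} (crossing a shared boundary for a single point via direct connectivity) and Lemma~\ref{lem:plugging} (showing that while one point is moved into $Q$, each of the other points can be kept in its own cross section). This sequential, finite-point version of ``sorting'' is exactly what is compatible with the bijectivity of flow maps; the bulk version in your proposal is not. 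Also, a smaller but related issue: your phrase ``the pulled-back target $\bm F\circ\bm\varphi_0^{-1}$'' presupposes a bijection and inherits the same problem. To repair the argument you would need to replace the sorting flow by the finite-point-matching scheme, at which point you would essentially be reconstructing the paper's Lemmas~\ref{lem:one-point} and~\ref{lem:plugging}.
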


Observe that Theorem~\ref{thm:main2} generalizes Theorem~\ref{thm:main},
since any $\sS$ equivariant $\Fcal$ necessarily possesses $\sS$ transversal transitivity.
Furthermore, we emphasize that theorem~\ref{thm:main2} is not a strict generalization of Theorem~\ref{thm:jems}.
While we may take $\sG = \{ \fe \}$ to remove the symmetry constraints,
the trivial subgroup $\{ \fe \}$ is not transitive, hence the result here do not apply.
In the proof of this theorem, we in fact prove a more general result regarding
the approximation of invariant functions by composing families of equivariant functions (Proposition~\ref{thm:level-set}).
This result applies broadly to compositional hypothesis spaces and is not limited to ODE-type dynamical systems.
In fact, it subsumes both Theorem~\ref{thm:jems} and Theorem~\ref{thm:main},
and may be viewed as an abstract sufficient condition for universal approximation through composition.
We defer the detailed discussion of this result to Section~\ref{sec:thm:composition}.

In the application of these results,
one only need to check whether a network architecture of interest
 satisfies the above four conditions.
Conditions 1 and 2 are easy to check.
The other conditions require additional effort,
but we will show in Section~\ref{sec:application}
that both known popular and useful novel
architecture types are included in our results,
and their verification are presented in Section~\ref{sec:proofs:verification}.

\subsection{Temporal Discretization}
\label{sec:thm:discrete}
Our main result applies to continuous-time dynamics,
which corresponds to a continuous-layer idealization of practical deep neural networks
\cite{weinan2017proposal,Haber2017,li2017maximum}.
It is thus natural to consider the implications of our results for the discrete case in applications.
In this subsection, we discuss how temporal discretization of continuous-time flows can inherit approximation properties.

In the following, we consider for simplicity only single step integrators for discretizing the continuous flow.
This includes but is not limited to the important example of forward Euler discretization,
which corresponds to the ResNet family of architectures~\cite{He2016}.
The numerical integrator is abstracted as follows: for a given $\bm f \in \Fcal$,
for each time interval of size $t$
we define the numerical scheme as a mapping
$\hat{\bm \phi}(\bm f, t)(\cdot) : \R^{n} \to \R^n$,
serving as an approximation of the
flow map of the continuous dynamics $\bm\phi(\bm f, t)(\cdot)$.
For a time horizon $T$,
the discrete flow with respect to the partition
$\Delta: 0 = t_0 < t_1 <\cdots <t_s = T$
is given as
\begin{equation}
    \hat{\bm \phi}(\bm f, \Delta)
    :=
    \hat{\bm \phi}(\bm f, t_{s} - t_{s-1})
    \circ
    \hat{\bm \phi}(\bm f, t_{s-1} - t_{s-2})
    \cdots
    \hat{\bm \phi}(\bm f, t_{1} - t_{0})
\end{equation}

\begin{definition}[Convergence of Numerical Scheme]
\label{defn:convergence}
We say the numerical scheme is convergent if
\begin{enumerate}
    \item $\hat{\bm \phi}(\bm f, t) \to \operatorname{Id}$ uniformly in any compact set $K \subset \R^n$, as $t \to 0$, where $\operatorname{Id}$ is the identity mapping.
    \item For any $T$,
    \begin{equation}
        \hat{\bm \phi}(\bm f, \Delta) \to
        \bm\phi(\bm f, T)
    \end{equation}
    uniformly in any compact set $K \subset \R^n$,
    as $|\Delta| := \max (t_i - t_{i-1}) \to 0$.
    \item There exists a continuous function
    $g:\R^n \times \R \rightarrow \R$
    such that
    $|\hat{\bm \phi}(\bm f,\Delta)(\bm x)| \le g(\bm x, T)$.

\end{enumerate}
\end{definition}

As an example, we consider the forward Euler scheme, where
\begin{equation}
\label{eq:Euler}
    \hat{\bm \phi}(\bm f, t)(\bm x) := \bm x + t \bm f(\bm x).
\end{equation}
Standard numerical analysis shows that the forward Euler scheme is convergent
under fairly general conditions on $\bm f$, e.g. Lipschitz continuity, see \cite{atkinson2008introduction}.

The main result on numerical discretization below shows that
the composition of convergent discretizations can be used to
approximate flow maps.

\begin{theorem}[Temporal Discretization of Compositional Flow Map]

Consider a target mapping $\bm \varphi$ in the form of a composition of $l$ flow maps
$
    \bm \varphi
    =
    \bm \phi(\bm f^{l}, T_l - T_{l-1})
    \circ \cdots \circ
    \bm \phi(\bm f^{1}, T_1 - T_0)
$,
shortened as
\begin{equation}
    \bm \varphi  = \bm \phi_L \circ \cdots \circ \bm \phi_1.
\end{equation}

Suppose that for each $l$, $\hat{\bm \phi}(\bm f^l, \cdot)$
is a convergent numerical integrator, as defined in Definition~\ref{defn:convergence}.

Consider the following function with respect to a partition $\Delta: 0 = t_0 < t_1 <\cdots <t_s = T$
\begin{equation}
\label{eq:hat-varphi}
    \hat{\bm \varphi} (\cdot, \Delta)
    =
    \hat{\bm \phi}(\bm f^{l_s}, t_{s} - t_{s-1})
    \circ
    \cdots
    \circ
    \hat{\bm \phi}(\bm f^{l_1}, t_{1} - t_{0}),
\end{equation}
where $l_s = \min \{l : T_l > s \}$.
Then,
$\hat{\bm \varphi} (\cdot, \Delta) \to {\bm \varphi} (\cdot)$ uniformly on compact sets as $|\Delta| \to 0$.
\end{theorem}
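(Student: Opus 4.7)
The plan is to reduce the general result to the case where the partition $\Delta$ includes all the breakpoints $T_1,\dots,T_{L-1}$, apply convergence of the numerical scheme on each subinterval, and finally chain these uniform convergences together via the Lipschitz/continuity properties of flow maps and the growth bound in Definition~\ref{defn:convergence}(3).

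First, I would introduce the refinement $\tilde\Delta := \Delta \cup \{T_0,\dots,T_L\}$. I would show that the refinement error $\hat{\bm\varphi}(\cdot,\Delta) - \hat{\bm\varphi}(\cdot,\tilde\Delta)$ tends to zero uniformly on any compact set as $|\Delta|\to 0$. The point is that on each subinterval $(t_{i-1},t_i)$ of $\Delta$ which contains some breakpoint $T_l$, the map used for $\Delta$ is a single $\hat{\bm\phi}(\bm f^{l_i},t_i-t_{i-1})$, while the map used for $\tilde\Delta$ replaces it by the composition $\hat{\bm\phi}(\bm f^{l_i+1},t_i-T_l)\circ\hat{\bm\phi}(\bm f^{l_i},T_l-t_{i-1})$. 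By Definition~\ref{defn:convergence}(1), both of these mappings converge uniformly to the identity on the compact set in question, so their difference vanishes. Since there are at most $L-1$ such mismatched subintervals, and by Definition~\ref{defn:convergence}(3) the images of all intermediate steps lie inside a compact set determined by $K$ and $T$, a standard telescoping argument combined with the Lipschitz continuity of the remaining factors controls the total propagated error.

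Having reduced to aligned partitions, I would write $\tilde\Delta_l := \tilde\Delta \cap [T_{l-1},T_l]$ and observe that
\begin{equation}
\hat{\bm\varphi}(\cdot,\tilde\Delta) = \hat{\bm\phi}(\bm f^L,\tilde\Delta_L) \circ \cdots \circ \hat{\bm\phi}(\bm f^1,\tilde\Delta_1).
\end{equation}
By Definition~\ref{defn:convergence}(2), each factor $\hat{\bm\phi}(\bm f^l,\tilde\Delta_l)$ converges to $\bm\phi_l = \bm\phi(\bm f^l,T_l-T_{l-1})$ uniformly on compact sets as $|\Delta|\to 0$. To promote this to uniform convergence of the full composition, I would use the following standard fact: if $\bm\psi_n \to \bm\psi$ uniformly on compacts with $\bm\psi$ continuous, and $\bm\chi_n \to \bm\chi$ uniformly on a compact set $K$, then $\bm\psi_n\circ\bm\chi_n \to \bm\psi\circ\bm\chi$ uniformly on $K$. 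Each $\bm\phi_l$ is Lipschitz (hence continuous), and the uniform boundedness of the iterates $\hat{\bm\phi}(\bm f^{l-1},\tilde\Delta_{l-1})\circ\cdots\circ \hat{\bm\phi}(\bm f^1,\tilde\Delta_1)(K)$ needed to apply this fact is guaranteed by Definition~\ref{defn:convergence}(3). An induction on $l$ from $1$ to $L$ then yields uniform convergence of $\hat{\bm\varphi}(\cdot,\tilde\Delta)$ to $\bm\varphi$ on $K$.

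Combining the two paragraphs gives the claim via the triangle inequality. The main obstacle is the refinement step: one must verify that adding at most $L-1$ breakpoints, each of which perturbs a single subinterval by a quantity tending to zero, produces a total error that also tends to zero after composition with the remaining $\hat{\bm\phi}$-factors. The two ingredients needed for this are (i) the identity-at-zero property of Definition~\ref{defn:convergence}(1), giving the per-step error estimate, and (ii) the growth bound of Definition~\ref{defn:convergence}(3), which confines all intermediate computations to a fixed compact set so that Lipschitz constants of the outer factors are uniformly controlled. The remaining composition argument is then routine.
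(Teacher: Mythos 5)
Your overall architecture (decompose $\hat{\bm\varphi}$ into per-$\bm f^l$ blocks, apply Definition~\ref{defn:convergence}(2) block by block, then chain the uniform limits together using the Lipschitz/continuity of the flow maps $\bm\phi_l$ and the growth bound of Definition~\ref{defn:convergence}(3)) matches the paper's strategy, and your second paragraph is sound. The paper also does a finite induction over the blocks $\bm\pi_L \circ \cdots \circ \bm\pi_1$, telescoping $\bm\pi_L\circ\bm\Pi_L$ against $\bm\phi_L\circ\bm\Phi_L$ so that every error propagation passes through a \emph{continuous} flow map $\bm\phi_L$, whose Lipschitz constant is fixed.

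The problem is your refinement step. You want to compare $\hat{\bm\varphi}(\cdot,\Delta)$ with $\hat{\bm\varphi}(\cdot,\tilde\Delta)$ by replacing each crossing subinterval and telescoping the error through the \emph{remaining discrete factors}, invoking their "Lipschitz continuity." But Definition~\ref{defn:convergence} gives no uniform control over $\lip\hat{\bm\phi}(\bm f,t)$; in particular, the composed Lipschitz constant of the outer part of $\hat{\bm\varphi}(\cdot,\Delta)$ is a product over $O(1/|\Delta|)$ factors, and neither the identity-at-zero property (1), the convergence property (2), nor the growth bound (3) ensures this product stays bounded as $|\Delta|\to 0$. (For forward Euler it does, $\prod(1+(t_j-t_{j-1})\lip\bm f)\le e^{T\lip\bm f}$, but the theorem is stated for an arbitrary convergent scheme.) So "a standard telescoping argument combined with the Lipschitz continuity of the remaining factors" is not available from the hypotheses as written, and the propagated error could a priori blow up.

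There are two ways to repair this, and both show that the refinement is the wrong move. (i) Drop the refinement entirely: group $\hat{\bm\varphi}(\cdot,\Delta)$ by $l_j$ as in the paper, note that block $\bm\pi_i$ is a discretization of a time interval whose endpoints tend to $T_{i-1},T_i$ as $|\Delta|\to 0$, and prove directly (the paper's ``base case'') that $\hat{\bm\phi}(\bm f,\Delta)\to\bm\phi(\bm f,T)$ even when the partition endpoint $t_s$ drifts from $T$, by appending one step of length $T-t_s$ and invoking properties (1)--(3). Then your second paragraph's induction/composition lemma finishes the proof without ever telescoping through discrete factors. (ii) If you insist on the refinement, you must process crossings from the outermost block inward and, at each stage, replace the discrete outer composition by its continuous limit before bounding the propagated error; that turns the paragraph into a nested induction using your second paragraph's machinery, which you would need to make explicit. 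As written, the proposal leaves a genuine gap precisely where the paper's telescope $\bm\pi_L\circ\bm\Pi_L$ vs.\ $\bm\phi_L\circ\bm\Pi_L$ vs.\ $\bm\phi_L\circ\bm\Phi_L$ is designed to avoid it.
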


As a consequence of this result, we can conclude that for any $\Hcalode$ possessing invariant UAP,
any discrete architecture corresponding to a convergent discretization inherits the UAP.

\begin{corollary}
    Suppose for a given control family $\Fcal$ and a terminal family $\Gcal$ that
    \begin{equation}
        \Hcalode := \{ \bm g \circ \bm \varphi : \bm g \in \Gcal, \bm \varphi \in \Acal_{\Fcal}\}
    \end{equation}
    possesses UAP.
    Moreover, suppose for each $\bm f \in \Fcal$,
    the numerical scheme $\hat{\bm \phi}_{\bm f}$ is convergent.
    We define the discretized attainable set
    \begin{equation}
        \hat{\Acal}_{\Fcal} :=
        \{
            \hat{\bm \phi}(\bm f^l, t_{l} - t_{l-1})
            \circ
            \cdots \hat{\bm \phi}(\bm f^1, t_1 - t_0)
            ~:~
            l\geq 1,
            0=t_0 < t_1 < \cdots < t_{l},
            {\bm f}^l \in \Fcal
        \}.
    \end{equation}
    Then, the discretized hypothesis space
    \begin{equation}
        \hat{\Hcal} := \{ \bm g \circ \bm \hat{\bm \varphi} : \bm g \in \Gcal, \hat{\bm \varphi} \in \hat{\Acal}_{\Fcal} \},
    \end{equation}
    corresponding to the deep neural network architecture
    $
        \bm x^{l+1}
        =
        \hat{\bm \phi}(\bm f^l, t_{l+1} - t_l)(\bm x^{l})
    $
    possesses the UAP.

    In addition, if  $\Gcal$ is $\sG$ invariant, $\Fcal$ is $\sG$ equivariant, and $\hat{\bm \phi}_{\bm f}$ is $\sG$ equivariant for each $\bm f \in \Fcal$, then $\hat{\Hcal}$ possesses the $\sG$ UAP.
\end{corollary}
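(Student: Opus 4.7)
The plan is to chain the assumed UAP of the continuous hypothesis space $\Hcalode$ with the uniform convergence provided by the preceding Temporal Discretization theorem. Fix a continuous target $\bm F$ (assumed $\sG$ invariant in the symmetric case), a compact $K \subset \R^n$, and $\varepsilon > 0$. First, invoke UAP of $\Hcalode$ to select $\bm g \circ \bm \varphi \in \Hcalode$ with $\bm g \in \Gcal$ Lipschitz and $\bm \varphi \in \Acal_{\Fcal}$ such that
\begin{equation}
\| \bm F - \bm g \circ \bm \varphi \|_{L^p(K)} < \varepsilon/2.
\end{equation}
By the definition of $\Acal_{\Fcal}$, the mapping $\bm \varphi$ is a finite composition $\bm \phi(\bm f^L, \tau_L) \circ \cdots \circ \bm \phi(\bm f^1, \tau_1)$ of continuous flow maps, which falls exactly into the setting of the Temporal Discretization theorem.

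Next, apply that theorem to this particular $\bm \varphi$: for any $\delta > 0$ we can choose a partition $\Delta$ refining the node set $\{T_0, T_1, \dots, T_L\}$ of cumulative horizons, so that the corresponding discretization $\hat{\bm \varphi}(\cdot, \Delta) \in \hat{\Acal}_{\Fcal}$ satisfies
\begin{equation}
\sup_{\bm x \in K} | \hat{\bm \varphi}(\bm x, \Delta) - \bm \varphi(\bm x) | < \delta.
\end{equation}
Because $\bm g$ is Lipschitz with some constant $L_{\bm g}$, this yields the pointwise bound $| \bm g(\hat{\bm \varphi}(\bm x, \Delta)) - \bm g(\bm \varphi(\bm x)) | \le L_{\bm g}\, \delta$ on $K$, hence
\begin{equation}
\| \bm g \circ \hat{\bm \varphi}(\cdot, \Delta) - \bm g \circ \bm \varphi \|_{L^p(K)} \le L_{\bm g}\, \delta\, |K|^{1/p}.
\end{equation}
Choosing $\delta$ so that the right-hand side is at most $\varepsilon/2$ and combining with the previous estimate via the triangle inequality gives $\| \bm F - \bm g \circ \hat{\bm \varphi}(\cdot, \Delta) \|_{L^p(K)} < \varepsilon$. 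Since $\bm g \circ \hat{\bm \varphi}(\cdot, \Delta) \in \hat{\Hcal}$, this proves the first (unconstrained) statement.

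For the $\sG$ UAP claim, note that $\sG$ equivariance is preserved under composition: if each $\hat{\bm \phi}(\bm f, t)$ is $\sG$ equivariant, then every $\hat{\bm \varphi} \in \hat{\Acal}_{\Fcal}$ is $\sG$ equivariant, and post-composition with any $\sG$ invariant $\bm g \in \Gcal$ yields a $\sG$ invariant element of $\hat{\Hcal}$. Hence every approximant already lies in the invariant class, and the same approximation argument above (applied now with $\bm F$ assumed $\sG$ invariant and $\bm g \circ \bm \varphi$ from the $\sG$ UAP of $\Hcalode$) establishes $\sG$ UAP of $\hat{\Hcal}$.

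The main obstacle is essentially bookkeeping rather than mathematical: one has to verify that the partition $\Delta$ used for the discretization can be chosen to both refine the horizon nodes $\{T_l\}$ of $\bm \varphi$ (so that the selector $l_s = \min\{l : T_l > t_s\}$ correctly assigns exactly one vector field $\bm f^l$ to each subinterval) and to make $|\Delta|$ small enough for the uniform convergence bound; since we can always subdivide further, both requirements are compatible. A minor point is that uniform convergence of $\hat{\bm \varphi}$ to $\bm \varphi$ on $K$ together with continuity of $\bm g$ suffices even without global Lipschitzness of $\bm g$ — one may restrict $\bm g$ to a compact neighbourhood of $\bm \varphi(K) \cup \bigcup_\Delta \hat{\bm \varphi}(K,\Delta)$, whose boundedness follows from clause (3) of Definition~\ref{defn:convergence}; this gives some flexibility should one wish to relax the Lipschitz assumption on $\Gcal$.
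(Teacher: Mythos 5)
Your proof is correct and follows exactly the approach the paper intends: the paper states this corollary immediately after the Temporal Discretization theorem with the phrase ``As a consequence of this result'' and gives no separate proof, treating it as an immediate corollary. Your write-up simply supplies the routine details — fix an approximant $\bm g \circ \bm\varphi$ from the continuous UAP, discretize $\bm\varphi$ uniformly on $K$ via the Temporal Discretization theorem, push the error through $\bm g$ by Lipschitz continuity (which is guaranteed since the paper assumes $\Gcal$ is a Lipschitz family), and observe that composition of $\sG$-equivariant maps followed by a $\sG$-invariant map is $\sG$-invariant. Your remark that the partition can always be refined to respect the horizon nodes $\{T_l\}$, and that local boundedness (clause 3 of the convergence definition) would suffice in place of global Lipschitzness of $\bm g$, are both correct and worth noting, though neither is needed under the paper's standing hypotheses.
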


\begin{remark}
    The first part of the above corollary extends the approximations results in \cite{li2019deep} for continuous dynamics to all convergent discretizations. The second part covers symmetry considerations.
\end{remark}


\begin{proof}
We prove an equivalent result, which does not require $t_s = T$,
and the convergence holds as $|\Delta| \to 0$ and $|t_s - T| \to 0$.
For convenience, we partition the product in \eqref{eq:hat-varphi} into
$L$ products $\bm \pi_L \circ \cdots \circ \bm \pi_1$,
where $\bm \pi_i$ is the composition of
$\hat{\bm \phi}(\bm f^{l_j}, t_j - t_{j-1})$
such that $l_j = i$.


Without loss of generality, we assume that $K = [-\kappa, \kappa]^n$.
We begin by induction on $L$.
We first prove the base case.
By assumption, it holds that $t_s < T$, then by definition,
for $\varepsilon$ there exists $\delta$,
such that
\begin{equation}
    |\hat{\bm \phi}(\bm f, T-t_s)\circ
    \hat{\bm \phi}(\bm f, \Delta) - \bm \phi(\bm f, T)|
    \le \varepsilon
\end{equation}
as $|\Delta| \le \delta$ and $|T - t_s| \le \delta$.
A direct calculation yields
\begin{equation}
    |\hat{\bm \phi}(\bm f, \Delta)\bm x- \bm \phi(\bm f, T)\bm x|
    \le \varepsilon +
    |(\hat{\bm \phi}(\bm f,T - t_s) - \operatorname{Id})
    \circ \hat{\bm \phi}(\bm x, \Delta)|.
\end{equation}
The result then follows from the definition of convergence.

Suppose that for $\varepsilon > 0$,
there exists a sufficiently small $\delta$, if $\max (t_{i+1} - t_i) \le \delta$.
Then,
\begin{equation}
    |\bm \pi_{L-1}\circ \cdots \circ \bm \pi_{1}(\bm x) - \bm \phi_{L-1}\circ \cdots \circ \bm \phi_{1}(\bm x)|
    \le \varepsilon.
\end{equation}

The key idea is to perform a telescope decomposition, i.e., for $\bm x \in K$, the estimation
\begin{equation}
     |(\bm \pi_L \circ \bm \Pi_L - \bm \phi_L \circ \bm \Phi_L) \bm x|
     \le
     |(\bm \pi_L \circ \bm \Pi_L - \bm \phi_L \circ \bm \Pi_L)\bm x|
     + |(\bm \phi_L \circ \bm \Pi_L - \bm \phi_L \circ \bm \Phi_L)\bm x|
\end{equation}
where
\begin{equation}
    \bm \Pi_L = \bm \pi_{L-1} \circ \bm \pi_{L_2} \circ \cdots \circ \bm \pi_{1},
\end{equation}
and
\begin{equation}
    \bm \Phi_L = \bm \phi_{L-1} \circ \bm \phi_{L-2}\circ \cdots \circ \bm \phi_{1}.
\end{equation}
By induction, we have $|\Pi_L\bm x - \Phi_L\bm x| \le \varepsilon < \kappa$.
Consider
\begin{equation}
    \tilde K := \{ \bm x : \operatorname{dist}(\bm x, \Phi_L(K)) \le \varepsilon\}.
\end{equation}
Then, by the definition of convergence on $\tilde K$, for sufficiently small $\delta$,
if $\max (t_{i+1} - t_i) \le \delta$, we have
\begin{equation}
    |(\pi_L \circ \Pi_L - \phi_L \circ \Pi_L)\bm x| \le \varepsilon.
\end{equation}
For the second term, just using the Lipschitz continuity of $\Phi_L$, then
\begin{equation}
    |(\phi_L \circ \Pi_L - \phi_L \circ \Phi_L)\bm x| \le \lip \Phi_L \varepsilon.
\end{equation}
Combining both we conclude the result.

\end{proof}

The applications detailed in Section~\ref{sec:application} will be presented
in continuous time.
Owing to the above result, all these density results still hold if one replaces the ODEs
by convergent discretizations, e.g. Euler, Runge-Kutta, etc.
For brevity, we will only present the continuous-time statements subsequently.

\subsection{Applications}

\label{sec:application}

In this section, we demonstrate how our proposed framework can be applied to obtain universal approximation
results of a variety of deep learning architectures designed to capture or preserve symmetry.
Some of these, such as the convolutional neural network, have been subjects
under intense study.
On the other hand, our framework can also be applied to study novel architectures for emerging
machine learning applications.

Before diving into application cases, we first discuss some advantages of our theoretical approach.
Unlike many other results on approximation theory of symmetric functions by neural networks (e.g.~\cite{yarotsky2018universal}),
the results here do not rely on the specification of an explicit network architecture,
much like the flavor of those in \cite{li2019deep}.
In fact, we show that a variety of different architectures satisfy the assumptions in our theory.
Second, more symmetry scenarios can be handled under our theory,
since the only assumption we made about the symmetry group $\sG$ is transitivity.
Besides shift invariance (convolutional networks) and full permutation invariance (set function networks),
we can also study other useful symmetry structures, such as the product permutation invariant structures (See Sec.~\ref{sec:application:product}).

Throughout this section, $\sigma$ denotes {an activation} function.
The only assumption on $\sigma$ is that $\chbar(\{ w\sigma(a \cdot + b) : w,a,b ,\in \R \})$
contains a one-dimensional well function $h$.
This is verified in \cite{li2019deep} that common activation functions like ReLu, Sigmoid and Tanh, satisfy this assumption.
For convenience, we will also choose a function family $\Gcal$ such that the covering condition 1 in
Theorem~\ref{thm:main} is satisfied:
$\Gcal$ is a $\sG$ invariant function family such that
for any compact set $K \subset \R^m$, there exists $\bm g \in G$ such that $K \subset \bm g (\R^n)$.
    For example, for scalar regression problems ($m=1$)
    we may choose $\Gcal = \{ g \}$ where $g(\bm x) = \sum_{i=1}^{n} x_i$
    to satisfy both invariance and the range covering condition.

\subsubsection{Shift Invariant Architectures}

We first discuss the approximation of shift invariant functions by dynamical systems driven by
shift equivariant control families.
A prime example of such architectures is the residual convolutional neural network.
We will show that our results immediately lead to universal approximation results for
shift-invariant functions using certain types of convolutional neural networks.

We recall the definition of the translation group in one or more dimensions.
The one dimensional case is relevant to sequence modelling applications (e.g. \cite{oord2016wavenet})
whereas the two/three dimensional cases apply to image and video analysis tasks.
In one dimension, the translation group is generated by the shift operator,
$\ft$ such that $\ft(i) = i+1$.
We then define $\sT = \sT_{1D} = \{ \ft, \ft^2,\cdots ,\ft^n\}$.
In two dimensions, we consider the case where the $n$ points are arranged in a 2D grid
of sizes $n_1,n_2$ with $n = n_1n_2$.
In image applications, each coordinate in this grid refers to a pixel in an image.
We may re-index the coordinates by the multi-index $(i_1,i_2)$,
where $i_1 = 1,2,\cdots, n_1$ and $i_2 = 1,2,\cdots, n_2$.
The two dimensional translation group $\sT_{2D}$ is generated by two shift operators:
\begin{equation}
    \begin{aligned}
        \ft_1( (i_1,i_2) ) = (i_1+1,i_2)
        \qquad
        \ft_2( (i_1,i_2) ) = (i_1,i_2+1)
        \\
        \sT_{2D} := \{ \ft_{1}^{j_1}\circ\ft_2^{j_2}: j_1 = 1,2,\cdots, n_1, ~~ j_2 = 1,2,\cdots, n_2\}.
    \end{aligned}
\end{equation}
The higher dimensional cases $\sT_{kD}$ follow similarly.

In one dimension where an input is a sequence represented as a vector,
the simplest continuous idealization of a convolutional neural network can be expressed as
\begin{equation}
    \dot{\bm x}(t) = v(t) \sigma(\bm w(t) \ast \bm x(t) + b(t) \bm 1),
\end{equation}
where for each $t$, $v(t),b(t)$ are scalars, $\bm w(t), \bm x(t), \bm 1$ are vectors and $\ast$ is the discrete
convolution operation, as defined in \eqref{eq:conv-1d} and \eqref{eq:conv-nd}.

In the general, high dimensional case, we define two variants of the idealization of residual convolution networks
\begin{equation}
    \Fcal_{\ast, 1} := \{v\sigma(\bm w \ast \bm x + b\mathbf 1)
    ~:~
    \bm w \in \mathbb{R}^{d_1\times d_2\times \cdots d_n}, v,  b \in \mathbb{R} \},
\end{equation}
and
\begin{equation}
\Fcal_{\ast, 2} := \{\bm w \ast \sigma(\bm x + b\mathbf 1)
~:~
\bm w \in \mathbb{R}^{d_1\times d_2\times \cdots d_n},  b \in \mathbb{R} \}.
\end{equation}
Note that these control families define a class of convolution-driven dynamics
where the size of the convolutional filters is equal to the size of the input signal.

Now, we may apply Theorem~\ref{thm:main2} to obtain the following result.
\begin{corollary}
    \label{cor:shift-invar-small}
    $\Hcalode(\Fcal_{\ast,1},\Gcal)$ and $\Hcalode(\Fcal_{\ast,2},\Gcal)$ possesses the $\sT_{kd}$ UAP.
\end{corollary}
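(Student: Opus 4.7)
The plan is to instantiate Theorem~\ref{thm:main2} with $\sG = \sT_{kd}$ and verify its hypotheses for each of $\Fcal_{\ast,1}$ and $\Fcal_{\ast,2}$. A useful preliminary observation is that translations act freely on the grid, so $\stab_1(\sT_{kd}) = \{\fe\}$; consequently the $\sH$-invariance constraint in Proposition~\ref{prop:representation} is vacuous and $\coor(\Fcal)$ simply collects the first-component scalar maps arising from elements of $\Fcal$. This substantially simplifies the verification of condition 3.

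First I would dispose of equivariance and the two structural conditions. Convolution commutes with translation and $\sigma$ acts coordinatewise, so both $\Fcal_{\ast,1}$ and $\Fcal_{\ast,2}$ are $\sT_{kd}$-equivariant. Condition 1 holds by the stipulated choice of $\Gcal$. Condition 2 is a matter of parameter manipulation: substituting $b\bm x+c\bm 1$ and multiplying by $a$, the outer scalar absorbs into $v$, the scaling $b$ absorbs into the filter by linearity of convolution, and the $c\bm 1$ shift absorbs into the bias through the identity $\bm w \ast (c\bm 1) = c(\sum_{\bm k} w_{\bm k})\bm 1$.

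Condition 3 is verified by inspection of the first coordinate. For $\Fcal_{\ast,1}$ this reads $v\sigma(\sum_{\bm k} w_{\bm k} x_{\bm k} + b)$: concentrating the filter on index $\bm 1$ yields $v\sigma(ax_1+b)$, whose closure contains a one-dimensional well function by the standing assumption on $\sigma$; distributing the filter uniformly gives $v\sigma(a\sum_{\bm k} x_{\bm k}+b)$, which by Remark~\ref{rmk:symmetrc-invariant-well-function} is a symmetric invariant well function. The parallel reductions for $\Fcal_{\ast,2}$ produce $w_{\bm 1}\sigma(x_1+b)$ and $\sum_{\bm k}\sigma(x_{\bm k}+b)$, and the same reasoning places both required well functions in $\chbar(\coor(\Fcal_{\ast,2}))$.

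The main obstacle is showing that $\Fcal$ resolves $\sT_{kd}$ in the sense of Definition~\ref{defn:resolvence}. The perturbation property is the mild half: given $\bm x \neq \bm y$ with a coincident coordinate $x_i=y_i$, a coordinate-zooming $u^\otimes$ composed with a filter whose support avoids the preimage of index $i$ produces distinct $i$-th outputs on $u^\otimes(\bm x)$ and $u^\otimes(\bm y)$. The harder part is $\sT_{kd}$-transversal transitivity: I would take the transversal $A$ to consist of permutations fixing the distinguished grid index $\bm 1$, whose cardinality matches $|\sS|/|\sT_{kd}|$. For two adjacent transversal elements $\fa=(i~j)\fb$, the common boundary $\partial Q_\fa \cap \partial Q_\fb$ is nonempty (it contains points whose coordinates at positions $\fa^{-1}(i)$ and $\fa^{-1}(j)$ agree), and a filter $\bm w$ whose convolution values at positions $i$ and $j$ differ produces an $\bm f\in\Fcal$ witnessing direct $\Fcal$-connectedness. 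Since transpositions generate $\sS$ modulo $\sT_{kd}$, chaining such steps across $A$ furnishes the required connectivity. With every hypothesis verified, Theorem~\ref{thm:main2} then yields the $\sT_{kd}$ UAP for both $\Hcalode(\Fcal_{\ast,1},\Gcal)$ and $\Hcalode(\Fcal_{\ast,2},\Gcal)$.
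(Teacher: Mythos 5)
Your proposal instantiates Theorem~\ref{thm:main2} with $\sG = \sT_{kd}$ and verifies the same suite of conditions the paper does, so the overall approach coincides: the paper's Section~4.4 preamble dispatches the well-function and covering conditions via Remark~\ref{rmk:symmetrc-invariant-well-function}, and the proof of Corollary~\ref{cor:shift-invar-small} likewise reduces to checking the perturbation property and transversal transitivity by choosing indicator filters $\bm w = \bm e_i$ to route a distinguishing coordinate into the relevant position.

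One point where your write-up is vaguer than the paper's argument and could mislead a reader: in both the perturbation and direct-connectivity steps, routing a distinguishing coordinate into position $i$ with a shifted filter is not by itself sufficient, since $\sigma$ is a well function with a whole interval of zeros and is therefore far from injective. The paper handles this explicitly with the bias parameter: after isolating $x_j\neq y_j$, it chooses $b$ so that $\sigma(x_j+b)=0$ while $\sigma(y_j+b)\neq 0$ (the ``well function fact'' stated at the top of Section~4.4). You lean on the coordinate-zooming function $u^\otimes$ instead; that can also be made to work, since $u$ is free to place the two differing values wherever one likes, but you should say so explicitly — otherwise the step ``produces distinct $i$-th outputs'' is asserted rather than argued. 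The same caveat applies to your filter choice in the transversal-transitivity step: you need the postcomposed $\sigma$ (or precomposed, for $\Fcal_{\ast,2}$) to separate the two convolution values, not just the raw convolution.
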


{
The approximation ability of convolutional neural networks has been widely studied in the literature
\cite{
    yang2020approximation,bolcskei2019optimal,bao2019approx,zhou2020universality,
    yarotsky2018optimal,yarotsky2018universal,petersen2020equivalence}.
Thus, we should compare these with the results presented here.
First, we note that the shift invariance we discussed is in the index set.
Instead, \cite{yang2020approximation} uses a (non-symmetric) ReLU feed-forward neural network to approximate
shift-invariance mapping in $n$ dimensions via a wavelet-like construction.
Second, in each layer we use a periodic boundary condition
so that the analyzed architecture is exactly shift-invariant.
If zero boundary condition is adopted, then the architecture will not be strictly shift-invariant.
As a consequence, the boundary condition will deteriorate the interior symmetry structure when the network is deep enough.
For example, while \cite{zhou2020universality,oono2019approximation,bao2019approx}
studied approximation properties of convolutional neural networks,
due to the aforementioned boundary conditions the approximation results are established in the absence of symmetry requirements.
Third and most importantly,
our approximation results do not require a specific architecture,
and the restriction on the form of the controlled dynamics is minimal.
In contrast, for example, \cite{yarotsky2018universal} uses a intrinsic polynomial layer to prove a similar result.

It is important to emphasize that the convolutional control families we consider here have filters sizes equal to the input
or hidden state sizes, while requiring only 1 channel.
Thus, this is different from many of the aforementioned settings where small filters or multiple channels may be considered.
The reason we apply our results to this simple setting is because we focus on symmetry considerations, and these are
in a sense minimal constructions of convolutional networks that can achieve universal approximation of shift-invariant function.
However, the idea here can be generalized to filters with variable lengths.
These require additional technical arguments and will be addressed in future work.

Lastly, we note that \cite{petersen2020equivalence} built a connection between fully connected neural networks
and convolutional neural networks in order to translate approximation results for one to the other.
There, the authors exploit a suitable transformation representative,
which corresponds to the $\coor$ operator we introduced in this paper.
In this sense, Theorem~\ref{thm:main} extends the notion of representatives to
deduce universal approximations under more general types of symmetries, including but not limited to shift symmetry.
Most previous works on the approximation theory of CNNs focus on non-residual convolutional neural networks.
To the best of our knowledge, Corollary~\ref{cor:shift-invar-small} gives a first
guarantee of universal approximation of shift-invariant functions using continuous-time (residual) CNNs.
}

\subsubsection{Full Permutation Invariant Architectures}
\label{sec:applications:symmetric}

Besides the well-known convolutional neural network,
we demonstrate that our framework can be used to derive universal
approximation results under other symmetry settings.
Here, we focus on functions possessing full permutation invariance, i.e.
those invariant to arbitrary rearrangement of the coordinates of their input vectors.
These are also known as ``set functions'',
since their outputs only depend on the collection of input coordinate values
but not on their order of arrangement.
Learning functions that possess such invariance structures is important in many applications,
including population statistics, anomaly detection, cosmology \cite{NIPS2017_f22e4747}.
One can see that this invariance requirement is strong, and thus structures obeying such invariance
tend to be limited in flexibility in approximation.
As motivated earlier, composition or dynamics provides a way to expand complexity of hypothesis spaces
while respecting possibly very restrictive invariances.
Our goal is to prove a general sufficient result for universal approximation of full permutation invariant functions
through dynamics, thereby providing a guidance to practical construction of neural network architectures
respecting these symmetries.

In our framework, the symmetry group under consideration is the full permutation group, i.e. $\sG = \sS$.
Recall that in this case, we only need to construct a control family that verify conditions 2,3 in Theorem~\ref{thm:main}.
One can show that any of the following choices are sufficient:
\begin{align}
    \Fcal_{s} :=&
    \{
        a \sigma(w\bm x + v \Sigma\bm x + b\bm 1)
        :
        a, w, v,b \in \R
    \}, \label{eq:Fs1}\\
    \Fcal_{s} :=&
    \{
        a\sigma(w\bm x + c \bm 1) + b\Sigma(\sigma(v\bm x + d\bm 1))
        :
        a, b, w, v,d \in \R
    \}, \label{eq:Fs2}
\end{align}
where $\Sigma$ is a matrix of all 1s, therefore $\Sigma\bm x = (x_1+\cdots + x_n) \bm 1$.


Most existing work on the universal approximation of permutation invariant functions
by neural networks focus on shallow or non-residual architectures \cite{sannai2019universal, maron2019universality,ravanbakhsh2020universal}.
Thus, approximation there is achieved by allowing width to be large enough.
In contrast, we investigate how to build approximators with symmetry through composition or dynamics.
Our method shows that for a fixed width,
many constructions of the layer architecture can achieve universal approximation in this manner.
We highlight that our theory can handle the permutation invariance generated by both ``intrinsic'' equivariant layers
in the sense discussed in \cite{sannai2019universal}, or by averaging in \cite{ravanbakhsh2020universal,yarotsky2018universal}.

Concretely, we show that our results show that many popular
permutation invariant architectures can achieve universal approximation
at \text{fixed width} in its deep, residual form.
In fact, the control families~\eqref{eq:Fs1}-\eqref{eq:Fs2}
can be viewed as residual versions of the DeepSets~\cite{NIPS2017_f22e4747} and PointNet~\cite{charles_pointnet_2017} architectures
(see also~\cite{germain_deepsets_2022} for applications to PDEs),
and we can verify that they indeed induce uniform approximation
through composition. One caveat is that in these works, it is sometimes
suggested to use the $\max$ operator in place of the $\Sigma$ operator.
Our results do not currently apply to this case.
For example, we can show that the variant
$
    \Fcal_{s,\max} := \{v \sigma(a\bm x +b \max(\bm x)\bm 1+ c\bm 1 ):v,a,b,c \in \R\}
$,
where $\max(\bm x) = \max_i x_i$
does not possess $\sS$ UAP for arbitrary choice of terminal family
satisfying the conditions of Theorem~\ref{thm:main}.
Concretely, we may choose $m=1$, $\Gcal = \{g\}$, where
$ g(\bm x) = \max(\bm x)$.
Then, for any $\bm x, \bm y \in Q$ such that $x_1 > y_1$,
one can show that for any $\bm \varphi \in \Acal_{\Fcal_{s,\max}}$,
it holds that $\bm \varphi(\bm x), \bm \varphi(\bm y) \in Q$
and $[\bm \varphi(\bm x)]_1 > [\bm \varphi(\bm y)]_1$.
Therefore, this family cannot approximate functions such as
$\min(\bm x) = \min_{i} x_i$.
This argument does not rule out universal approximation
for other choices of terminal families.

As a further application of our results,
we can show that the Janossy pooling type of variants introduced in \cite{wagstaff2022universal}
can also induce universal approximation in its residual form.
This corresponds to the following control families
(order 1 and order 2 Janossy pooling, respectively)
\begin{align}
    \Fcal_{s,Ja,1} &:= \{ v\sigma(a\bm x + b\sum_{i=1}^n \phi(x_i) + c\bm 1): v,a,b,c \in \R\}, \label{eq:janossy_1}\\
    \Fcal_{s,Ja,2} &:= \{ v\sigma(a\bm x + b\sum_{i=1}^n \phi(x_i,x_j) + c \bm 1): v,a,b,c \in \R\},
    \label{eq:janossy_2}
\end{align}
where the $\phi$ in each case is some chosen scalar valued function.
We can verify that $\Fcal_{s,Ja,1}$ induces the $\sS$ UAP
if $\phi$ is Lipschitz and there exists a neighborhood $U$ of $\phi(0)$
such that $\phi^{-1}(U)$ is bounded.
In particular, any sigmoid function satisfy this condition.
Similarly, $\Fcal_{s,Ja,2}$ induces $\sS$ UAP
if $\phi$ is symmetric, Lipschitz and $\phi(z,0)$ satisfies the previous condition.

Thus, we arrive at the following corollary that shows that all
these architectures in their residual form possesses $\sS$ UAP,
and checking them is a simple application of our results.
The verification details are presented in Section~\ref{sec:proofs:verification}.

\begin{corollary}
    \label{cor:symmetric}
    Choosing $\Fcal_s$ as any of Eq.~\eqref{eq:Fs1}-\eqref{eq:janossy_2},
    $\Hcalode(\Fcal_s, \Gcal)$ possesses the $\sS$ UAP.
\end{corollary}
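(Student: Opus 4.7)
The plan is to apply Theorem~\ref{thm:main} to each of the four candidate families $\Fcal_s$ from \eqref{eq:Fs1}--\eqref{eq:janossy_2} in turn, so the task reduces to verifying the $\sS$-equivariance precondition together with the three numbered hypotheses. Condition~1 is automatic from the standing choice of terminal family $\Gcal$ fixed at the start of Section~\ref{sec:application}. For the algebraic portion of condition~2, I would unfold each template: any $\fs\in\sS$ commutes with the coordinate-wise $\sigma$, with the aggregate $\Sigma$, and with the Janossy sums $\sum_i \phi(x_i)$ or $\sum_i \phi(x_i,x_j)$, which gives $\sS$-equivariance; substituting $b\bm x+c\bm 1$ into a generic $\bm f\in\Fcal_s$ and multiplying by $a$ reproduces the same template after reparameterizing the scalars inside the $\sigma$-argument, with $\phi$ held fixed.

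Condition~3 is where I would spend most of the effort, since it asks for two separate witnesses inside $\chbar(\coor(\Fcal_s))$. By Proposition~\ref{prop:representation}, $\coor$ just reads off the first output coordinate, so for $\Fcal_{s,1}$ we obtain $a\sigma(wx_1+v\sum_k x_k+b)$, for $\Fcal_{s,2}$ we obtain $a\sigma(wx_1+c)+b\sum_k \sigma(vx_k+d)$, and analogous expressions for the two Janossy cases. To exhibit the one-dimensional witness $\bm x\mapsto h(x_1)$ I would kill the aggregate branch (set $v=0$ in $\Fcal_{s,1}$, or $b=0$ in the Janossy families) and invoke the standing assumption that $\chbar(\{w\sigma(a\cdot+b)\})$ contains a one-dimensional well function. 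To exhibit the symmetric invariant well function I would instead isolate the aggregate: in $\Fcal_{s,1}$, setting $w=0$ gives $a\sigma(v\sum_k x_k+b)$, of the form $h(\sum_k x_k)$, covered by Remark~\ref{rmk:symmetrc-invariant-well-function}; in $\Fcal_{s,2}$, setting $a=0$ gives $b\sum_k \sigma(vx_k+d)$, of the form $\sum_k h(x_k)$, again covered by that remark; for the Janossy cases one sets $a=0$ and verifies the three clauses of Definition~\ref{def:gen_well_fn} directly, using the growth hypothesis that $\phi^{-1}(U)$ is bounded for some neighborhood $U$ of $\phi(0)$ (and, in the pairwise case, the symmetry of $\phi$) to ensure that the zero set is confined to a box and that the function becomes nonzero when one coordinate is driven to infinity.

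The hard part will be the perturbation property inside condition~2. The guiding intuition is that the aggregate term is exactly the ingredient absent from the coordinate-zoom counterexample after Theorem~\ref{thm:main}, and it is that ingredient which breaks the rigidity of purely coordinate-wise flows. Given $\bm x,\bm y$ with $\overline{s}(\bm x,\bm y)\neq 0$, I would pick a strictly convex increasing $u$ so that the first moments of the two multisets separate, i.e. $\sum_k u(x_k)\neq \sum_k u(y_k)$; then at any shared index $i$, choosing parameters so that the one-variable map $t\mapsto a\sigma(wu(x_i)+vt+b)$ is non-constant near the two aggregate values yields $[\bm f(u^{\otimes}(\bm x))]_i \neq [\bm f(u^{\otimes}(\bm y))]_i$ as required. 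The analogous recipe works for $\Fcal_{s,2}$ with $\sum_k \sigma(vu(x_k)+d)$ in place of the linear aggregate, and for the Janossy families with $\sum_k \phi(u(x_k))$ or $\sum_k \phi(u(x_i),u(x_k))$ in that role; in each case Lipschitz continuity of $\phi$ combined with a suitable convex $u$ supplies the separation. Once these four verifications are assembled, Theorem~\ref{thm:main} gives the $\sS$ UAP in every case, yielding Corollary~\ref{cor:symmetric}; the detailed bookkeeping for each family is what Section~\ref{sec:proofs:verification} is meant to carry out.
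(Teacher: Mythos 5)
Your proposal is correct and follows essentially the same route as the paper: apply Theorem~\ref{thm:main} (equivalently Theorem~\ref{thm:approx-dynamical}), note that condition~1 is handled by the standing choice of $\Gcal$ and that the scaling/translation and $\sS$-equivariance parts of condition~2 are immediate from the templates, extract the one-dimensional and symmetric invariant well functions from $\coor(\Fcal_s)$ by killing the coordinate-wise branch or the aggregate branch respectively, and then establish the perturbation property by producing a coordinate zoom $u$ that separates the aggregates $\sum_k u(x_k)\neq\sum_k u(y_k)$ (resp.\ $\sum_k\sigma(vu(x_k)+d)$ or $\sum_k\phi(u(x_k))$) and then shifting $b$ so that $\sigma$ distinguishes the two resulting arguments. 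One small imprecision: in the perturbation step you should lean explicitly on the $\sS$-distinctness of $\bm x,\bm y$ (the context in which the perturbation property is invoked) to guarantee that such a separating $u$ exists — not every strictly convex increasing $u$ will do, but some coordinate zoom always does whenever the two multisets differ, which is exactly the fact the paper records before its verifications.
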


\begin{remark}
    It is known that when the hidden dimension
    is too small, then universal approximation through PointNet/DeepSet
    and their variants are not possible, see \cite[Theorem 20]{wagstaff2022universal}.
    However, we consider in this paper residual structures,
    which do not appear to have such a constraint.
    It is also interesting to see if the above results can
    be extended to probabilistic settings such as those
    considered in~\cite{bloem-reddy_probabilistic_2020}
    and the combination of sum-based and LSTM-based
    symmetric architectures proposed in~\cite{vinyals_order_2015}.
\end{remark}

\subsubsection{Product Permutation Invariant Architectures}
\label{sec:application:product}
Besides the full symmetric group,
one is often interested in functions which are invariant
to certain subgroups of $\sS$ distinct from those generated by simple shifts.
These symmetry requirements arise naturally in a number of applications in computational chemistry
and materials science.
As an example, to specify a crystal lattice consisting of a number of atomic species,
a convenient representation is in the form of the crystallographic information file (CIF)~\cite{hall1991crystallographic}.
Examples of partial features under
the CIF representations of crystal lattices are shown in Fig~\ref{fig:crystal_latt}.
Observe that the rows and columns can be rearranged
without affecting the essential structure it represents.
Thus, quantities (energy, band-gap) that depend on such structures
are invariant to these permutations as well.
This can be effectively used for property
prediction~\cite{xieCrystalGraphConvolutional2018,chenGraphNetworksUniversal2019}
or inverse design~\cite{ren2020inverse}.

\begin{figure}[!h]
    \centering
    \includegraphics[width=\textwidth]{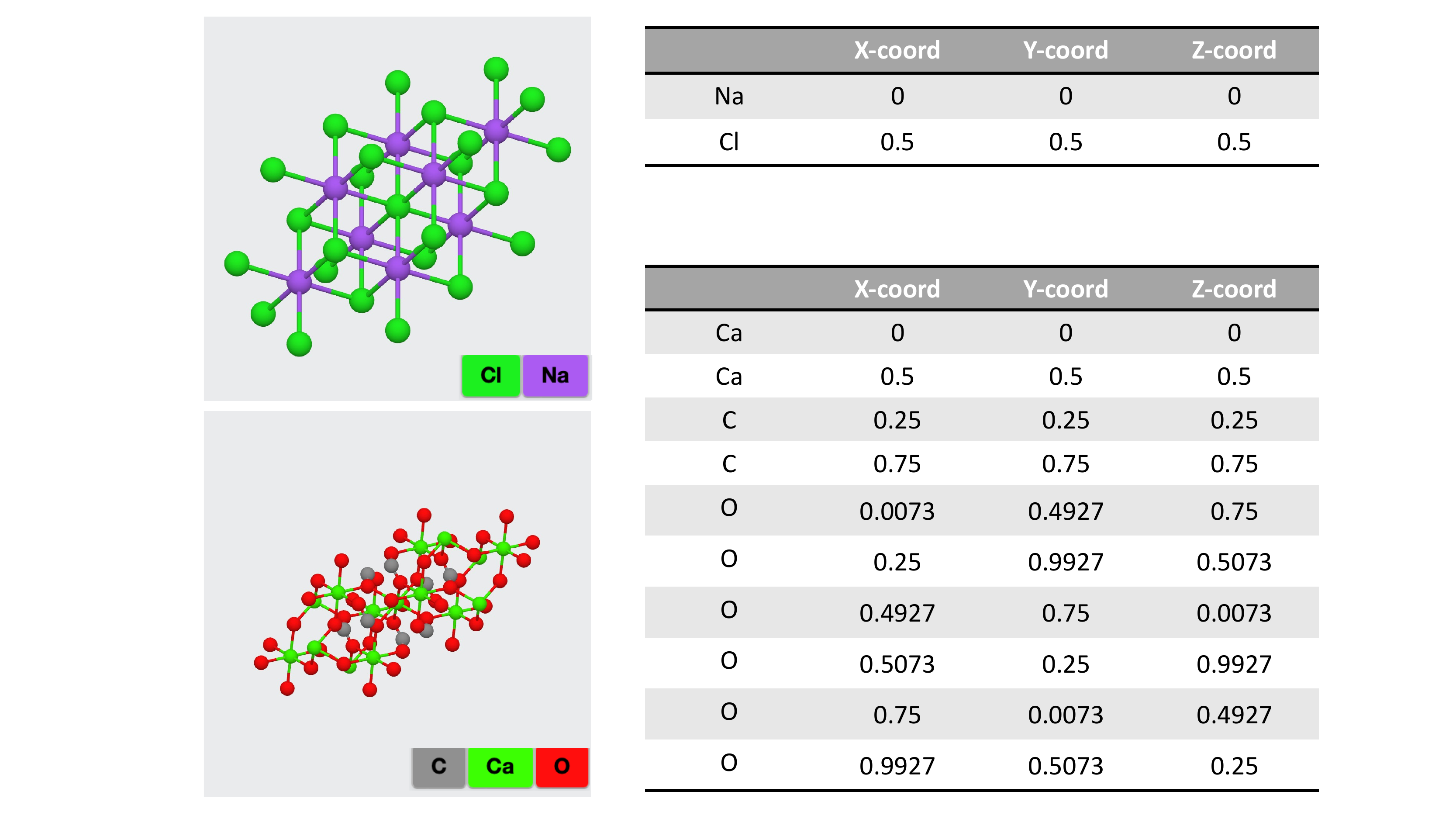}

    \caption{
        Structure of sodium chloride (top) and calcium carbonate (bottom) in CIF form.
        Coordinates are fractional,
        relative to the  crystal lattice vectors and
        atoms are listed in each unit cell.
        Observe that the rows and columns can be rearranged without affecting
        the structure it represents, since the order of the atoms
        and the order of the relative coordinate axes do not matter,
        provided the property of interest are rotation and reflection symmetric.
        Thus, quantities (density, energy, band-gap) that depend on such structures
        are invariant to product permutation in two dimensions, at least on these
        coordinate features.
        Graphics are obtained from the materials project~\cite{osti_1199028,osti_1207669}.
    }
    \label{fig:crystal_latt}
\end{figure}

If we flatten the input data into a vector of dimension $n$,
such permutations forms a subgroup of $\sS$.
We call this group a {\it product permutation group}
and functions invariant with respect to it are collectively referred to
as {\it product permutation invariant functions}.
We now give their precise definitions.

Consider the direct product of permutation acting on a double index $(i,j)$
\begin{equation}
(\fa,\fb)(i,j) = (\fa(i),\fb(j)).
\end{equation}
The collection of all $(\fa,\fb)$ forms the $2D$
product permutation group,
denoted as $\sS_{n_1}\times \sS_{n_2}$, or simply $\sS_{2D}$,
which is a subgroup of $\sS$ where $n=n_1n_2$.
Properties such as crystal structure induced ones
are invariant with respect to this permutation group if the data is
presented as in Fig.~\ref{fig:crystal_latt}.

Now, a simple way to build a control family that is
equivariant with respect to the $2D$ product permutation group is
\begin{equation}
    \label{eq:F-s-2D-1}
    \Fcal_{s,2D,1} = \{ v \sigma(w_0 \bm x + w_{r,1} \Sigma_{r,1}\bm x + w_{c,1} \Sigma_{c,1} \bm x +  c\bm 1): v,w_{0}, w_{r,1},w_{c,1}, c \in \mathbb R\}.
\end{equation}
Here $\Sigma_{r,1}$ and $\Sigma_{c,1}$ denotes the row and column sum of a tensor $\bm x$,
\begin{equation}
    [\Sigma_{r,1}\bm x]_{i,j} = \sum_{j'} x_{i,j'}.
\end{equation}
\begin{equation}
    [\Sigma_{c,1}\bm x]_{i,j} = \sum_{i'} x_{i',j}.
\end{equation}
Furthermore, one can consider the second order variant
\begin{equation}
    \label{eq:F-s-2D-2}
    \begin{split}
    \Fcal_{s,2D,2} = \{ v \sigma(w_0 \bm x + w_{r,1} \Sigma_{r,1}\bm x + w_{c,1} \Sigma_{c,1} \bm x +  w_{r,2} & \Sigma_{r,2}\bm x + w_{c,2} \Sigma_{c,2} \bm x+  c\bm 1 ) :\\ &v, w_{0}, w_{r,1}, w_{r,2}, w_{c,1}, w_{c,2},c \in \R\},
    \end{split}
\end{equation}
where
\begin{equation}
    [\Sigma_{r,2}\bm x]_{i,j} = \sum_{j',j''} x_{i,j'}x_{i,j''},
\end{equation}
\begin{equation}
    [\Sigma_{c,2}\bm x]_{i,j} = \sum_{i',i''} x_{i',j}x_{i'',j}.
\end{equation}
Clearly, it holds that $\Fcal_{s,2D,1} \subset \Fcal_{s,2D,2}.$,
and hence universal approximation of the former implies that
of the latter.
We can use our results to deduce the following UAP
propoerty for these architectures.

\begin{corollary}
\label{cor:product-permutation}
The dynamical hypothesis spaces
$\Hcalode(\Fcal_{s,2D,1}, \Gcal)$
and thus
$\Hcalode(\Fcal_{s,2D,2}, \Gcal)$
possess the $\sS_{2D}$ UAP.
\end{corollary}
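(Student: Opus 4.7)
The plan is to apply Theorem~\ref{thm:main2} with $\sG=\sS_{2D}$, control family $\Fcal_{s,2D,1}$, and the given $\Gcal$; since $\Fcal_{s,2D,1}\subset\Fcal_{s,2D,2}$ (set $w_{r,2}=w_{c,2}=0$), one has $\Hcalode(\Fcal_{s,2D,1},\Gcal)\subset\Hcalode(\Fcal_{s,2D,2},\Gcal)$, so the conclusion for the richer family is automatic. Equivariance under $\sS_{2D}$ of every element of $\Fcal_{s,2D,1}$ is immediate, since $\Sigma_{r,1}$ and $\Sigma_{c,1}$ commute with row and column permutations and $\bm{1}$ is fixed. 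Condition~1 of Theorem~\ref{thm:main2} holds by the standing assumption on $\Gcal$, and Condition~2 (scaling and translation along $\bm{1}$) follows by direct manipulation of the free parameters $v,w_0,w_{r,1},w_{c,1},c$.

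For the resolving property of Definition~\ref{defn:resolvence}, I would verify the two ingredients separately. The perturbation property follows by choosing $\bm f$ with $w_0\neq 0$ and $w_{r,1}=w_{c,1}=0$, so that $\bm f$ acts coordinate-wise, and composing with a coordinate zooming $u^{\otimes}$ whose slope at the shared coordinate value differs from its slope at the other coordinates; this breaks the coincidence on the affected index. For $\sS_{2D}$-transversal transitivity I would fix a transversal $A$ of $\sS_{2D}$ in $\sS$ (e.g.\ one picked by a lexicographic rule on $(R_i,C_j)$) and, for each transposition-related pair $\fa,\fb\in A$, exhibit a boundary point $\bm z\in\partial Q_{\fa}\cap\partial Q_{\fb}$ together with an $\bm f\in\Fcal_{s,2D,1}$ whose row- and column-sum contributions at the two tied indices differ, using the identity term in combination with the $\Sigma_{r,1},\Sigma_{c,1}$ terms to produce the required separation.

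Condition~3 is the main technical hurdle. The one-dimensional well function $\bm x\mapsto h(x_1)$ is obtained for free by setting $w_{r,1}=w_{c,1}=0$ to get $v\sigma(w_0 x_{1,1}+c)$ and invoking the standing assumption that $\chbar(\{w\sigma(a\cdot+b)\})$ contains a one-dimensional well function. The symmetric invariant well function is the hard part: every element of $\coor(\Fcal_{s,2D,1})$ has the form $v\sigma(w_0 x_{1,1}+w_{r,1}R_1(\bm x)+w_{c,1}C_1(\bm x)+c)$, depending only on the triple $(x_{1,1},R_1,C_1)$, and this dependence is preserved under finite convex combinations and compact-convergence limits. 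Thus a naive construction cannot produce a genuinely $\sS$-invariant non-constant function. I therefore expect this step to be routed through the more general Proposition~\ref{thm:level-set} alluded to in the text, whose level-set-based sufficient condition should admit a relaxation of the well-function $\sS$-invariance to $\sG$-invariance (here $\sS_{2D}$), the latter being realizable from the combined row- and column-sum terms. This relaxation, or an equivalent indirect argument using composition to emulate full symmetrization at the level of attainable flows rather than of $\coor(\Fcal)$ itself, is the main obstacle and is what I expect to require the most care.
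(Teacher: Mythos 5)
Your proposed verification of the perturbation property does not work, and this is the crux of the difference with the paper. You suggest taking $\bm f$ with $w_0\neq 0$, $w_{r,1}=w_{c,1}=0$, so that $\bm f$ is coordinate-wise, and then composing with a coordinate zooming $u^{\otimes}$. But both $u^{\otimes}$ and such an $\bm f$ act coordinate-wise: if $x_{i,j}=y_{i,j}$, then $u(x_{i,j})=u(y_{i,j})$ and hence $[\bm f(u^{\otimes}(\bm x))]_{i,j}=v\sigma(w_0 u(x_{i,j})+c)=[\bm f(u^{\otimes}(\bm y))]_{i,j}$, regardless of the zoom's ``slope at other coordinates''---a purely coordinate-wise map simply cannot see those other coordinates. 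The perturbation property requires $\bm f$ to register information about the remaining entries, which is exactly what the row- and column-sum terms provide. The paper's verification is built around this: it assumes, toward a contradiction, that all the row- and column-sum combinations $c_1\sum_{i'}u(x_{i',j_1})+c_2\sum_{j'}u(x_{i_1,j'})$ match between $\bm x$ and $\bm y$ for every pair of tied indices and every zoom $u$, deduces first that corresponding row and column multisets coincide, and from that concludes (after aligning one tied index to $(1,1)$) that both $\bm x$ and $\bm y$ fail to be in general position, contradicting $\overline{s}(\bm x,\bm y)>0$. You need an argument of this multiset-comparison flavor, with $w_{r,1}$ and $w_{c,1}$ nonzero.

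Regarding Condition~3, your observation that every element of $\coor(\Fcal_{s,2D,1})$ depends only on $(x_{1,1},R_1,C_1)$, a dependence preserved by convex combinations and compact limits, is a legitimate concern about the fully $\sS$-invariant symmetric well function. The paper does not engage with it: the verification section simply asserts that Conditions~1 and~2 of Theorems~\ref{thm:approx-dynamical} and~\ref{thm:approx-dynamical-general} ``are all satisfied as a consequence of Remark~\ref{rmk:symmetrc-invariant-well-function}'' and then proves only the perturbation property and transversal transitivity. So you have identified a point the paper glosses over rather than resolves; however, your proposal also leaves it open (you describe it as ``the main obstacle'' and suggest routing it through Proposition~\ref{thm:level-set} without doing so), so your write-up does not close the gap either. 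As it stands, the proposal is incomplete on Condition~3 and incorrect on the perturbation property; the transversal-transitivity sketch and the reduction $\Fcal_{s,2D,1}\subset\Fcal_{s,2D,2}$ match the paper.
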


The above result can be generalized to higher dimensions.
Consider the $kD$ product permutation group
$\sS_{kD} := \sS_{n_1} \times \sS_{n_2} \times \sS_{n_k}$,
where the element $\bm \fa = (\fa_1,\cdots,\fa_k)$ acts on
a multi-index $\bm i = (i_1,\cdots,i_k)$ by
\begin{equation}
    \bm \fa(\bm i) = (\fa_1(i_1),\cdots,\fa_k(i_k)).
\end{equation}
We first extend $\Sigma_{r,1}$ and $\Sigma_{c,1}$ to higher dimensions.
Define
\begin{equation}
    [\Sigma_{s,1}\bm x]_{\bm i} = \sum_{\bm j: j_s = i_s} x_{\bm j},
\end{equation}
and similarly,
\begin{equation}
    [\Sigma_{s,2}\bm x]_{\bm i} = \sum_{\bm j, \bm j': j_s = j_s' = i_s} x_{\bm j}x_{\bm j'}.
\end{equation}
Next, we define
\begin{equation}
    \label{eq:F-s-kd-1}
    \Fcal_{s,kD,1} = \{ v \sigma(w_0 \bm x + \sum_{s=1}^k w_{s,1} \Sigma_{s,1}\bm x +  c\bm 1) : v, w_0, w_{s,1}, c \in \R \},
\end{equation}
and
\begin{equation}
    \label{eq:F-s-kd-2}
    \Fcal_{s,kD,2} = \{ v \sigma(w_0 \bm x + \sum_{s=1}^k w_{s,1} \Sigma_{s,1} \bm x + w_{s,2} \Sigma_{s,2}\bm x +  c\bm 1): v, w_0,w_{s,1}, w_{s,2},c \in \R\}.
\end{equation}
Then, an analagous approximation results below holds for high-dimensional product
permutation symmetric functions.
The proof is identical to the $2D$ case and is hence omitted.
\begin{corollary}
\label{cor:product-permutation_nd}
The dynamical hypothesis spaces
$\Hcalode(\Fcal_{s,kD,1}, \Gcal)$
and thus
$\Hcalode(\Fcal_{s,kD,2}, \Gcal)$
possess the $\sS_{kD}$ UAP for $k\geq 2$.
\end{corollary}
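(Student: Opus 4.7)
The plan is to invoke Theorem~\ref{thm:main2} with the transitive subgroup $\sG = \sS_{kD} \leq \sS$. Since $\Fcal_{s,kD,1} \subset \Fcal_{s,kD,2}$ (take the quadratic weights $w_{s,2}=0$), any attainable set built from $\Fcal_{s,kD,1}$ is also attainable from $\Fcal_{s,kD,2}$, and hence $\sS_{kD}$ UAP of $\Hcalode(\Fcal_{s,kD,1},\Gcal)$ automatically implies that of $\Hcalode(\Fcal_{s,kD,2},\Gcal)$. Thus it suffices to verify the hypotheses of Theorem~\ref{thm:main2} for the smaller family $\Fcal_{s,kD,1}$.

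I would first dispatch the routine conditions. Transitivity of $\sS_{kD}$ on the multi-index set is immediate, as any $\bm{i}$ can be sent to any $\bm{j}$ coordinate-wise. The $\sS_{kD}$-equivariance of $\Fcal_{s,kD,1}$ follows because each axial sum $\Sigma_{s,1}$ commutes with the product permutation action (summing along axis $s$ is invariant under permutations of any axis $s' \neq s$ and covariant under permutations of axis $s$). For scaling and translation invariance along $\bm 1$, the key observation is that $\Sigma_{s,1}(c\bm 1) = c\bigl(\prod_{s'\neq s} n_{s'}\bigr)\bm 1$, so the transformation $\bm f \mapsto a\bm f(b\,\cdot + c\bm 1)$ only reshuffles the scalar coefficients $v,w_0,w_{s,1},c$, keeping us inside $\Fcal_{s,kD,1}$. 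Condition 1 (range covering) is built into the choice of $\Gcal$ as in the preceding corollaries. For the coordinate-only well function in condition 3, set all $w_{s,1}=0$; then $\coor(\Fcal_{s,kD,1})$ contains $v\sigma(w_0 x_{\bm 1}+c)$, and by the standing assumption on $\sigma$ the closed convex hull contains a one-dimensional well function $h(x_1)$.

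The main obstacle, and the heart of the argument, is verifying that $\Fcal_{s,kD,1}$ \emph{resolves} $\sS_{kD}$ (Definition~\ref{defn:resolvence}). This has two parts. For the perturbation property, given $\bm x, \bm y$ with a shared coordinate at position $\bm i$, one should choose a coordinate zooming $u^{\otimes}$ and a weight profile so that at least one axial sum $\Sigma_{s,1}$, evaluated at $\bm i$, differs after zooming; because $\bm x$ and $\bm y$ must differ at some position $\bm j$, and $\bm j$ shares the $s$-axis coordinate with $\bm i$ for some $s$, this forces $[\bm f(u^\otimes(\bm x))]_{\bm i} \neq [\bm f(u^\otimes(\bm y))]_{\bm i}$ for a suitable $\bm f \in \Fcal_{s,kD,1}$. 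For $\sS_{kD}$-transversal transitivity, one should fix a transversal $A$ of $\sS_{kD}$ in $\sS$ and show that any two adjacent $\fa, \fb=(i~j)\fa \in A$ with $(i~j)\notin \sS_{kD}$ are directly $\Fcal$-connected; here, because $i,j$ lie in distinct $\sS_{kD}$-orbits of the transposition acting on indices, they must differ in at least one axial coordinate, say $s$, and then a boundary point $\bm z \in \partial Q_{\fa}\cap \partial Q_{\fb}$ can be distinguished by the $s$-th axial sum, yielding an $\bm f \in \Fcal_{s,kD,1}$ with $[\bm f(\bm z)]_i \neq [\bm f(\bm z)]_j$. The verification of the remaining symmetric invariant well function in condition 3 follows a construction analogous to the $2D$ case from Section~\ref{sec:proofs:verification}, exploiting convex combinations over the axial sums and closure under compact convergence.

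Once all conditions are in place, Theorem~\ref{thm:main2} delivers the $\sS_{kD}$ UAP for $\Hcalode(\Fcal_{s,kD,1},\Gcal)$, and the initial inclusion gives the corresponding statement for $\Hcalode(\Fcal_{s,kD,2},\Gcal)$. The most delicate step is the transversal transitivity argument: its $2D$ analogue uses the fact that a row sum and a column sum jointly discriminate any two positions not in the same $\sS_{2D}$-orbit, and the generalization to $kD$ requires essentially the same reasoning, iterated over the $k$ axes, where at least one axial sum must distinguish the two indices.
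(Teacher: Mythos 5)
Your approach is the same as the paper's: the paper simply states ``The proof is identical to the $2D$ case and is hence omitted,'' i.e.\ one verifies the hypotheses of Theorem~\ref{thm:main2} by mimicking the $2D$ verification in Section~\ref{sec:proofs:verification}, and that is exactly what you lay out. The routine conditions (equivariance, scaling/translation invariance along $\bm 1$, the coordinate-only well function from $w_{s,1}=0$, and the inclusion $\Fcal_{s,kD,1}\subset\Fcal_{s,kD,2}$) are all handled correctly.

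One step of your sketch would not survive if written out literally. In the perturbation property you claim that, given $\bm x$, $\bm y$ sharing a coordinate at multi-index $\bm i$ and differing at some $\bm j$, the index $\bm j$ ``shares the $s$-axis coordinate with $\bm i$ for some $s$.'' This is false in general: for $k\ge 2$ one can have $\bm i=(1,1,\ldots)$ and $\bm j=(2,2,\ldots)$ differ on every axis, so no axial sum $[\Sigma_{s,1}\cdot]_{\bm i}$ directly ``sees'' the entry at $\bm j$. The paper's $2D$ argument is instead a contradiction argument: one \emph{assumes} all axial sums through the shared position agree for every coordinate zooming, which forces the corresponding axial slices to agree as multisets; a short cascading argument then shows $\bm x$ or $\bm y$ fails to be in general position, contradicting $\overline{s}(\bm x,\bm y)>0$. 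The $kD$ generalization goes through that route, not the direct one you sketched. Since you do refer back to the $2D$ construction elsewhere, this is an imprecision in the summary rather than a flawed plan, but the direct ``shared axis'' shortcut should be replaced by the multiset/general-position contradiction argument.
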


To the best of our knowledge,
there is no study on the approximation theory
of residual architectures under product permutation invariance.
However, as discussed earlier,
such data symmetry structures feature in a wide variety of applications,
especially in computational chemistry and physics.
One way to deal with these symmetries in the practical literature,
at least in the case of lattice structures,
is to appeal to graphical representations
\cite{xieCrystalGraphConvolutional2018,chenGraphNetworksUniversal2019},
which extracts permutation invariant features that can be subsequently
used to predict desired material properties.
Here, the network architecture proposed are more general in the sense that
it does not rely on having an explicit graphical representation,
i.e. the data need not contain spatial coordinate information.
Instead, deep neural networks are constructed to satisfy these symmetries in a intrinsic manner.
Corollaries~\ref{cor:product-permutation} and~\ref{cor:product-permutation_nd}
establishes a first basic approximation guarantee of these networks for
modelling functions symmetric under product permutation.
A concurrent paper~\cite{tian2022tbd} studies the application
of these novel architectures for modelling structure-property
relationships in crystalline and amorphous materials.
The latter does not admit efficient graphical representations
due to disorder, and are thus less amenable to graphical
approaches.
\section{Technical Details}
\label{sec:proofs}

This section includes the postponed proofs in Section~\ref{sec:thm}.
It is useful to first give an overall sketch of the key ideas in
establishing our approximation results.

\subsection{Proof Outline}

The basic framework for approximating a invariant function $F$
by composition is to first fix an invariant function
$g$ whose range covers that of $F$.
Then, it remains to construct an equivariant
mapping $\varphi:\R^n\rightarrow\R^n$ so that
$F \approx g \circ \varphi$.
One can show that under fairly general conditions, the existence of $\varphi$
is guaranteed (Proposition~\ref{thm:level-set})

Thus, it remains to determine sufficient conditions on the control family
$\Fcal$ such that its attainable set $A_\Fcal$ can approximate arbitrary
equivariant mappings.
As in~\cite{li2019deep}, one can reduce this problem to matching
an arbitrary finite point set in $\R^n$ to another using flows
of the dynamical system.
There, the crucial property of well functions enables this:
the zero set of the well function can keep certain point fixed,
whereas the non-zero part can move points.
This then achieves the required point matching property through
repeated composition of carefully constructed flow maps.

The key difference when extending this argument to the symmetric
setting is that the induced motion on each coordinate component of a point
is no-longer independent.
For example, if $\sG=\sS$, then all coordinates of each point will
experience the same transformation.
Hence, the requirement for arbitrary point set matching must be
suitably relaxed.
It turns out that we only need to be able to match point sets
consisting of points belong to distinct $\sG$ orbits,
greatly relaxing the type of constructions required.
This is because, within the same orbit, we can use the invariant property
of $g$ and the equivariant property of $\Fcal$ to transport the point to the
desired location.
Theorem~\ref{thm:framework} establishes such conditions that are sufficient
to induce universal approximation under symmetry settings.

The rest of the results then show how the relaxed point matching property
can be induced by the attainable set $A_\Fcal$.
Theorem~\ref{thm:approx-dynamical} shows that the existence
of suitable types of well functions ensures this
in the case $\sG=\sS$, i.e. full permutation symmetry.
Then, Theorem~\ref{thm:approx-dynamical-general} extends
this to any transitive subgroup $\sG\le \sS$ by enforcing
additional conditions on the control family $\Fcal$ -
it should not be ``too symmetric'' so as to
lose the ability to drive points in a general way.
These condition is summarized in Definition~\ref{defn:resolvence}.
Namely, we need to have controls that can cross boundaries of
cross-sections (vectors in $\R^n$ with repeated coordinate values).
Conditions such as direct connectivity (Definition~\ref{defn:direct_connectivity})
serve to ensure this.

The last part of this section (Sec.~\ref{sec:proofs:verification})
contains verifications that the discussed architectures in
Section~\ref{sec:application} satisfy the assumptions of the approximation Theorems
~\ref{thm:approx-dynamical} or~\ref{thm:approx-dynamical-general}.

Having introduced the overall proof sketch, now we present step by step the
technical results.

\subsection{Approximation Framework}

We start by introducing the basic framework for
the approximation theory about invariant and
equivariant functions via composition or dynamics.
Hereafter, we fix $\sG \le \sS$ as a transitive subgroup,
and $\sG$ invariance (resp. $\sG$ equivariance)
will be shortened as invariance (resp. equivariance).

For simplicity, in this section we only consider the case $ m =1 $.  Namely, we
only consider the scalar regression setting where the approximation
target is a function $F:\R^n \to \R$.
The case $m>1$ follows similarly as in the construction in \cite[Proposition
3.8]{li2019deep}, and is hence omitted for simplicity.  We begin with the
following principle, which indicates that approximating an invariant function
can be decomposed into two steps:
\begin{enumerate}
    \item Choose a simple invariant function
    $g:\R^n \rightarrow \R$, for example, $g(\bm x) = \sum_{i=1}^n x_i$.
    \item
    Construct an equivariant hypothesis set
    $\Acal$ consisting of equivariant mappings on $\R^n$ to itself
    with enough complexity.
\end{enumerate}
Then, the set of functions
$\{ g\circ \bm \varphi : \bm \varphi \in \Acal \}$
can serve as universal approximators for invariant functions.
In the context of neural networks, $g$ will be the final layer whereas
$\bm \varphi \in\Acal$ consists of a stack of intermediate layers.
Theorem~\ref{thm:approx-dynamical} and
Theorem~\ref{thm:approx-dynamical-general} to be presented later
will deal with how the required equivariant function $\bm \varphi$ can be constructed from compositions.

Proposition~\ref{thm:level-set} below establishes that this two-step decomposition scheme is sufficient for approximation.

\begin{proposition}[Approximating Equivariant Mapping is Sufficient]
    \label{thm:level-set}
    Suppose $F\in C(\mathbb R^n)$ is an invariant function.
    Let $g$ be a Lipschitz continuous and invariant function with
    $F(\R^n) \subset g(\R^n)$.
    Then, for any compact $K\subset \R^n$ and $\varepsilon>0$,
    there exists an equivariant mapping $\bm \varphi \in C(\R^n;\R^n)$
    such that
    \begin{equation}
        \| F - g \circ \bm{\varphi}\|_{L^p(K)} \le \varepsilon.
    \end{equation}
\end{proposition}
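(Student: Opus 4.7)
The overall plan is to approximate $F$ in $L^p$ by a $\sG$-invariant step function, lift each step value to a preimage under $g$, and then assemble these preimages into a continuous equivariant mapping via a fundamental-domain construction.

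First I would use the continuity of $F$ on the compact set $K$ to partition $F(K)$ into finitely many short intervals $I_1,\dots,I_N$ of length at most $\delta$, pick representatives $c_k \in I_k$, and set $A_k := F^{-1}(I_k) \cap K$. Each $A_k$ is $\sG$-invariant by invariance of $F$, and the step function $\tilde F := \sum_k c_k \mathbf{1}_{A_k}$ satisfies $\|F-\tilde F\|_{L^p(K)} \le \delta\,|K|^{1/p}$. The range hypothesis $F(\R^n) \subset g(\R^n)$ yields preimages $\bm y_k \in g^{-1}(c_k)$ for each $k$. Next, fix a transversal $A$ for $\sG$ and take $\overline{Q_A}$ as a fundamental domain for the $\sG$-action (cf. Section~\ref{sec:prelim:group}), and construct a $\sG$-invariant partition of unity $\{\rho_k\}$ on $\R^n$ subordinate to slightly enlarged open neighborhoods $U_k \supset A_k$, obtained by averaging a standard partition over $\sG$. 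Setting $\bm\varphi_1(\bm z) := \sum_k \rho_k(\bm z)\,\bm y_k$ on $\overline{Q_A}$ and extending by equivariance, $\bm\varphi(\bm x) := \fg\,\bm\varphi_1(\fg^{-1}\bm x)$ for $\bm x \in \fg(\overline{Q_A})$, gives a mapping continuous on the interior of each chamber; by Lipschitz continuity of $g$, one then has $|g(\bm\varphi(\bm x)) - c_k| \lesssim \lip(g)\cdot\max_j|\bm y_j-\bm y_k|$ on $A_k$ outside a transition region whose measure is controlled by the overlap pattern of $\{\rho_k\}$ and can be made arbitrarily small.

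The main obstacle is that the equivariant extension is single-valued and continuous across the walls of $\overline{Q_A}$ only when $\bm\varphi_1(\bm z)$ is fixed by the stabilizer $\stab(\bm z)$ at each $\bm z \in \partial \overline{Q_A}$, which the chosen preimages $\bm y_k$ need not satisfy. I would handle this by modifying $\bm\varphi_1$ within a thin tubular neighborhood of $\partial\overline{Q_A}$, replacing it there with its local orbit average $|\stab(\bm z)|^{-1}\sum_{\fh \in \stab(\bm z)} \fh\,\bm\varphi_1(\bm z)$, smoothly interpolated with the original value via a bump in the distance to the walls. Since the walls form a finite union of measure-zero hyperplanes $\{z_i = z_j\}$, this tubular neighborhood can be chosen with arbitrarily small Lebesgue measure, so the additional $L^p$-error introduced is negligible. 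Summing the step-function error, the partition-of-unity smoothing error, and the boundary symmetrization error then yields $\|F - g\circ \bm\varphi\|_{L^p(K)} \le \varepsilon$. The most delicate step is enforcing continuity across every wall of $\partial \overline{Q_A}$ while simultaneously respecting the equivariant extension, so that the assembled $\bm\varphi$ is genuinely continuous on all of $\R^n$; this bookkeeping over the wall stratification is where most of the technical work of the proof will lie, and it is made tractable by the transitivity of $\sG$ and the measure-zero nature of the problematic set.
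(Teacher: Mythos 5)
Your overall plan — approximate $F$ by an invariant step function, lift to preimages under $g$, and glue on a fundamental domain $\overline{Q_A}$ — is in the right spirit, but the key step that makes the equivariant gluing work has a genuine gap. The paper's proof takes a slicker route: it first produces a \emph{non-equivariant} approximant $\bm u$ with $\|F - g\circ\bm u\|_{L^p(K)}$ small (by invoking the unconstrained result, \cite[Theorem 3.8]{li2019deep}), restricts $\bm u$ to the cross section $Q_A$, multiplies by a smooth cutoff $\chi$ supported inside $Q_A$ and vanishing near $\partial Q_A$, and extends equivariantly. Since the resulting $\tilde{\bm u} = \chi \bm u$ is identically zero on a neighborhood of the walls, and $\bm 0$ is fixed by every permutation, the equivariant extension is automatically well-defined and continuous; the error incurred is the $L^p$ mass of $\bm u$ near the boundary, which is controlled since it is a small-measure set. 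No compatibility conditions need to be matched.

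Your orbit-averaging fix does not achieve the same thing. Averaging $\bm\varphi_1(\bm z)$ over $\stab_\sG(\bm z)$ only enforces the constraint arising when $\fh\in\sG$ \emph{fixes} $\bm z$, i.e. $\fh(\bm z)=\bm z$. But when $\sG\neq\sS$ the transversal $A$ has more than one element, $\overline{Q_A}$ is a union of several closed chambers, and there can exist \emph{distinct} boundary points $\bm z \neq \bm z'$ in $\overline{Q_A}$ with $\fg(\bm z)=\bm z'$ for a non-identity $\fg\in\sG$. Continuity of the equivariant extension then requires the nonlocal condition $\bm\varphi_1(\bm z') = \fg\,\bm\varphi_1(\bm z)$, which pointwise stabilizer-averaging does nothing to enforce. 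For a concrete failure: take $n=3$, $\sG = \langle(1\;2\;3)\rangle$, transversal $A=\{\fe, (1\;2)\}$, so $\overline{Q_A} = \{x_1\ge x_2\ge x_3\}\cup\{x_2\ge x_1\ge x_3\}$. Then $\bm z = [2,1,1]\in\overline{Q_\fe}$ and $\bm z' = (1\;3\;2)(\bm z)=[1,2,1]\in\overline{Q_{(1\;2)}}$ are distinct boundary points in the same $\sG$-orbit, each with trivial $\sG$-stabilizer, so your averaging is the identity there and the compatibility $\bm\varphi_1(\bm z')=(1\;3\;2)\bm\varphi_1(\bm z)$ remains unenforced. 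A secondary issue is that the estimate you write, $|g(\bm\varphi(\bm x)) - c_k| \lesssim \lip(g)\cdot\max_j|\bm y_j - \bm y_k|$, does not tend to zero as you refine the partition, because nearby values $c_j, c_k$ may have far-apart preimages $\bm y_j, \bm y_k$ under $g$; the correct argument is that outside the (small-measure) transition region $\rho_k\approx 1$, so $\bm\varphi\approx\bm y_k$ and $g\circ\bm\varphi\approx c_k$. The cleanest way to repair the gluing is to adopt the paper's device: force the glued map to vanish identically on a neighborhood of $\partial\overline{Q_A}$ via a cutoff, so every compatibility condition becomes $0 = \fg\cdot 0$, and pay for this with a small-measure $L^p$ error.
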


To complete this proof, the following lemma is required.
It tells us that a $\sG$ invariant set can be decomposed into the disjoint union of cross sections,
up to a small set.
Recall the definition of the cross section $Q_{A}$ as introduced in \eqref{eq:cross-section-A}:
\begin{equation}Q_A = \cup_{\fa \in A} Q_{\fa},\qquad \text{ where } Q_{\fa} = \fa(Q).\end{equation}
\begin{definition}[General position]
    \label{defn:general-position}
    We say a vector $\bm x$ is in general position if all of its coordinates are distinct.
\end{definition}

\begin{lemma}[Partitioning the space via cross sections]
    \label{lem:cross-section}
    Given $\sG \le \sS$, the following holds
    for any $\sG$ transversal $A$:
    \begin{enumerate}
    \item For two distinct $\fg,\fg' \in \sG$, we have $\fg(Q_{A}) \cap \fg'(Q_{A}) = \emptyset$.
    \item $\displaystyle \R^n \setminus \bigcup_{\fg \in \sG} \fg(Q_{A})$ is
    the set of points that are not in general position, and thus of zero Lebesgue measure.
    \end{enumerate}
    Consequently, if $K$ is a $\sG$ invariant set,{(i.e., $\fg(K) = K$ for all $\fg \in \sG$)},
    then $K \setminus \cup_{\fg \in \sG} (K\cap\fg(Q_{A}))$ is of zero Lebesgue measure.
\end{lemma}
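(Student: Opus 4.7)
The plan is to convert both claims into statements about the set $\{\fg \fa : \fg \in \sG, \fa \in A\} \subset \sS$, and then use the transversal axioms to identify this set with $\sS$ and to show that each element appears exactly once. The basic geometric facts I will rely on are (i) $Q_{\fs_1} = Q_{\fs_2}$ iff $\fs_1 = \fs_2$, since $Q_{\fs}$ records the total ordering of coordinates induced by $\fs$, and (ii) the collection $\{Q_{\fs} : \fs \in \sS\}$ is a pairwise disjoint partition of the set of points in general position (coordinates all distinct), whose complement is a finite union of hyperplanes $\{x_i = x_j\}$ and therefore Lebesgue-null.

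For part 1, suppose toward a contradiction that $\fg(Q_A) \cap \fg'(Q_A) \neq \emptyset$ for some distinct $\fg, \fg' \in \sG$. Unwinding the definition $Q_A = \bigcup_{\fa \in A} \fa(Q)$, this gives $\fg \fa_i(Q) \cap \fg' \fa_j(Q) \neq \emptyset$ for some $\fa_i, \fa_j \in A$. By fact (i) above this forces $\fg \fa_i = \fg' \fa_j$ as permutations, hence $\fa_i \fa_j^{-1} = \fg^{-1} \fg' \in \sG$. The transversal condition $\fa_i \fa_j^{-1} \notin \sG$ for $i \neq j$ then yields $i = j$, and substituting back gives $\fg = \fg'$, contradiction.

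For part 2, I show the equality
\begin{equation}
    \bigcup_{\fg \in \sG} \fg(Q_A) \;=\; \bigcup_{\fs \in \sS} Q_{\fs}.
\end{equation}
The inclusion $\subset$ is immediate from $\fg \fa \in \sS$. For $\supset$, fix any $\fs \in \sS$; the second bullet in the transversal definition, applied with $\fb = \fs$, produces some $\fa_i \in A$ with $\fa_i \fs^{-1} \in \sG$, say $\fa_i \fs^{-1} = \fg_0$, whence $\fs = \fg_0^{-1} \fa_i$ and $Q_{\fs} = \fg_0^{-1}(\fa_i(Q)) \subset \fg_0^{-1}(Q_A)$ with $\fg_0^{-1} \in \sG$. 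By fact (ii), the right-hand side of the displayed equation is exactly the set of points in general position, so its complement in $\R^n$ is $\{x : x_i = x_j \text{ for some } i \neq j\}$, a finite union of hyperplanes of Lebesgue measure zero. Incidentally, combining parts 1 and 2 gives a counting check: $|\sG| \cdot |A| = |\sG| \cdot (|\sS|/|\sG|) = |\sS|$, consistent with each $Q_{\fs}$ being hit exactly once.

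For the consequence, no further work on $K$ is needed beyond a set-theoretic manipulation: since $K \cap \fg(Q_A) \subset K$,
\begin{equation}
    K \setminus \bigcup_{\fg \in \sG} \bigl(K \cap \fg(Q_A)\bigr)
    \;=\; K \setminus \bigcup_{\fg \in \sG} \fg(Q_A)
    \;\subset\; \R^n \setminus \bigcup_{\fg \in \sG} \fg(Q_A),
\end{equation}
and the latter is Lebesgue-null by part 2 (the hypothesis that $K$ is $\sG$-invariant is not actually required for this last conclusion, though it is natural in context). I do not anticipate a serious obstacle: the only subtle point is getting the direction of the transversal condition right so that ``$\fa_i \fa_j^{-1} \in \sG$ implies $i = j$'' is applied to the correct product, and checking that the union $\bigcup_{\fg, \fa} \fg \fa$ exhausts $\sS$ uses the second transversal axiom rather than the first.
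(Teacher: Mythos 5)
Your proof is correct and takes essentially the same route as the paper's own argument: part 1 reduces to disjointness of the elementary cross sections $Q_{\fs}$ plus the first transversal axiom, and part 2 uses the second transversal axiom to write an arbitrary $\fs\in\sS$ in the form $\fg\fa$ with $\fg\in\sG$, $\fa\in A$. Your exposition is slightly more careful than the paper's (which in its part 1 misleadingly writes ``there exists $\fa\neq\fa'\in A$'' before then concluding $\fa=\fa'$), and your remark that the $\sG$-invariance of $K$ is not actually needed for the final measure-zero assertion is a correct, minor observation.
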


\begin{proof}
    Suppose $\bm x \in \fg(Q_A) \cap \fg'(Q_A)$.
    Then, by definition there exists $\fa \neq \fa' \in A$,
    such that $\bm x \in Q_{\fg\fa} \cap Q_{\fg'\fa'}$.
    The structure of $Q_{\fa}$ immediately tells us that $\fg \fa = \fg' \fa'$.
    Since $A$ is a transversal of $\sG$,
    we have $\fg = \fg'$ and $\fa = \fa'$, leading to a contradiction.

    On the other hand, we claim that if $\bm x$ is in general position,
    then there exists $\fg$ such that $\bm x \in \fg(Q_A)$.
    The construction is straightforward:
    since $\bm x \in Q_{\fa}$ for some $\fa \in \sS$,
    then by the choice of $A$, there exists a decomposition $\fa = \fg \fb$,
    such that $\fb \in A$ and $\fg \in \sG$. Therefore, we have $\bm x \in \cup_{\fg \in \sG} \fg(Q_A)$. The reverse inclusion holds trivially.
\end{proof}

{Now we are ready to prove Proposition~\ref{thm:level-set}. The idea is simple. We first obtain $\bm u$ without the equivariance constraint, this is done by \cite[Theorem 3.8]{li2019deep}. The core of the following proof is to modify the ${\bm u}$ into our desired $\bm f$.}

\begin{proof}(Proof of Proposition~\ref{thm:level-set})
    Without loss of generality, we assume that $K$ is a $\sG$ invariant set,
    otherwise we can enlarge $K$ to make it $\sG$ invariant.
    Choose the $\sG$ transversal $A$ arbitrarily,
    it follows from Lemma~\ref{lem:cross-section} that
    $K = \cup_{\fg \in \sG} (K \cap \fg(Q_{A}))$ up to a measure zero set.
    Define $\varepsilon' := \frac{\varepsilon}{|\sG|(1+\lip g)}$,
    by results in \cite[Theorem 3.8]{li2019deep},
    for any $\varepsilon' > 0$ there exists $\bm{u}$ such that
    \begin{equation}
        \|F - g \circ \bm{u} \|_{L^p(K)} \le \varepsilon'.
    \end{equation}
    Note that $\bm{u}$ here is not necessarily equivariant, otherwise we are done.
    Now we attempt to find $\bm f$ by some kind of equivariantization on $\bm u$ as explained below.
    Since $\bm u $ is in $L^p$,
    we consider a compact set $O \subset Q_A$
    such that $\|\bm{u}\|_{L^p(Q_{A} \setminus O)} \le \varepsilon'$.
    Take a smooth truncation function $\chi \in C^{\infty}(\R^d)$,
    whose value is in $[0,1]$,
    such that $\chi|_O = 1$ and $\chi|_{Q_{A}^c} = 0$.

    For $\bm x \in \fg( Q_{A})$ with $\fg \in \sG$,
    define $\bm{f}(\bm{x}) = \fg(\tilde{\bm{u}}(\fg^{-1}(\bm x)))$,
    where $\tilde{\bm{u}} = \chi\bm{u}$ is smoothed truncated version of $\bm u$.
    Since different $\fg(Q_{\sG})$ are disjoint,
    the value of $\bm{f}$ is unique in $\cup_{\fg \in \sG}(K \cap \fg(Q_{\sG}))$.
    We set $\bm{f}(\bm x) = 0$ in the complement of $\cup_{\fg \in \sG}(K \cap \fg(Q_{A}))$.
    The truncation function $\chi$ ensures that $\bm{f}$ vanishes on the boundary of $Q_A$, therefore $\bm f$ is continuous,
    and direct verification shows that $\bm{f}$ is $\sG$ equivariant.

   It then suffices to estimate $\|F - g \circ \bm f\|_{L^p}$, since both $F$ and $g \circ \bm f$ are equivariant, it is natural and helpful to restrict our estimation on $K \cap Q_A$, since
    \begin{equation}\|F - g \circ \bm f\|_{L^p(K)} = |\sG| ~ \|F - g \circ \bm f \|_{L^p(K\cap Q_A)}.\end{equation}
To estimate the error on $K\cap Q_A$, we first estimate the error $\|\bm u - \bm f\|_{L^p(K\cap Q_A)}$. Since $\bm u $ and $\bm f|_{Q_A} = \tilde {\bm u}$ coincide on $O$, we have

    \begin{equation}
        \begin{split}
            \|\bm{u} - \bm{f}\|_{L^p(K \cap Q_{A})} & = \|\bm u - \tilde{\bm u}\|_{L^p(K \cap Q_A)} \\
            & \le \| \bm u \|_{L^p(Q_{A} \setminus O)} = \varepsilon'.
        \end{split}
    \end{equation}
The inequality holds from $\chi$ takes value in $[0,1]$.
    Since $g$ is Lipschitz,
    we have $\|g\circ \bm{u} - g\circ \bm{f}\|_{L^p(K\cap Q_{A})} \le \lip g \varepsilon'$, yielding that $\|F - g\circ\bm{f}\|_{L^p(K\cap Q_{A})} \le (1+\lip g) \varepsilon' $. We finally have $\|F - g\circ\bm{f}\|_{L^p(K)} \le (1+\lip g)|\sG| \varepsilon' = \varepsilon$.
\end{proof}

\subsection{Universal Approximation under Symmetry using Compositions}
\label{sec:thm:composition}

In this part, we discuss how to achieve universal approximation of $\sG$ equivariant mappings through composition.
We begin with the following definitions,
which can be regarded as a summary and generalization of those introduced in \cite{li2019deep}.
Roughly speaking, our strategy is to show that under mild conditions,
the universal approximation property can be reduced to transporting a finite point set $X$ to another finite point set $Y$,
with $X$ and $Y$ having the same cardinality.
However, this result does not hold if no additional requirement is imposed on $X$ and $Y$.
For example, for some $\fg \in \sG$, if $\fg(\bm x) = \bm x'$,
then the $\bm f(\bm x') = \fg(\bm f(\bm x))$ provided that $\bm f$ is $\sG$ equivariant,
imposing a constraint on $Y$.
To this end, we introduce the concept of \emph{$\sG$ distinctness},
restricting the position of the finite point set $X$ to obey such constraints.

\begin{definition}[$\sG$ distinctness]
    \label{defn:G-distinctness}
    A point set $X = \{\bm{x}^1,\cdots,\bm{x}^M\}$ is called \emph{$\sG$ distinct},
    if $\fg(\bm{x}^i) = \bm{x}^j$ implies $\fg = \fe$ and $i = j$,
    with $\fe$ being the identity element.
    In other words, a point set is $\sG$ distinct
    if and only if the $\sG$ orbit of the points are distinct.
\end{definition}


Now, we are ready to state and prove a basic result on compositional approximation under symmetry conditions,
which gives general sufficient conditions for building a complex hypothesis spaces out of potentially
simple ones through function composition.

\begin{theorem}[Universal approximation of invariant functions via composition]
    \label{thm:framework}
    Let $F \in C(\R^n)$ be invariant.
    Suppose $\Gcal$ is a family of invariant functions and
    $\Acal$ is a family of equivariant Lipschitz mappings with the following properties:
    \begin{enumerate}
        \item For any compact $K\subset \R^n$,
        there exists a Lipschitz $g \in \Gcal$ such that $F(K) \subset g(\R^n)$.
    \item
    $\Acal$ is closed under composition,
    i.e., if $\bm{f_1} \in \Acal$ and $\bm{f_2} \in \Acal$,
    then $\bm{f_1} \circ \bm{f_2} \in \Acal$.
    \item(Coordinate zooming)
    For any increasing function $v \in C(\R)$,
    compact interval $\mathbb I \subset \R$ and tolerance $\varepsilon > 0 $,
    there exists $u$ such that $u^{\otimes} \in \Acal$ and $\|u - v\|_{C(\I)} \leq \varepsilon$.
    \item(Point matching)

    For $M >0$, a transversal $A$ of $\sG$,
    a $\sG$ distinct point set $\{\bm{x}^1,\cdots,\bm{x}^M\}$ with
    $\bm{x}^i \in Q_A$, another point set $\{\bm{y}^1,\cdots,\bm{y}^M\} \subset Q_A$ and tolerance $\varepsilon>0$,

    {there exists $\bm f \in \Acal$ such that}
    \begin{enumerate}
        \item For $\bm x^i$ in general position, we have $|\bm f(\bm x^i) - \bm y^i| \le \varepsilon$.
        \item For $\bm x^i$ not in general position, we have $|\bm f(\bm x^i)| \le 1$.
    \end{enumerate}
\end{enumerate}
    Then, for any compact set $K \subset \R^n$, tolerance $\varepsilon>0$ and $p \in [1,\infty)$,
    there exists ${\bm{\varphi}} \in \Acal$ and $g \in \Gcal$ such that
    $\|F - g\circ\bm{\varphi}\|_{L^p(K)} \le \varepsilon$.
\end{theorem}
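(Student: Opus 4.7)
My plan combines three ingredients: Proposition~\ref{thm:level-set} to reduce invariant approximation to equivariant approximation; Lemma~\ref{lem:cross-section} to localize the $L^p$ estimate to one cross section; and a discretize-and-match scheme that uses properties (2)--(4) of $\Acal$ to produce the approximant. By Proposition~\ref{thm:level-set} and assumption (1), I fix a Lipschitz $g \in \Gcal$ whose range covers that of $F$ and a continuous equivariant mapping $\bm\varphi$ with $\|F - g\circ\bm\varphi\|_{L^p(K)} < \varepsilon/2$. Since $g$ is Lipschitz it suffices to construct $\bm f \in \Acal$ so that $g\circ\bm f$ is close to $g\circ\bm\varphi$; and since $g$ is $\sG$-invariant, this requires only that $\bm f(\bm x)$ lie (approximately) in the $\sG$-orbit of $\bm\varphi(\bm x)$, not that $\bm f \approx \bm\varphi$ pointwise. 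By Lemma~\ref{lem:cross-section} and the equivariance of $\bm f$ and $\bm\varphi$, the $L^p$ error on $K$ equals $|\sG|^{1/p}$ times the analogous error on $K \cap Q_A$ for any transversal $A$, so I focus on the latter.

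Next I would discretize: partition $K \cap Q_A$ into cells $\{C_i\}_{i=1}^M$ of diameter at most $\delta$, with representatives $\bm x^i \in C_i$. Because the non-general-position set is a finite union of hyperplanes, the cells whose representatives fail general position can be confined to a set of arbitrarily small Lebesgue measure and contribute negligibly to the final $L^p$ estimate. The remaining representatives lie in $Q_A$ in general position, and since $Q_A$ is a fundamental domain for the $\sG$-action they form a $\sG$-distinct point set in the sense of Definition~\ref{defn:G-distinctness}. I then define $\bm y^i$ to be the unique $Q_A$-representative of the $\sG$-orbit of $\bm\varphi(\bm x^i)$; then $\bm y^i \in Q_A$ and, by $\sG$-invariance of $g$, we have $g(\bm y^i) = g(\bm\varphi(\bm x^i))$. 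The point-matching property (4) furnishes $\bm f_1 \in \Acal$ with $|\bm f_1(\bm x^i) - \bm y^i| \le \eta$ for a tolerance $\eta$ to be chosen.

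The main obstacle is converting this pointwise matching into an $L^p$ bound, since property (4) provides no a priori control on $\lip(\bm f_1)$ between the representatives: without further work $\bm f_1$ could drift by $\lip(\bm f_1) \cdot \delta$ over each cell. I would resolve this by precomposing with a coordinate-zooming map $u^\otimes \in \Acal$ from property (3), where $u$ is strictly increasing and closely approximates a step function that collapses the coordinate intervals traversed by the cells nearly onto finitely many levels. Since $u^\otimes$ is $\sS$-equivariant and (for strictly increasing $u$) injective, it preserves both the cross-section structure of $Q_A$ and the $\sG$-distinctness of the representatives, while the diameter of each $u^\otimes(C_i)$ can be made as small as desired. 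Closure under composition (property 2) gives $\bm f := \bm f_1 \circ u^\otimes \in \Acal$. The parameters are chosen in the following order: first $\delta$ small enough that uniform continuity of $g\circ\bm\varphi$ controls the cell-wise variation; then $\eta$ small, with point matching producing $\bm f_1$ of some large but fixed Lipschitz constant $L$; and finally the zooming tolerance small enough that $L \cdot \diam(u^\otimes(C_i)) \cdot \lip(g)$ is negligible on each cell. Summing the cell contributions together with the negligible boundary term yields the required $L^p$ estimate, completing the proof.
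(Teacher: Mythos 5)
Your proposal follows essentially the same strategy as the paper's proof: Proposition~\ref{thm:level-set} to reduce to a pair $(g,\bm\varphi)$ with $g$ Lipschitz invariant and $\bm\varphi$ continuous equivariant; discretization of $K\cap Q_A$; the point-matching property to hit the orbit-representatives $\bm y^i$; a coordinate-zooming precomposition to collapse each cell onto a near-point so that the Lipschitz drift of the point-matching map is controlled; and a final three-part $L^p$ decomposition (general-position cells, non-general-position cells, margin).

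One step, as written, does not hold and needs to be corrected, though the fix is exactly what the paper does. You assert that the diameter of $u^\otimes(C_i)$ ``can be made as small as desired'' while $\{C_i\}$ is a \emph{partition} of $K\cap Q_A$, \emph{and} while $u^\otimes(\bm x^i)$ remains close to $\bm x^i$ so that the point-matching estimate $|\bm f_1(\bm x^i) - \bm y^i|\le\eta$ transfers. Both cannot hold: a continuous, strictly increasing $u$ that sends each coordinate interval $[i\delta,(i+1)\delta]$ near a single fixed level $x^i_j$ must accomplish its rise from level to level somewhere, and if the cells abut, those transition regions overlap the cells themselves, so $u([i\delta,(i+1)\delta]) = [u(i\delta),u((i+1)\delta)]$ has length at least roughly the inter-level gap. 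The paper handles this by working with the shrunken cubes $\square_{\bm i,\delta}^{\alpha}$ (equation for $\square^{\alpha}$ and the choice of piecewise-linear $\tilde u$ in Step~2), letting the transitions live in $K\setminus K^{\alpha}$, and absorbing $K\setminus K^{\alpha}$ as a set of measure tending to zero as $\alpha\to1$ — this is the precise content of the ``negligible boundary term'' you wave at. Your argument goes through once you replace ``partition into cells $C_i$'' with ``cover by disjoint shrunk cells $C_i^{\alpha}$ plus a small-measure remainder'' and make the parameter scheduling $\delta\to\alpha\to\eta\to$ zooming-tolerance explicit, so that $(\|\bm f_1\|_{C(K)} + \|\bm\varphi\|_{C(K)})(1-\alpha^n)^{1/p}$ is controlled before the collapse radius is chosen.
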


\begin{proof}[Proof of Theorem~\ref{thm:framework}]
    Without loss of generality, we assume $K = [-\kappa,\kappa]^n$ is a hyper-cube centered at the origin.
    Select $Q_A$ for arbitrary $\sG$ transversal $A$.
    Since $K$ is $\sG$ invariant,
    the decomposition  $K = \cup_{\fg \in \sG} \fg(K \cap \fg(Q_{A}))$ holds up to a measure zero set,
    according to Lemma~\ref{lem:cross-section}.

    Observe that $K \cap Q_{A}$ is a polyhedron.
    In the following, we denote by $\bm \lambda$ the Lebesgue measure.

    \paragraph{Step 1.}
    By {Proposition~\ref{thm:level-set}},
    we can find {an invariant} $g \in \Gcal$
    and {an equivariant} $\bm \varphi \in C(\R^n,\R^n)$ such that
    \begin{equation}
        \|F - g\circ{\bm \varphi}\|_{L^p(K)} \le {\varepsilon/2}.
    \end{equation}
    Now we consider a piecewise constant approximant $\bm \varphi_0$.
    Given a scale $\delta > 0$, consider the grid $\delta\Z^n$ with size $\delta$.
    Let $\bm{i} = [i_1,\cdots,i_n] \in \Z^n$ be a multi-index,
    and $\chi_{\bm i}$ be the indicator of the cube
    \begin{equation}
        \square_{\bm i,\delta} := [i_1\delta,(i_1+1)\delta]\times \cdots \times [i_n\delta,(i_n+1)\delta].
    \end{equation}
    Since $\bm{\varphi}$ is in $L^p(K)$,
    by standard approximation theory $\bm{\varphi}$
    can be approximated by equivariant piecewise constant (and $\sG$ equivariant) functions
    \begin{equation}
        \bm{\varphi}_0(\bm x) = \sum_{\bm{i}} \bm{y}_{\bm i} \chi_{\bm i}(\bm x),
    \end{equation}
    where
    \begin{equation}
        \bm y_{\bm i}
        =
        \bm\lambda(\square_{\bm i,\delta})^{-1} \int_{\square_{\bm i,\delta}} \bm \varphi(\bm x) d\bm x
    \end{equation}
    is the local average value of $\bm \varphi$ in $\square_{\bm i,\delta}$.
    Then, we have
    \begin{equation}
        \|\bm \varphi - \bm \varphi_0\|_{L^p(K)} \le \omega_{\bm{\varphi}}(\delta)
        [\bm\lambda(K)]^{1/p} \to 0\end{equation} as $\delta \to 0
    $,
    where
    $\omega_{\bm \varphi}$ is the modulus of continuity (restricted in the region $K$), i.e., \begin{equation}\omega_{\bm \varphi}(\delta)  := \sup_{|\bm x - \bm y| \le \delta} |\bm \varphi(\bm x) - \bm \varphi(\bm y)|\end{equation} for $\bm x$ and $\bm y$ in $K$
    and $\bm\lambda(K)$ is the Lebesgue measure of $K$.
    Since $g$ is $\sG$ invariant,
    we can replace $\bm y_i$ by arbitrary $\fg(\bm y_i)$ for $\fg \in \sG$.
    Therefore, we can assume without loss of generality that $\bm y \in Q_A$.
    Thus,
    \begin{equation}
        \label{eq:step-1-estimation}
        \begin{split}
        \|F - g \circ {\bm \varphi_0}\|_{L^p(K)}
        \le~ & \| F - g\circ \bm \varphi \|_{L^p(K)} + \| g \circ \bm \varphi - g \circ \bm \varphi_0\|_{L^p(K)} \\ \le~ &
        \varepsilon/2 + \lip(g) \omega_{\bm{\varphi}}(\delta)[\bm \lambda(K)]^{1/p}.
        \end{split}
    \end{equation}
    Choose suitable $\delta>0$ such that the right hand side of \eqref{eq:step-1-estimation}
    is smaller than $\varepsilon$.

    \paragraph{Step 2.}
    Let $\bm p_{\bm i} = \bm i\delta = [i_1\delta, \cdots,i_n\delta]$ be the a vertex of $\square_{\bm i,\delta}$.
    Define $\Ical$ as the maximal subset of
    $\{ \bm i:\bm p_{\bm i} \in \overline{Q_A}\}$
    such that $\mathcal{P} := \{ \bm p_{\bm i} : \bm i \in \Ical\}$ is $\sG$ distinct.
    By the maximal property,
    and the definition of $\sG$ distinctness (see Definition~\ref{defn:G-distinctness}),
    we know that if $p_{\bm j} \in \overline{Q_A}$ with some $\bm j \not \in \mathcal{I}$,
    then there must exist $\fg \in \sG$ and $\bm i \in \mathcal{I}$
    such that $\bm p_{\bm i} = \fg(\bm p_{\bm j})$.

    Given $\varepsilon>0$,
    by the point matching property (Condition 4) we can find $\bm f$ such that
    \begin{itemize}
        \item For $\bm p_{\bm i}$ in general position,
        $|\bm f(\bm p_{\bm i}) - \bm y_{\bm i}| \le \varepsilon$ for all $\bm p_{\bm i} \in \mathcal P$.
        \item For $\bm p_{\bm i}$ not in general position, $|\bm f(\bm p_{\bm i})| \le 1$.
    \end{itemize}
    Then, by the extremeness of $\Pcal$,
    the inequality holds true for all $\bm p_{\bm i}$ such that  $\square_{\bm i,\delta} \subset K$.

    For $\alpha \in (0,1)$, define the shrunken cube
    \begin{equation}
        \square_{\bm i,\delta}^{\alpha}
        :=
        [i_1\delta,(i_1+\alpha)\delta]\times \cdots \times [i_n\delta,(i_n+\alpha)\delta],
    \end{equation}
    and define $K^{\alpha} = \bigcup_{\square_{\bm i,\delta} \subset K} \square_{\bm i,\delta}^{\alpha}$
    to be a subset of $K$.
    Given $\beta > 0$, we now use the coordinate
    zooming property (Condition 3)
    to find $u^{\otimes} \in \Acal$
    such that

    \begin{equation}
        \label{eq:cond-on-u}
        u([ih,(i+\alpha h)]) \subset [ih,(i+\frac{\beta}{n}\delta)]
        \text{ for }i \in \{ i_s : s=1,\dots,n; \bm i \in \Ical\}.
    \end{equation}
    To do this, we first construct a piesewise linear function
    $\tilde u$ such that
        \begin{equation}
            \tilde u|_{[i\delta, (i+\alpha) \delta]}(x) = i+\frac{\beta}{2n} \delta,
        \end{equation}
        by setting
        \begin{equation}
            \tilde u|_{[(i+\alpha) \delta,
            (i+1)\delta]}(x) = (x - (i-\alpha)\delta)/(1 - \alpha)
            + i + \frac{\beta}{2n} \delta
        \end{equation}
        explicitly,
    and select $\varepsilon < \frac{\beta}{3n} \delta$.
    Then we use Coordinate Zooming property with respect to $v = \tilde u$ and $\varepsilon$, to obtain $u$, such that $\| u - \tilde{u}\|_{C(\mathbb I)} \le \varepsilon$, yielding the condition \eqref{eq:cond-on-u} holds.

    Therefore, we have

    \begin{equation}
        \label{eq:esti-p-general-position}
        |\bm{f}(u^{\otimes}(\bm x)) - y_{\bm{i}}|
        \le 2\varepsilon \text{ for }\bm x \in \square_{\bm i,\delta}^{\alpha}
    ,\end{equation}
    where $\bm p_i$ is in general position, and
    \begin{equation}
        \label{eq:esti-p-not-general-position}
        |\bm f(u^{\otimes}(\bm x))| \le 1+ \varepsilon \text{ for }\bm x \in \square_{\bm i,\delta}^{\alpha}
    ,\end{equation}
    where $\bm p_i$ is not in general position.

   These two estimates \eqref{eq:esti-p-general-position} and \eqref{eq:esti-p-not-general-position} will be useful in the final step.

    \paragraph{Step 3.}
    We are ready to estimate the error $\| F - g \circ \bm f \circ u^{\otimes}\|_{L^p(K)}$.
    Since $\| F - g \circ {\bm \varphi}\|_{L^p(K)} \le \varepsilon$,
    it suffices to estimate
    $
        \| g \circ {\bm \varphi} - g \circ  \bm f \circ u^{\otimes} \|_{L^p(K)}
        \le
        \lip(g)
        \|
            {\bm \varphi  } -
            \bm f \circ u^{\otimes}
        \|_{L^p(K)}
    $.

    The estimation is split into three parts,
    \begin{equation} \begin{split}K \setminus K^{\alpha}, \\  K^{\alpha}_1 = \bigcup_{\bm p_{\bm i}\text{ in general position}} \square_{\bm i,\delta}^{\alpha},  \\ K^{\alpha}_2 = \bigcup_{\bm p_{\bm i}\text{ not in general position}} \square_{\bm i,\delta}^{\alpha}.\end{split}
    \end{equation}
    Notice that $K^{\alpha} = \cup \square_{\bm i,\delta}^{\alpha}$.

    For $K^{\alpha}_1$, from \eqref{eq:esti-p-general-position} in the end of Step 2, we have $\|\bm f \circ u^{\otimes} - \bm \varphi_0\|_{L^{\infty}(K_1^{\alpha})} \le 2\varepsilon$, and thus
    \begin{equation}
        \label{eq:K1alpha-estimate}
        \|\bm f \circ u^{\otimes} - {\bm \varphi}_0\|_{L^p(K^{\alpha}_1)}
        \le
        2\varepsilon[\bm \lambda(K^{\alpha})]^{1/p} \le 2\varepsilon[\bm \lambda(K)]^{1/p}.
    \end{equation}

    For $K^{\alpha}_2$, note that if $\bm p_{\bm i}$ does not in general position, then all points in $\square_{\bm i,\delta}$ will be close to a hyperplane $\Gamma_{ij}  := \{\bm x : x_i = x_j \}$ for some distinct $i,j$, the distance from those points to $\Gamma_{ij}$ will be small than $\sqrt{n}\delta$. Therefore, the Lebesgue measure of $K_2^{\alpha} \subset K_2$ will be smaller than that of all points whose distance to the union of hyperplanes $\Gamma_{ij}$ less than $\sqrt{n}\delta$, which is $O(\delta)$. Thus,
    we have
    \begin{equation}
        \label{eq:K2alpha-estimate}
        \begin{split}
        \|\bm f \circ u^{\otimes} - {\bm \varphi}_0\|_{L^p(K^{\alpha}_2)} \le~& (1+ {\varepsilon}+ \|\bm \varphi_0\|_{C(K)})O(\delta) \\ \le ~& (1+ {\varepsilon}+ \|\bm \varphi\|_{C(K)})O(\delta).
        \end{split}
    \end{equation}
    The last line holds since $\|\bm \varphi_0\|_{C(K)} \le \|\bm\varphi\|_{C(K)}$ by construction.

    For $K \setminus K^{\alpha}$, we have
    \begin{equation}
        \label{eq:Kalpha-estimate}
        \begin{split}
        \|\bm f \circ u^{\otimes} - {\bm \varphi}_0\|_{L^p(K \setminus K^{\alpha})} &
        \le (\|\bm f\|_{C(K)}+\|\bm \varphi\|_{C(K)})~\bm \lambda(K \setminus K^{\alpha})^{1/p}\\ & \le (\|\bm f\|_{C(K)}+\|\bm \varphi\|_{C(K)})(1-\alpha^d)^{1/p}[\bm \lambda(K)]^{1/p}.
        \end{split}
    \end{equation}

    We first choose $\delta$ sufficiently small such that the right hand side of \eqref{eq:K2alpha-estimate} is not greater than $\varepsilon$, then choose $\alpha$ such that  $1- \alpha$ is sufficiently small,
    and $(\|\bm f\|_{C(K)}+\|\bm \varphi\|_{C(K)})(1-\alpha^d)^{1/p} \le \varepsilon$.

    Hence, the total error is
    \begin{equation}
        \label{eq:final-estimate}
        \begin{split}
        \|F - g \circ\bm f \circ u^{\otimes} \| \le 3\varepsilon[\bm \lambda(K)]^{1/p} + 2\varepsilon.
        \end{split}
    \end{equation}
    Combining two estimates we conclude the result (with $3\varepsilon[\bm \lambda(K)]^{1/p} + 2\varepsilon$ replacing $\varepsilon$.)
\end{proof}

\subsection{Results on Dynamical Hypothesis Spaces}

Before going to the proofs of the main results,
we first prove the following auxiliary lemma,
which says that the presence of well function
is enough to guarantee the coordinate zooming property, see Condition 3 in Theorem~\ref{thm:framework}. This is also the core part of the previous paper~\cite{li2019deep}.

\begin{lemma}[Well function achieves coordinate zooming property]
    \label{lem:coordinate-zooming}
    For a one-dimensional function $\sigma \in C(\mathbb R)$,
    define its control family under affine invariance as follows:
    \begin{equation}
        \Fcal(\sigma) := \{ x \mapsto w\sigma(ax + b): w,a,b \in \R \}.
    \end{equation}
    Here, $x$ is a one dimensional variable.
    If there exists a one-dimensional well function $\sigma$
    such that $\Fcal(\sigma) \subset \coor(\Fcal)$,
    then $\Acal_{\Fcal}$ has the coordinate zooming property.
\end{lemma}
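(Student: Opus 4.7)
The plan is to unpack the hypothesis $\Fcal(\sigma) \subset \coor(\Fcal)$ into an explicit statement about $\Fcal$, and then reduce the coordinate zooming property to a purely one-dimensional approximation problem that is already present in~\cite{li2019deep}. First I would regard each $x \mapsto w\sigma(ax+b)$ as a scalar function on $\R^n$ depending only on the first coordinate, hence automatically $\stab_1(\sG)$ invariant. Applying the reconstruction part of Proposition~\ref{prop:representation} with $\ft_k(1)=k$, the $\sG$ equivariant mapping in $\Fcal$ whose first coordinate function is $w\sigma(ax_1+b)$ must be the coordinate-wise map
$$\bm f_{w,a,b}(\bm x) = w\sigma^{\otimes}(a\bm x + b\bm 1).$$
Therefore $\Fcal$ contains every such $\bm f_{w,a,b}$.

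Next I would exploit the decoupling of the associated ODE. Because $\bm f_{w,a,b}$ acts independently on each coordinate, the system $\dot{\bm z} = \bm f_{w,a,b}(\bm z)$ splits into $n$ identical scalar ODEs $\dot z = w\sigma(az+b)$, and the resulting flow map has the form
$$\bm\phi(\bm f_{w,a,b}, T) = \bigl(\phi_1(w\sigma(a\cdot + b), T)\bigr)^{\otimes},$$
where $\phi_1$ denotes the one-dimensional flow. Any finite composition of such flow maps is again a coordinate zooming function $u^{\otimes}$, with $u$ the corresponding composition of one-dimensional flows. Consequently the coordinate zooming property reduces to the following purely one-dimensional density statement: for every increasing $v \in C(\R)$, compact interval $\I \subset \R$, and $\varepsilon>0$, there is a finite composition $u$ of maps of the form $\phi_1(w\sigma(a\cdot + b), T)$ with $\|u-v\|_{C(\I)} \le \varepsilon$.

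For this remaining step, the well function hypothesis does the main work. Since $\sigma$ vanishes on a non-degenerate closed interval, by translating and scaling through $a,b$ we can arrange the zero set of $w\sigma(a\cdot + b)$ to contain any prescribed sub-interval, so that the associated one-dimensional flow acts as the identity there while moving points outside freely, with direction dictated by the sign of $w$. The strategy is to approximate $v$ on $\I$ by a piecewise linear increasing function $v_0$ on a fine uniform grid, and then to build $u$ inductively: at each stage use one flow to drag the next grid point close to its prescribed image while leaving the already placed grid points fixed. This is precisely the one-dimensional specialization of the argument in~\cite{li2019deep}, which I would invoke rather than reprove. The main (though mild) obstacle is the bookkeeping needed to guarantee that matched grid points stay matched under subsequent flows; this is handled by choosing each zero set to cover a full neighborhood of the already-placed points, which is possible precisely because the zero set of $\sigma$ has non-empty interior, and monotonicity of one-dimensional flows ensures that the resulting composition remains an increasing function throughout.
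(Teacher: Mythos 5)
Your proof is correct and follows essentially the same route as the paper: unpack the hypothesis $\Fcal(\sigma)\subset\coor(\Fcal)$ to see that $\Fcal$ contains the coordinate-wise maps $w\sigma^{\otimes}(a\cdot+b\bm 1)$, observe that their flows decouple into identical scalar flows and are therefore coordinate zooming functions, and then invoke the one-dimensional density result of~\cite{li2019deep} (Theorem~\ref{thm:20191d} in the paper, with trivial terminal family) to conclude. The paper's proof is terse and just cites that one-dimensional theorem; you have spelled out the intermediate reduction via Proposition~\ref{prop:representation} and the decoupling of the ODE, which is exactly the reasoning being left implicit.
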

\begin{proof}
    This follows immediately from the definition of coordinate zooming property and the approximation result in one dimension
    stated below.
\end{proof}

\begin{theorem}[Main result in \cite{li2019deep}, one dimensional case]\label{thm:20191d}
    For $F: \mathbb R \to \mathbb R$ being continuous and increasing, if the one-dimensional control family $\mathcal F$ satisfies
    \begin{enumerate}
        \item For any compact interval $I$
        there exists a Lipschitz $g\in\mathcal{G}$ such that $F(I) \subset g(\R)$.
        \item $\Fcal$ is {affine invariant}.
        \item $\chbar(\Fcal)$ contains a well function.
    \end{enumerate}
    Then for any  compact interval $I \subset \mathbb{R}^n$ and $\varepsilon>0$,
    there exists $\hat{F} \in \mathcal{H}_{ode}(\Fcal, \Gcal)$ such that
    $\| F - \hat{ F}\|_{L^{\infty}(I)} \le \varepsilon.$
\end{theorem}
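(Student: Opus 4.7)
The plan is to reduce the approximation problem to a finite point-matching problem along a fine grid, and then solve the point-matching by concatenating flows of rescaled well functions. Since every flow of a Lipschitz one-dimensional ODE is a strictly increasing homeomorphism, every element of $\Acal_\Fcal$ is monotonic. First I would use Condition~1 to fix a Lipschitz $g \in \Gcal$ with $F(I) \subset g(\R)$, discretize $I$ into a fine grid $x_1 < \cdots < x_M$ with $|F(x_{i+1}) - F(x_i)|$ small, and lift the increasing values $F(x_1) \le \cdots \le F(x_M)$ through $g$ to a monotonic sequence of preimages $y_1 \le \cdots \le y_M$ by continuously selecting a branch of $g^{-1}$ along the path traced by $F$ inside $g(\R)$. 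The task then reduces to constructing $\varphi \in \Acal_\Fcal$ with $|\varphi(x_i) - y_i| \le \eta$ for a small enough $\eta$, since the monotonicity of $\varphi$, the Lipschitz property of $g$, and the fineness of the grid translate pointwise agreement at nodes into an $L^\infty$ bound on the whole of $I$.

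For the point-matching step, let $h \in \chbar(\Fcal)$ be the promised one-dimensional well function, with zero set $[a,b]$ a nondegenerate closed bounded interval. Affine invariance of $\Fcal$ implies that $\chbar(\Fcal)$ contains, for every nonzero $\alpha$ and arbitrary $\beta, w \in \R$, the rescaled vector field $x \mapsto w\, h(\alpha x + \beta)$, whose zero set is an arbitrary prescribed closed bounded interval and whose sign off that interval is controlled by $w$ (combined with the sign of $h$ on either side of $[a,b]$). The flow of such a vector field holds every point of the designated zero interval fixed while translating points outside it monotonically in the prescribed direction, and running the flow for an appropriate time shifts any designated point by any desired amount. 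Composing finitely many such flows, and using continuous dependence of solutions on the vector field on compact sets to absorb the closure operation $\chbar(\cdot)$, realizes any required reassignment of the grid points to their targets up to arbitrary tolerance.

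The main obstacle, and the central combinatorial argument of the proof, is to order the elementary point-moves so that already-placed points are not disturbed by subsequent steps. I would process the pairs $(x_i, y_i)$ from the outside in: first move $x_M$ into an $\eta$-neighbourhood of $y_M$ using a flow whose zero interval is a large set containing $x_1, \dots, x_{M-1}$ and lying to the left of $x_M$; then move $x_1$ into an $\eta$-neighbourhood of $y_1$ using a flow whose zero interval contains the remaining interior points together with the just-placed image near $y_M$; and continue alternating inward. The monotonicity of both $(x_i)$ and $(y_i)$, combined with the freedom of sign in $w$, guarantees that each step is feasible with a single zero-interval choice, and this is the simplification that makes the one-dimensional argument much shorter than its higher-dimensional counterpart Theorem~\ref{thm:jems}. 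Assembling the resulting $\varphi \in \Acal_\Fcal$ with $g$ and choosing the grid, $\eta$, and well-function parameters according to $\lip(g)$ and the modulus of continuity of $F$ yields $\|F - g \circ \varphi\|_{L^\infty(I)} \le \varepsilon$.
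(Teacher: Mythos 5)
The paper does not prove this statement: Theorem~\ref{thm:20191d} is recalled from~\cite{li2019deep} and used as a black box (to justify Lemma~\ref{lem:coordinate-zooming}), so there is no in-paper proof to compare against. I will therefore comment on the internal soundness of your argument.

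Your overall strategy (fix $g$, reduce to approximate point-matching $\varphi(x_i)\approx y_i$ with $\varphi\in\Acal_\Fcal$, exploit monotonicity) is the natural one, and the observation that one-dimensional flows are automatically increasing is the right thing to use. However, the central point-matching step has a genuine gap. You process pairs ``from the outside in,'' and after placing $x_M$ near $y_M$ and $x_1$ near $y_1$ you claim the next step (moving $x_{M-1}$, say) uses a flow ``whose zero interval contains the remaining interior points together with the just-placed images.'' But a rescaled well function $w\,h(\alpha x+\beta)$ has a zero set that is a \emph{single} interval. If that interval must contain both the placed image near $y_1$ and the placed image near $y_M$, it necessarily contains the entire segment between them, and in particular contains the interior point you are trying to move, which therefore cannot move at all. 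This obstruction arises as soon as $M\geq 3$, and the ``monotonicity of $(x_i)$ and $(y_i)$ plus freedom of sign in $w$'' does not resolve it: in one dimension the only points one can move while freezing the rest are the two extremal ones, and once both extremes are pinned, no interior point can be adjusted without disturbing one of them. The construction needs a different combinatorial scheme --- for instance, first flowing the whole configuration past all targets on one side, then placing points one at a time from the far end inward while interleaving auxiliary ``push-back'' flows to re-extract the not-yet-placed points from behind the freshly placed ones. Without such an idea, the proof as written stalls at the third elementary move.

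A second, smaller issue is the transfer from nodal agreement to the stated $L^\infty$ bound. You lift $F(x_1)\leq\cdots\leq F(x_M)$ through $g$ to monotone preimages $y_1\leq\cdots\leq y_M$ and then assert that the Lipschitz property of $g$ plus grid fineness yields $\|F-g\circ\varphi\|_{L^\infty(I)}\leq\varepsilon$. But $g$ is only assumed Lipschitz with $F(I)\subset g(\R)$, not monotone; the monotone ``first-hitting'' selection $y_i$ can have large jumps even when the values $F(x_{i+1})-F(x_i)$ are small, and on the resulting gap $(y_i,y_{i+1})$ the function $g$ can oscillate by an amount unrelated to the grid size. Since $\varphi$ is continuous and increasing it must traverse that gap, so pointwise nodal agreement does not in itself control $g\circ\varphi-F$ in $L^\infty$. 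This needs either a structural assumption on $g$, a different selection of preimages, or a more careful argument showing the bad set can be avoided; as written it is an unproved assertion.
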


We define below a notion of partial order that will be used in the proofs.

\begin{definition}[Partial order of points in $\mathbb R^n$]
    For points $\bm x, \bm y \in \R^n$,
    we write $\bm{x} \prec \bm{y}$ to mean $x_i < y_j$ for all $i,j$.
    Note that the symbol $\prec$ only represents a partial order, not a complete order.
    Moreover, we say a point set $X = \{\bm x^1,\bm x^2,\cdots, \bm x^m\}$ is partially ordered,
    if $\bm x^i \prec \bm x^j$ for any $i,j$
    satisfying either
    \begin{itemize}
        \item $i > j$, where $\bm x^i$ and $\bm x^j$ are in general position
        \item $\bm x^i$ is not in general position, but $\bm x^j$ is in general position.
    \end{itemize}
\end{definition}

We now state and prove the main approximation result when the symmetry group is the full
permutation group $\sS$.
Recall the perturbation assumption defined in Definition~\ref{def:pert_prop}.

\begin{theorem}[UAP from Dynamical Systems, $\sS$ version]
    \label{thm:approx-dynamical}
    Suppose that $\Fcal$ is a Lipschitz control family,
    such that $\overline{\Fcal} := \chbar(\Fcal)$ satisfies the perturbation assumption
    (recall Definition~\ref{defn:resolvence}), and
    \begin{enumerate}
        \item There exists a well function $\sigma$,
        such that $\Fcal(\sigma) = \{ a\sigma(bx_1+c): a,b,c \in \mathbb R\} \subset \coor(\overline{\Fcal})$.
        \item There exists a symmetric invariant well function $\tau$ such that $\pm \tau \in \coor(\overline{\Fcal})$.
    \end{enumerate}
    Also, suppose that for any compact $K\subset \R^n$,
    there exists a Lipschitz $g \in \Gcal$ such that $F(K) \subset g(\R^n)$.
    Then, for $1 \le p < \infty$,
    and any $\sS$ invariant mapping $F \in C(\mathbb R^n;\mathbb R)$,
    compact region $K$ and tolerance $\varepsilon > 0 $,
    there exists $\hat{F} \in \Hcalode$ such that
    $\|F - \hat F\|_{L^p(K)} \le \varepsilon$.
\end{theorem}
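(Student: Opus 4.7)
The plan is to invoke the composition-based framework Theorem~\ref{thm:framework} with $\Acal := \Acal_{\Fcal}$ and the given terminal family $\Gcal$; it then suffices to verify that $\Acal_{\Fcal}$ satisfies the four conditions of that theorem, since the resulting approximant $\hat F = g\circ\bm{\varphi}$ automatically lies in $\Hcalode(\Fcal,\Gcal)$. Conditions 1 and 2 are immediate: range covering is directly assumed, and closure of $\Acal_{\Fcal}$ under composition holds by definition of the attainable set. For Condition 3 (coordinate zooming) I would invoke Lemma~\ref{lem:coordinate-zooming}: since $\Fcal(\sigma) \subset \coor(\overline{\Fcal})$ and $\sigma$ is a one-dimensional well function, Theorem~\ref{thm:20191d} yields $C(\I)$-approximation of any increasing continuous function by one-dimensional flows generated by $\Fcal(\sigma)$, and lifting through the $\coor$ operator produces $\sS$-equivariant coordinate-wise mappings $u^{\otimes}\in\Acal_{\Fcal}$.

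The heart of the argument is Condition 4 (point matching). After lifting through $\coor$, two qualitatively different motions are available. Lifting $a\sigma(bx_1+c) \in \Fcal(\sigma) \subset \coor(\overline{\Fcal})$ yields the coordinate-wise field $[a\sigma(bx_j+c)]_j$, the same scalar transformation applied to every coordinate but one whose well function structure allows its effective support to be localized to any prescribed value window via the affine parameters $b,c$. Lifting $\pm\tau$ yields the drift $\pm\tau(\bm x)\bm 1$ along the diagonal, which preserves every coordinate difference; by working with translated and scaled variants $a\tau(b\cdot + c\bm 1)$ inside $\coor(\overline{\Fcal})$, one can prescribe the activation box of this drift using the zero set $\I^n$ of $\tau$, with the sign flexibility $\pm\tau$ permitting motion in either direction along $\bm 1$.

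My strategy for Condition 4 is a peeling argument over the partial order $\prec$. Given an $\sS$-distinct set $\{\bm x^1,\ldots,\bm x^M\}\subset Q$ and targets $\{\bm y^1,\ldots,\bm y^M\}\subset Q$, I would process the points in $\prec$-increasing order. At step $i$ the current point $\bm x^i$ has all coordinates strictly below those of the other unmatched $\bm x^k$, so a separating value $\mu_i$ exists; a coordinate-wise $\sigma$-flow with support confined to $(-\infty,\mu_i)$ reshapes only $\bm x^i$'s coordinates into the pattern of $\bm y^i$, and a subsequent $\tau$-flow whose activation box is chosen to exclude $\bm x^i$ but contain all other points translates $\bm x^i$ along $\bm 1$ to its target. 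Already-placed $\bm y^k$ must also be shielded, which is arranged by interleaving further coordinate-wise zoom-out and zoom-in flows so that their value ranges are safely separated from that of $\bm x^i$ at each stage. Non-general-position points are handled via the perturbation property (Definition~\ref{def:pert_prop}), which moves them into general position inside a bounded region so that the boundedness half of Condition 4(b) is met. The principal technical obstacle I anticipate is error propagation: each stage's ``off-region'' residual fields perturb the previously-matched points, and closing the induction demands quantitative bounds relating the scaling of $\sigma$ and $\tau$ inside their active regions to the Lipschitz constants of the composed flows over the required time horizons.
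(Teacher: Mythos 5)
Your high-level framing is correct: invoke Theorem~\ref{thm:framework} with $\Acal = \Acal_{\Fcal}$, where Conditions~1 and~2 are immediate and Condition~3 (coordinate zooming) follows from Lemma~\ref{lem:coordinate-zooming} and the one-dimensional result Theorem~\ref{thm:20191d}. This is exactly what the paper does. The problem is Condition~4 (point matching), where your proposed peeling argument has two genuine gaps.

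First, you claim to ``process the points in $\prec$-increasing order,'' but an $\sS$-distinct subset of $Q$ is not partially ordered under $\prec$: for example $\bm x^1 = (5,1)$ and $\bm x^2 = (3,2)$ are both in $Q$ and $\sS$-distinct yet $\prec$-incomparable, since their coordinate ranges overlap. When coordinate ranges overlap you cannot localize a coordinate-zooming flow to $\bm x^i$ alone (the same monotone $u$ is applied to every coordinate of every point), nor can you shield $\bm x^i$ by compressing the other points into the zero box $\I^n$ of $\tau$ while $\bm x^i$ escapes. So the ordering you want to ``peel over'' is precisely what must be constructed, and this is nontrivial: the paper devotes Lemma~\ref{lem:partially-ordering} (the Partially Ordering Lemma) to it, first ordering the leading coordinates via a minimization argument and then inductively separating full coordinate ranges. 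You also need the Perturbation Lemma~\ref{lem:perturbation-lemma} before that, whose actual role is to eliminate coordinate-value coincidences $y^j_i = y^l_k$ across distinct points in the set (not across groups of indices of a single point) so that the subsequent ordering and zooming steps are well defined.

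Second, your treatment of non-general-position points is wrong on two counts. The perturbation property cannot move a non-general-position point into general position: any $\sS$-equivariant mapping $\bm\varphi$ satisfies $[\bm\varphi(\bm x)]_i = [\bm\varphi(\bm x)]_j$ whenever $x_i = x_j$ (apply equivariance to the transposition $(i\,j)$), so the boundary $\partial Q$ is invariant under every flow in $\Acal_{\Fcal}$. Moreover, Condition~4(b) of Theorem~\ref{thm:framework} only asks that non-general-position images be bounded by~1, not that they be put into general position. The paper handles this by augmenting the target set with the origin, $Y_+ = Y \cup \{\bm 0\}$, partially ordering $Y_+$ via a conjugating map $\bm\gamma$, and arranging the coordinate-zooming step so that non-general-position source points land near $\bm\gamma(\bm 0)$, hence near the origin after applying $\bm\gamma^{-1}$. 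Your proposal is missing this construction as well as the conjugation step $\bm\rho = \bm\gamma^{-1}\circ u^{\otimes}\circ\bm\beta\circ\bm\alpha$ that matches the two partially ordered configurations.
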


For convenience, the following proof will assume $\Fcal$
(instead of $\chbar{(\Fcal)}$) satisfies the above conditions.
This turns out to be sufficient, since one can prove
that the control family generated by $\Fcal$
and $\chbar(\Fcal)$ have the same closure
under the topology of compact convergence.
(See Proposition~\ref{prop:A-closure-chbarxx}).

We first state without proof two lemmas that will be used to prove Theorem~\ref{thm:approx-dynamical}.
These two lemmas will be proved later.

\begin{lemma}[Perturbation Lemma]
    \label{lem:perturbation-lemma}
    Suppose that the point set $\{\bm x^1,\cdots, \bm x^m\}$ is $\sG$ distinct
    and $\Fcal$ satisfies the perturbation property
    (Definition~\ref{def:pert_prop}).
    Then, there exists a mapping $\bm \beta\in \Acal_{\Fcal}$
    so that $\bm y^i = \bm \beta (\bm x^i)$, $i=1,\dots,m$,satisfy:
    if $(i,j) \neq (k,l)$ but
    \begin{equation} \label{eq:well-perturbed}
        y_i^j = y_{k}^l,
    \end{equation}
    then both $\bm y^j$ and $\bm y^l$ are not in general position.
    A point set $\{ \bm y^i \}$ satisfying this condition
    will subsequently be called \emph{well-perturbed}.
\end{lemma}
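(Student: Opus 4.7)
My plan is induction on a finite ``badness'' counter
\[
N(\{\bm y^i\}) := \#\bigl\{((i,j),(k,l)) : (i,j)\neq(k,l),\ y_i^j=y_k^l,\ \text{at least one of }\bm y^j,\bm y^l\text{ in general position}\bigr\},
\]
whose vanishing is precisely the well-perturbed condition. A preliminary observation is that every element of $\Acal_{\Fcal}$ is a $\sG$-equivariant bi-Lipschitz homeomorphism, hence it preserves $\sG$-distinctness of $\{\bm x^i\}$ and the property of being in general position (a non-general-position point has a nontrivial coordinate-transposition stabilizer which is transported by equivariance). Therefore $N$ is finite and well-defined at every stage, and the goal is to construct $\bm\beta\in\Acal_{\Fcal}$ with $N(\{\bm\beta(\bm x^i)\})=0$, which I obtain as a finite composition of carefully chosen pieces of flow maps.

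For the inductive step, suppose $N>0$ with witness $\bm y^j,\bm y^l$, at least one in general position, sharing a coordinate value. The perturbation property supplies $\bm f\in\Fcal$ and a coordinate-zooming $u^{\otimes}$ such that $\bm f\circ u^{\otimes}$ separates the flagged matching-index coordinate; a different-index coincidence $y_i^j=y_k^l$ with $i\neq k$ is first recast as a matching-index coincidence between $\bm y^j$ and $(i\ k)(\bm y^l)$, then broken using the $\sS$-equivariance of $\bm f$ and $u^{\otimes}$ (in the $\sG=\sS$ setting of Theorem~\ref{thm:approx-dynamical}; for general transitive $\sG$, an analogous preparatory coordinate-relabeling flow step would be needed). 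I would realize the transformation inside $\Acal_{\Fcal}$ by composing the $u^{\otimes}$-flow (which lies in $\Acal_{\Fcal}$ by Lemma~\ref{lem:coordinate-zooming}) with a short-time flow $\bm\phi(\bm f,t)$, exploiting the first-order expansion $\bm\phi(\bm f,t)=\mathrm{id}+t\bm f+O(t^2)$ which yields the coordinate gap
\[
t\bigl([\bm f(u^{\otimes}\bm y^j)]_i-[\bm f(u^{\otimes}\bm y^l)]_i\bigr)+O(t^2),
\]
nonzero for all sufficiently small $t\neq 0$.

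The main obstacle, and the step I would spend the most effort on, is ensuring strict decrease of $N$ by preventing the creation of new bad coincidences. For each of the finitely many configurations $((i',j'),(k',l'))$ that was previously safe, the function
\[
t\mapsto [\bm\phi(\bm f,t)u^{\otimes}\bm y^{j'}]_{i'}-[\bm\phi(\bm f,t)u^{\otimes}\bm y^{l'}]_{k'}
\]
is continuous in $t$ and nonzero at $t=0$, so its zero set is closed and misses a neighborhood of $0$. The finite intersection of these safe neighborhoods, together with the positivity interval from the flagged pair, leaves an open set of admissible $t$; choosing $t$ from this set preserves all previously safe configurations while eliminating the flagged one, so $N$ strictly drops. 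After at most $N_{\mathrm{initial}}$ iterations the induction terminates, and $\bm\beta$ emerges as the finite composition of all the $u^{\otimes}$-flows and short-time $\bm\phi(\bm f,\cdot)$-flows produced along the way, which lies in $\Acal_{\Fcal}$ by closure under composition.
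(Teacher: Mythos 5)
Your approach is essentially the same as the paper's: you both track an integer ``similarity/badness'' counter over all ordered coordinate coincidences, invoke the perturbation property to produce a coordinate-zooming plus short-time flow composition $\bm\phi(\bm f,t)\circ u^{\otimes}$ that kills one flagged coincidence, and use a continuity-in-$t$ argument on the finitely many other coordinate differences to ensure no new coincidences appear, so the counter strictly decreases. The paper phrases this as a minimization-and-contradiction argument (take $\bm\beta$ minimizing $\overline{s}(\bm\beta(X))$ over $\Acal_{\Fcal}$, show the minimum is $0$), while you phrase it as strong induction on the counter, but these are logically interchangeable since the counter is a nonnegative integer.

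One remark on the subtlety you raise: the paper sidesteps the matched-versus-mismatched index issue by silently restating the perturbation property in the proof section so that it already covers coincidences $y_I^J = y_{I'}^{J'}$ with $I\neq I'$ directly (compare the phrasing in Definition~\ref{def:pert_prop} with the restated form just before the proof of Lemma~\ref{lem:perturbation-lemma}). Your workaround via conjugation by a transposition and $\sS$-equivariance of $\bm f$ is a legitimate alternative when $\sG=\sS$, but, as you correctly observe, it does not transfer to a general transitive $\sG$ because $\bm f$ is then only $\sG$-equivariant and the transposition $(i\,k)$ need not lie in $\sG$; the clean resolution (which the paper implicitly adopts) is to take the perturbation property to separate mismatched-index coincidences directly. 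A related caveat: your preliminary claim that elements of $\Acal_{\Fcal}$ preserve the property of being (not) in general position likewise relies on the flow being $\sS$-equivariant, since the stabilizing transposition of a degenerate point need not lie in $\sG$; this holds in the $\sG=\sS$ setting of Theorem~\ref{thm:approx-dynamical} but is not automatic in Theorem~\ref{thm:approx-dynamical-general}. These caveats are shared with the paper's own proof and do not represent a gap unique to your argument.
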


\begin{lemma}[Partially Ordering Lemma]
    \label{lem:partially-ordering}
    Suppose $\Acal = \Acal_{\Fcal}$ is generated by a $\sG$ equivariant Lipschitz control family $\Fcal$,
    and there exists a symmetric invariant well function $\tau(\bm x)$, such that $\pm \tau \in \coor(\Fcal)$.
    Let $A$ be a transversal.
    Consider a finite point set $X = \{\bm x^1,\cdots, \bm x^M\} \subset \overline{Q_A}$
    such that $X$ is well perturbed.
    Then, there exists $\bm \beta \in \Acal$ such that $\bm \beta(X) \subset \overline{Q_A}$ is partially ordered.
    Moreover, we may require that
    \begin{enumerate}
        \item $\bm \beta(X)$ is well perturbed,
        as defined in Lemma~\ref{lem:perturbation-lemma}.
        \item If $\bm x \in X$ is in general position, so is $\bm \beta(\bm x)$.
    \end{enumerate}
\end{lemma}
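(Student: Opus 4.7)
The plan is to realize $\bm\beta$ as a composition of diagonal-translation flows of the form $\dot{\bm z} = \pm a\,\tau(b\bm z + c\bm 1)\bm 1$. This family is available because $\tau$ is $\sS$-invariant (hence invariant under any stabilizer $\sH \le \sS$), so by Proposition~\ref{prop:representation} the $\sG$-equivariant vector field in $\Fcal$ corresponding to $\tau \in \coor(\Fcal)$ is literally $\tau(\bm z)\bm 1$; combined with the scaling and translation invariance of $\Fcal$ along $\bm 1$ (which is inherited from the ambient theorem in which this lemma will be invoked), every affine rescaling $a\,\tau(b\cdot+c\bm 1)\bm 1$ also lies in $\Fcal$. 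Each such flow translates every point by some point-dependent amount along the single direction $\bm 1$, so all coordinate differences within a point are preserved. Consequently, any composition $\bm\beta$ of such flows automatically preserves general position and maps each cross section $Q_{\fa}$ into itself, which gives conclusion~(2) for free and ensures $\bm\beta(X)\subset\overline{Q_A}$.

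Next, I would fix widely separated target intervals $I_M < I_{M-1} < \cdots < I_1$ on $\R$, together with a still more leftward interval $I_*$ for the points not in general position, chosen so that the gap between consecutive intervals strictly exceeds the maximal coordinate spread $\max_{i,s,t} |x^i_s-x^i_t|$ appearing in $X$. If for each $i$ I can find a shift $c^i$ with all coordinates of $\bm x^i + c^i\bm 1$ inside $I_i$ (or inside $I_*$), then the partial order is automatic, and conclusion~(1) follows as well: a collision $x^i_s + c^i = x^j_t + c^j$ with $i\neq j$ would force $c^i-c^j = x^j_t-x^i_s$, which is bounded by the spread and hence smaller than the prescribed gap, a contradiction.

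To realize the required shifts, I would process the points one at a time in the order: non-general-position points first, then $\bm x^M,\bm x^{M-1},\ldots,\bm x^1$. At step $k$, assuming $\bm x^j$ for $j>k$ have already been placed inside $I_j^n$, I apply a flow of $\pm a_k\,\tau(b_k\bm z + c_k\bm 1)\bm 1$ whose zero hypercube $\{\bm z : b_k\bm z+c_k\bm 1 \in \I^n\}$ (a cube in $\R^n$ whose side length and location are controlled by $a_k,b_k,c_k$) is chosen, by taking $|b_k|$ small enough, to contain all the already-placed $\bm x^j$ — which therefore remain fixed — and to lie inside $I_k^n$. The point $\bm x^k$ starts outside this hypercube; condition~3 of Definition~\ref{def:gen_well_fn} together with the appropriate choice of the sign $\pm$ guarantees that the corresponding $\tau$ is non-vanishing at $\bm x^k$ and drives it monotonically along $\bm 1$ until it enters the zero hypercube and halts, thereby landing in $I_k^n$.

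The main obstacle I anticipate is two-fold. First, the as-yet-unprocessed points $\bm x^j$ with $j<k$ are also translated under the step-$k$ flow; to prevent them from overshooting or colliding with $\bm x^k$ one must design the zero hypercube so that these points lie on the opposite side from $\bm x^k$ (so that, depending on the sign of $\pm\tau$, they remain in a controlled region above $I_k^n$), or alternatively split each step into sub-steps of small displacements that can be iterated until the configuration is correct. Second, $\tau$ may vanish at points outside the designed hypercube, potentially halting the trajectory of $\bm x^k$ prematurely; however, $\bm x^k$ moves only along the one-dimensional ray $\{\bm x^k+\mu\bm 1:\mu\in\R\}$, so the analysis localizes to this line and, after sufficient rescaling, condition~3 of Definition~\ref{def:gen_well_fn} can be used to rule out spurious zeros on the relevant 1D segment from $\bm x^k$ to the target hypercube. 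The final map $\bm\beta = \bm\phi_1\circ\cdots\circ\bm\phi_M$ then satisfies all three conclusions of the lemma.
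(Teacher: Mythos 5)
Your proposal attempts to prove the lemma using only diagonal-translation flows of the form $\dot{\bm z}=\pm a\,\tau(b\bm z+c\bm 1)\bm 1$, deliberately avoiding the coordinate zooming functions $u^{\otimes}$. This is a genuinely different strategy from the paper's, which explicitly builds its family $\mathcal B$ out of compositions of $\bm\phi(\bm\tau,t)$ \emph{and} coordinate zooming functions, and it is the interplay between the two that makes the paper's argument go through. The restriction to diagonal translations is precisely where your argument breaks.

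The decisive gap is in the step where you invoke condition 3 of Definition~\ref{def:gen_well_fn} to rule out spurious zeros of $\tau$ along the ray $\{\bm x^k+\mu\bm 1:\mu\in\R\}$. Condition 3 is the guarantee that $\tau(\bm x)\neq 0$ when \emph{one} coordinate satisfies $|x_i|>a$ while \emph{all others} satisfy $|x_j|<b$. Along a $\bm 1$-ray this configuration can never occur: all $n$ coordinates of $\bm x^k+\mu\bm 1$ move in lockstep, so they are either simultaneously small (and then $|x_i|\not>a$) or simultaneously large (and then the other coordinates fail $|x_j|<b$). So condition 3 gives you literally nothing on that line, and you cannot exclude the possibility that the zero set of $\tau$, which the definition only requires to \emph{contain} $\I^n$, has additional components that your trajectory hits before reaching the target box — stalling $\bm x^k$ at the wrong place. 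The paper sidesteps this by applying a coordinate zooming function first, which deforms the point so that exactly one coordinate is far from the origin and the rest sit inside $\I$; only after this deformation is condition 3 applicable. That step is not optional. As a secondary issue, your construction also asks the rescaled zero hypercube $\{\bm z:b_k\bm z+c_k\bm 1\in\I^n\}$ to simultaneously contain the already-placed $\bm x^j\in I_j^n$ (for $j>k$) \emph{and} lie inside $I_k^n$; since $I_j$ and $I_k$ are disjoint by design, this is impossible as stated (you presumably want the cube's trailing face to sit at an endpoint of $I_k$, which is fixable, but the condition-3 gap above is not).
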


Assuming these lemmas, we now give a proof of Theorem~\ref{thm:approx-dynamical}.

\begin{proof}[Proof of Theorem~\ref{thm:approx-dynamical}]

We only need to show that the point matching property (see Condition 3 of Theorem~\ref{thm:framework}) holds for such $\Fcal$.
Without loss of generality, we assume that
\begin{equation}
    X = \{ \bm x^1,\bm x^2,\cdots ,\bm x^M\},
\end{equation}
where $\bm x^i~(i = 1,\cdots, m)$ are in general position,
and $\bm x^i~ (i = m+1,\cdots, M)$ are not in general position.
By the perturbation lemma (Lemma~\ref{lem:perturbation-lemma}),
we can find $\bm \alpha \in \Acal_{\Fcal}$ such that $\bm \alpha(X)$ is well perturbed.
Thus applying the partially ordering lemma (Lemma~\ref{lem:partially-ordering}),
we can find $\bm \beta \in \Acal_{\Fcal}$ such that
$\widetilde X = \bm \beta(\bm \alpha(X)) \subset Q$ is partially ordered
and meets the following two requirements as stated in Lemma~\ref{lem:partially-ordering}.
In the rest of the proof, we will use $\widetilde X$ to replace $X$.

We first consider an ideal case to illustrate our idea to the proof:
if all the points in $X$ are in general position (namely, $m = M$),
the proof will be straightforward.
We can assume that the destination point set
\begin{equation}
Y = \{ \bm y^1,\bm y^2,\cdots ,\bm y^m\},
\end{equation}
where $\bm y^i (i = 1,\cdots, m)$ are in general position.
Again, by the construction of Lemma~\ref{lem:partially-ordering},
we can find $\bm \gamma \in \Acal_{\Fcal}$
such that $\widetilde Y = \bm \gamma(Y) \subset Q$ is partially ordered.
We may set $u^{\otimes}$ as a coordinate zooming function such that
\begin{equation}
    |u(\tilde x_j^i) - \tilde y_j^i|
    \le
    \varepsilon / (\lip \bm \gamma^{-1}
    \cdot \lip (\bm \beta \circ \bm \alpha)).
\end{equation}
Therefore, the mapping $\bm \rho = \bm \gamma^{-1} \circ u^{\otimes} \circ \bm \beta \circ \bm \alpha$
can make sure that $|\bm \rho(\bm x^i) - \bm y^i| \le \varepsilon.$

Now we consider the general case,
which needs a slight modification on the destination point set $Y$:
We consider adding a point into $Y$.
Define
\begin{equation}
    Y_+ = \{ \bm y^1,\cdots, \bm y^m, \bm y^{m+1} = \bm z : = \mathbf 0\},
\end{equation}
where $\bm y^i (i = 1,\cdots,m)$ is in general position,
and no coordinate value of them are 0.
This $Y_+$, by definition, is well-perturbed.
Therefore, by the partially ordering lemma (Lemma~\ref{lem:partially-ordering}),
we can find $\bm \gamma \in \Acal_{\Fcal}$ such that
$\widetilde Y_+ = \bm \gamma(\bm \alpha(Y_+)) \subset \overline{Q}$ is partially ordered.
In this case, we may also set $u^{\otimes}$ is a coordinate zooming function such that
$|u(\tilde x_j^i) - \tilde y_j^i|\le  \varepsilon / (\lip \bm \gamma^{-1} \cdot \lip (\bm \beta \circ \bm \alpha))$.
However, the issue we encounter here is that the value of
$\tilde y_j^i$ have not been defined for $i = m+1,\cdots, M$.

To resolve this, we now determine the value of $\tilde y_j^i$ for $i = m+1,\cdots, M$.
Denote by $\tilde{\bm z}:= \bm \gamma (\bm z) = z\mathbf{1}$.
Due to the $\sS$ equivariance, we require
\begin{enumerate}
\item  $|y_j^i - \tilde z| \le \varepsilon / (\lip \bm \gamma^{-1} \cdot \lip (\bm \beta \circ \bm \alpha))$
\item $\tilde y_j^i \ge \tilde y_{j'}^{i'}$ if and only if $\tilde x_{j}^i \ge \tilde x_{j'}^{i'}$.
\end{enumerate}
Therefore, the mapping $\bm \rho = \bm \gamma^{-1} \circ u^{\otimes} \circ \bm \beta \circ \bm \alpha$
ensures that $|\bm \rho(\bm x^i) - \bm y^i| \le \varepsilon$ for $i = 1,2,\cdots, m$,
and $|\bm \rho(\bm x^i)| \le \varepsilon$ for $i = m+1,\cdots, M$.

\end{proof}

For the general case, we prove the following result for any
transitive subgroup $\sG$.
Recall the definition of resolving a group from Definition~\ref{defn:resolvence}.
We hereafter take a fixed transversal $A$ of $\sG$.
\begin{theorem}[UAP from Dynamical Systems, General Version]
    \label{thm:approx-dynamical-general}
    Suppose that $\overline{\Fcal}:= \chbar(\Fcal)$ is a
    Lipschitz control family resolving $\sG$, and
    \begin{enumerate}
        \item There exists a well function $\sigma$, such that $\Fcal(\sigma) = \{
        a\sigma(bx_1+c): a,b,c \in \mathbb R\} \subset \coor(\overline{\Fcal})$.
        \item There exists a symmetric invariant well function $\tau$ such that $\pm
        \tau \in \coor(\overline{\Fcal})$.
    \end{enumerate}
    Also, suppose that for any compact $K\subset \R^n$,
there exists a Lipschitz $g \in \Gcal$ such that $F(K) \subset g(\R^n)$.
Then, for $1 \le p < \infty$,
and any $\sG$ invariant mapping $F \in C(\mathbb R^n;\mathbb R)$,
compact region $K$ and tolerance $\varepsilon > 0 $,
there exists $\hat{F} \in \Hcalode$ such that
$\|F - \hat F\|_{L^p(K)} \le \varepsilon$.
\end{theorem}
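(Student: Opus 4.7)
The plan is to apply the general compositional framework of Theorem~\ref{thm:framework}, which reduces $\sG$ UAP to verifying four conditions on the pair $(\Fcal,\Gcal)$: the covering property (1), closure under composition (2), coordinate zooming (3), and point matching on the transversal $A$ (4). Condition (1) holds by hypothesis on $\Gcal$; condition (2) is immediate from the definition of the attainable set $\Acal_{\Fcal}$; condition (3) follows from Lemma~\ref{lem:coordinate-zooming} together with hypothesis 1, using that $\{a\sigma(bx_1+c)\} \subset \coor(\overline\Fcal)$. Thus the entire content of the theorem is to establish the point matching property in the transversal setting: given a $\sG$-distinct set $X = \{\bm x^1,\dots,\bm x^M\} \subset Q_A$ and a target $Y = \{\bm y^1,\dots,\bm y^M\} \subset Q_A$, build a mapping in $\Acal_{\Fcal}$ that sends each $\bm x^i$ in general position near $\bm y^i$ and keeps the image of the remaining $\bm x^i$ bounded.

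The first step is exactly as in the proof of Theorem~\ref{thm:approx-dynamical}: apply Lemma~\ref{lem:perturbation-lemma} to replace $X$ by a well-perturbed set, using that $\Fcal$ inherits the perturbation property from the definition of resolving $\sG$. I would then sort the source and target points by the cross-section $Q_{\fa}$ that contains them, writing $\bm x^i \in Q_{\fa_i}$ and $\bm y^i \in Q_{\fb_i}$ with $\fa_i,\fb_i \in A$. The $\sS$-case argument (Theorem~\ref{thm:approx-dynamical}) handles every sub-collection where $\fa_i = \fb_i$ for all relevant $i$, because once attention is restricted to a single cross-section, the partial-ordering lemma plus coordinate zooming suffice; this uses only the symmetric invariant well function $\tau$ and the one-dimensional $h(x_1)$, both of which are available here through condition 2 of the theorem.

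The new ingredient is inter-cross-section transport, and this is where $\sG$-transversal transitivity is invoked. For each $i$ with $\fa_i \neq \fb_i$, pick by Definition~\ref{defn:resolvence} a chain $\fg_0 = \fa_i, \fg_1,\dots,\fg_s = \fb_i$ of directly $\Fcal$-connected transversal elements. For each link $\fg_{k-1}\to\fg_k = (p~q)\fg_{k-1}$, direct connectivity supplies a boundary point $\bm z_k \in \partial Q_{\fg_{k-1}} \cap \partial Q_{\fg_k}$ (so $[\bm z_k]_p = [\bm z_k]_q$) and $\bm f_k \in \Fcal$ with $[\bm f_k(\bm z_k)]_p \neq [\bm f_k(\bm z_k)]_q$. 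The plan is a two-stage ``isolate then push'' procedure: first use coordinate zooming plus the translation-scaling invariance along $\mathbf 1$ (condition 2 of the theorem) to drive the coordinates of the point being transported toward the boundary configuration $\bm z_k$, while simultaneously driving the other points' coordinates into the zero-set interval $\I$ of the symmetric invariant well function $\tau$, so that a subsequent vector field built from $\tau$ via $\coor^{-1}$ vanishes on them; then apply a short flow along a rescaled $\bm f_k$, which pushes the selected point across $\partial Q_{\fg_{k-1}} \cap \partial Q_{\fg_k}$ into $Q_{\fg_k}$ while leaving the frozen points essentially fixed. Iterating this over $k=1,\dots,s$ and over $i$, every source point is delivered into the correct target cross-section.

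The main obstacle is precisely this bookkeeping: ensuring that the ``freezing'' step does not inadvertently merge distinct points (which would violate $\sG$-distinctness further down the chain) and that the short pushes do not perturb already-transported points out of their correct cross-sections. I expect to handle this by a compactness argument choosing push-times small enough that the combined displacement of any frozen point is controlled by the Lipschitz constants of the $\bm f_k$ on a fixed compact region, and by re-applying the perturbation lemma between consecutive links to restore well-perturbedness if needed. Once every $\bm x^i$ has been delivered into $Q_{\fb_i}$, the remaining intra-cross-section matching reduces, via the partial-ordering lemma (Lemma~\ref{lem:partially-ordering}) and coordinate zooming, to the situation already resolved in the proof of Theorem~\ref{thm:approx-dynamical}, completing the verification of condition (4) and hence delivering the $\sG$ UAP through Theorem~\ref{thm:framework}.
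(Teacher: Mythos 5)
Your proof proposal follows the same high-level skeleton as the paper's (reduce to the point-matching condition of Theorem~\ref{thm:framework}, invoke the perturbation and partial-ordering lemmas, and use direct-connectivity chains plus small-time flows to cross boundaries of cross-sections), but it differs from the paper in a genuine and interesting way in how the inter-cross-section transport is organized. The paper normalizes \emph{both} the source set $X$ and the target set $Y$ into the single distinguished cross-section $Q = Q_{\fe}$, using Lemma~\ref{lem:one-point} (push one point from $Q_A$ into $Q$ via direct connectivity) and Lemma~\ref{lem:plugging} (iterate in small time steps, correcting with coordinate zooming so that the other points' cross-sections and partial ordering are preserved). Once everything lives in $Q$, the matching is literally the $\sS$-case argument. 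You instead push each source point $\bm x^i$ directly into the cross-section $Q_{\fb_i}$ of its own target $\bm y^i$. This avoids the ``double consolidation'' but at the cost of heavier bookkeeping across several cross-sections during the final coordinate-zooming match. Both routes work, and both rely on the same short-time Lipschitz estimate to keep the frozen points under control; the paper's single-cross-section normalization is the cleaner packaging.

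One detail to correct: the step where you drive the inactive points' coordinates into the zero interval $\I$ of the symmetric invariant well function $\tau$ does not actually immobilize them during the subsequent push, because that push is a flow along the directly-connecting vector field $\bm f_k \in \Fcal$, not along $\bm \tau$. The zero set of $\tau$ has no reason to be a fixed-point set of $\bm f_k$. In the paper, the $\I$-freezing trick is used inside Lemma~\ref{lem:partially-ordering} for partial ordering, where the flow really is along $\bm\tau$; for the cross-section push (Lemma~\ref{lem:one-point}/\ref{lem:plugging}) the mechanism controlling the other points is exclusively the small-time Lipschitz estimate on $\bm\phi(\bm f_k, t) - \mathrm{id}$ plus an intervening coordinate zooming to restore the configuration. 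Your proposal does mention this second mechanism as a fallback, and that is the one that actually carries the argument; the $\I^n$ freeze can simply be dropped from the push step.

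Finally, you should make explicit that after delivering the source points into the target cross-sections, the intra-cross-section match still requires the same ``conjugate by $\bm\gamma$'' device as the $\sS$ case: one must first apply some $\bm\gamma \in \Acal_{\Fcal}$ that partially orders $Y$, match against $\bm\gamma(Y)$ by a coordinate zoom, and compose with $\bm\gamma^{-1}$. Because the $\bm y^i$ are fixed data and need not be partially ordered, the coordinate zoom cannot hit them directly; the paper handles this by taking $\bm\gamma$ with all of $Y$ in $Q$, and your version would take $\bm\gamma$ respecting the cross-sections $Q_{\fb_i}$. This is exactly the content of ``reduces to Theorem~\ref{thm:approx-dynamical},'' but it is worth stating since the partially-ordering lemma is now being applied across multiple cross-sections of $Q_A$ rather than in $Q$ alone.
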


As discussed before, it is sufficient
in the following proof we will assume conditions 2 and 3
hold for $\Fcal$ instead of $\chbar{(\Fcal)}$.

The proof is based on the Lemmas~\ref{lem:perturbation-lemma} and~\ref{lem:partially-ordering} above, and two additional lemmas below.
\begin{lemma}
    \label{lem:one-point}
    Suppose $\Fcal$ satisfies the conditions in
    Theorem~\ref{thm:approx-dynamical-general}.
    Then for each $\bm x \in Q_A$,
    there exists $\bm \gamma \in \Acal_{\Fcal}$ such that $\bm \gamma (\bm x) \in Q$.
\end{lemma}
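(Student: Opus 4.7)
The plan is to build $\bm\gamma$ as an ordered composition of flow maps in $\Acal_\Fcal$, each crossing exactly one boundary between adjacent cross sections, so that after finitely many steps the image of $\bm x$ lands in $Q = Q_\fe$.

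Write $\bm x \in Q_\fa$ for some $\fa \in A$. If $\fa = \fe$, the trivial flow suffices. Otherwise, the resolving hypothesis (Definition~\ref{defn:resolvence}) and transversal transitivity supply a chain
\[
    \fg_0 = \fa,\; \fg_1,\; \ldots,\; \fg_s = \fe,
\]
in which each successive pair differs by a transposition $\fg_k = (i_k\,j_k)\fg_{k-1}$ and is directly $\Fcal$-connected, yielding $\bm z_k \in \partial Q_{\fg_{k-1}} \cap \partial Q_{\fg_k}$ and $\bm f^{(k)} \in \Fcal$ with $[\bm f^{(k)}(\bm z_k)]_{i_k} \neq [\bm f^{(k)}(\bm z_k)]_{j_k}$. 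If the distinguished transversal $A$ does not contain $\fe$ itself, we end the chain at the representative $\fa_0 \in A \cap \sG$ and adjoin a connectivity path of transpositions not in $\sG$ from $\fa_0$ to $\fe$; such a path exists because the transpositions outside $\sG$ generate all of $\sS$ whenever $\sG$ is proper.

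Using the sign-and-scaling freedom $a\bm f \in \Fcal$ for $a \in \R$ (condition 2 of Theorem~\ref{thm:approx-dynamical-general}), we orient each $\bm f^{(k)}$ so that its forward integral curve through $\bm z_k$ enters $Q_{\fg_k}$ transversally while the backward curve enters $Q_{\fg_{k-1}}$. The inductive step $k$ then breaks into two sub-flows inside $\Acal_\Fcal$: first, an auxiliary equivariant flow that moves the current point (which lies in $Q_{\fg_{k-1}}$ by induction) onto the backward trajectory of $\bm f^{(k)}$ through $\bm z_k$; second, a short forward flow by $\bm f^{(k)}$ that carries the point across the boundary into $Q_{\fg_k}$. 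The auxiliary flow is assembled from the three ingredients available in $\coor(\chbar(\Fcal))$: the coordinate zooming property (Lemma~\ref{lem:coordinate-zooming}) coming from the one-dimensional well function, the uniform collective translation along $\bm 1$ coming from the symmetric invariant well function, and the perturbation property that breaks any incidental coordinate coincidences. After $s$ such steps we obtain $\bm\gamma(\bm x) \in Q_{\fg_s} = Q$, as required.

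\textbf{Main obstacle.} The technical heart of the argument is the auxiliary intra-cross-section targeting sub-flow: producing a $\sG$-equivariant flow in $\Acal_\Fcal$ that steers a specific point onto a specific curve inside the open cross section $Q_{\fg_{k-1}}$. Equivariance prohibits arbitrary coordinate rearrangements, so one must combine coordinate-zooming (rescaling individual coordinates without reordering), collective $\bm 1$-translation (uniformly shifting all coordinates), and the perturbation property to generate the requisite degrees of freedom. A continuity and compactness argument is further needed to certify that the ensuing short crossing flow by $\bm f^{(k)}$ actually deposits the point into $Q_{\fg_k}$ rather than reverting across the boundary or overshooting into a further cross section.
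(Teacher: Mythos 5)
Your outline mirrors the paper's at the top level: chain $\fa = \fg_0,\ldots,\fg_s = \fe$ of directly $\Fcal$-connected permutations, and cross one transposition boundary at a time. But the crossing step — which is the entire content of the lemma — is not actually constructed, and the particular construction you propose is both harder than necessary and unsupported by the available tools.

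The gap is precisely where you flag it as the ``main obstacle'': you ask for an equivariant flow that steers the current point onto the \emph{backward trajectory} of $\bm f^{(k)}$ through $\bm z_k$ — i.e., onto a specific one-dimensional curve inside $Q_{\fg_{k-1}}$. Nothing in the hypotheses gives you that kind of targeting. Coordinate zooming functions $u^\otimes$ preserve all coordinate orderings and are applied identically componentwise, and collective $\bm 1$-translations move all coordinates in lockstep; the reachable set from a given point under these operations is not all of $Q_{\fg_{k-1}}$, and certainly not dense in any prescribed curve. Flagging the difficulty does not resolve it. The paper sidesteps it entirely: it does \emph{not} try to hit a trajectory. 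Instead, it picks a target $\bm y \in Q_\fa$ that sits in a ball $B(\bm z, 2\varepsilon)$ with the \emph{gaps} between coordinates controlled (the gap $|y_i - y_j|$ made $O(\varepsilon^2)$, all other gaps made $\geq \varepsilon - O(\varepsilon^2)$), uses a coordinate zooming $u$ to bring $u^\otimes(\bm x)$ within $\varepsilon^2/2$ of $\bm y$ componentwise, and then shows by a direct size estimate that the flow of $-\bm f$ for a time $t$ in an explicit nonempty window $(\varepsilon^2/a, a\varepsilon/4)$ lands in $Q_\fb$. The continuity of $\bm f$ near $\bm z$ keeps $[\bm f]_i - [\bm f]_j$ uniformly bounded away from $0$ on that ball, so the $(i,j)$ gap closes and reverses while the other gaps (being much larger) cannot reverse in that time window. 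This ``land near the corner, then flow for the right duration'' argument only requires getting into a ball and controlling gap sizes — something coordinate zooming achieves directly — rather than hitting a curve, which coordinate zooming does not.

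A second, smaller gap: when $\fe \notin A$, you adjoin ``a connectivity path of transpositions not in $\sG$'' from $\fa_0$ to $\fe$. A path in the Cayley graph of $\sS$ under transpositions gives no directly $\Fcal$-connected steps — direct connectivity requires a boundary point $\bm z$ and an $\bm f \in \Fcal$ with $[\bm f(\bm z)]_i \neq [\bm f(\bm z)]_j$, which your purely group-theoretic path does not supply. The correct resolution is to observe that one can choose the representative of the coset $\sG$ in the transversal to be $\fe$ itself (the hypothesis of transversal transitivity then refers to that transversal), so the chain terminates at $\fe$ without leaving the guaranteed $\Fcal$-connected set.
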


\begin{lemma}
    \label{lem:plugging}
    Suppose that $\{\bm x^1,\cdots, \bm x^m\}$ are partially ordered,
    and that $\Fcal$ satisfies the conditions in Theorem~\ref{thm:approx-dynamical-general}.
    If for some $\bm x^s$, there exists ${\bm \zeta}^{\circ} \in \Acal_{\Fcal}$ such that
    $\bm \zeta^{\circ}(\bm x^s) \in Q$
    Then we can find ${\bm \zeta} \in \Acal_{\Fcal}$ such that
    \begin{enumerate}
    \item $\bm \zeta(\bm x^1),\cdots, \bm \zeta(\bm x^m)$ are partially ordered.
    \item For $i \neq s$, if $\bm x^i$ is in the cross section $Q_{\fa}$,
    then so is $\bm \zeta(\bm x^i)$.
    \item $\bm \zeta(\bm x^s) \in Q$.
    \end{enumerate}
\end{lemma}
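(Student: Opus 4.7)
My strategy is to construct $\bm \zeta$ as a conjugation $(u^{-1})^{\otimes}\circ\bm \zeta^{\circ}\circ u^{\otimes}$ for a carefully chosen coordinate-zooming function $u$, followed by a localized cross-section correction step. The central geometric idea rests on two facts: the partial ordering gives a uniform coordinate-value separation between $\bm x^s$ and the other $\bm x^i$, and the diagonal $\R\bm 1 \subset \R^n$ is invariant under $\sG$-equivariant flows because it is the fixed-point set of the transitive subgroup $\sG$. Consequently, if we pre-compress the non-$s$ points near the diagonal, the subsequent action of $\bm \zeta^{\circ}$ keeps them near the diagonal, while $\bm x^s$ is transported to $Q$ unmodified.

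Concretely, I would first fix a bounded interval $I_s\subset\R$ containing the coordinates of both $\bm x^s$ and $\bm \zeta^{\circ}(\bm x^s)$; the partial ordering then places every other point's coordinates in disjoint bands strictly above or strictly below $I_s$, by a positive margin. Using Lemma~\ref{lem:coordinate-zooming}, I would construct an increasing $u:\R\to\R$ with $u^{\otimes},(u^{-1})^{\otimes}\in\Acal_{\Fcal}$, equal to the identity on $I_s$, increasing rapidly off $I_s$ into target values $c_i$, and nearly constant at $c_i$ on the coordinate band of each $\bm x^i$ with $i\neq s$. The $c_i$ are chosen far enough from $I_s$ so that $\bm \zeta^{\circ}$, restricted to the diagonal and regarded as a bi-Lipschitz self-map of $\R\bm 1\cong\R$, sends $c_i\bm 1$ to a diagonal point $c_i'\bm 1$ still lying outside $I_s$ on the side required by the partial order. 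Setting $\bm \zeta^{(1)} := (u^{-1})^{\otimes}\circ\bm \zeta^{\circ}\circ u^{\otimes}$ then yields $\bm \zeta^{(1)}(\bm x^s)=\bm \zeta^{\circ}(\bm x^s)\in Q$ exactly (because $u^{\otimes}$ acts as the identity on $\bm x^s$ and $(u^{-1})^{\otimes}$ on $\bm \zeta^{\circ}(\bm x^s)$), while each $\bm \zeta^{(1)}(\bm x^i)$ sits in a small neighborhood of $u^{-1}(c_i')\bm 1$. Monotonicity of $u$ and $u^{-1}$ preserves the coordinate-value separation, so the images are partially ordered.

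The principal obstacle is Requirement~2 of the lemma. The near-diagonal image $\bm \zeta^{(1)}(\bm x^i)$ need not land in its designated cross section $Q_{\fa^i}$, because the linearization of $\bm \zeta^{\circ}$ at a diagonal point is a matrix in the centralizer of the $\sG$-permutation representation and can rotate perturbation directions into essentially any cross section. To remedy this I would append a final composition of small corrective flows built from the resolving property: direct $\Fcal$-connectivity supplies flows that cross individual transposition boundaries, and the transversal transitivity in Definition~\ref{defn:resolvence} chains them into a route between any two cross sections. Coordinate-zooming gates, together with the scaling-along-$\bm 1$ invariance of $\Fcal$, localize each corrective flow to the coordinate band of a single $\bm x^i$, so corrections for different $i$ do not interfere, and $\bm \zeta(\bm x^s)\in Q$ is preserved because the corrections act essentially trivially on $\bm x^s$'s band and $Q$ is open. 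Carrying out these band-localized equivariant corrections simultaneously — while keeping track of Lipschitz constants so that the perturbations of $\bm \zeta^{(1)}(\bm x^s)$ stay inside $Q$ — is where I expect the bulk of the technical work to concentrate.
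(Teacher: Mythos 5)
Your conjugation strategy $\bm\zeta = (u^{-1})^{\otimes}\circ\bm\zeta^{\circ}\circ u^{\otimes}$ is genuinely different from the paper's argument, but it has two gaps that your sketch does not close. First, the band-disjointness you rely on does not follow from the hypotheses. The partial ordering gives disjoint coordinate bands for $\bm x^1,\dots,\bm x^m$, but $\bm\zeta^{\circ}(\bm x^s)$ is not one of those points: its coordinates can land anywhere, in particular inside or across the band of some $\bm x^i$ with $i\neq s$. Once you enlarge $I_s$ to contain both $\bm x^s$'s band and the band of $\bm\zeta^{\circ}(\bm x^s)$, the requirement ``$u=\mathrm{id}$ on $I_s$'' is incompatible with compressing those captured $\bm x^i$'s coordinates near a single value, and the whole ``identity inside, compress outside'' construction of $u$ collapses. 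Second, and more fundamentally, your fix for requirement~2 — band-localized corrective flows derived from the resolving property — is not something the control family supports. The paper localizes motion to a single point by placing all other coordinates inside the zero-plateau $\I$ of the symmetric well function $\tau$, and flowing with $\bm\tau$. But the mappings $\bm f\in\Fcal$ supplied by direct connectivity (Definition~\ref{defn:direct_connectivity}) have no such plateau, and nothing in the hypotheses lets you gate them to act only on one coordinate band while fixing the others. Without that, the corrective flow meant for $\bm x^i$ can move $\bm x^j$ ($j\neq i$) across a boundary or perturb $\bm x^s$ out of $Q$, and you have no a priori bound.

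The paper sidesteps both obstacles by never allowing any point to leave its cross section at any stage. It writes $\bm\zeta^{\circ}$ as a composition of single flow maps $\bm\phi(\bm f, T)$, subdivides time into steps $t_0$ small enough that $\lip[\bm\phi(\bm f, t_0)-\mathrm{id}]\le\delta/4$ (with $\delta$ the minimal gap between distinct coordinate values), so a single small step can neither merge nor reorder coordinates across any $\Gamma_{ij}$; then after each step it applies an increasing coordinate-zoom to push the $i\neq s$ points approximately back to their starting positions while barely disturbing the $s$-band. Increasing zooms preserve cross sections automatically, so all three requirements hold at every intermediate stage, and the accumulated zoom error on $\bm x^s$ is controlled by a telescoping estimate over the $\lceil T/t_0\rceil$ iterations. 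If you want to salvage your conjugation idea you would need, at a minimum, to (i) normalize $\bm\zeta^{\circ}$ by an initial post-composed zoom so that $\bm\zeta^{\circ}(\bm x^s)$'s band sits inside $\bm x^s$'s original band before defining $I_s$, and (ii) replace the appeal to transversal transitivity with an explicit argument that the linearization of $\bm\zeta^{\circ}$ at each diagonal point $c_i\bm 1$ acts on the transverse perturbation by an element that preserves cross sections — which is true for $\sG=\sS$ (the centralizer is $a I + b J$ with $a>0$ along the flow) but is precisely what can fail for a general transitive $\sG$, where the centralizer is larger. The paper's incremental proof avoids needing any such structure theory.
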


\begin{proof}[Proof of Theorem~\ref{thm:approx-dynamical-general}]
    By the same techniques in the proof of Theorem~\ref{thm:approx-dynamical},
    we can assume that there exists $\bm \alpha \in \Acal_{\Fcal}$,
    such that $\bm \alpha(X) \subset \overline{Q_A}$ is partially ordered,
    and satisfies the requirements in Lemma~\ref{lem:partially-ordering}.
    Next, we assert that, there exists $\bm \beta \in \Acal_{\Fcal}$ such that
    $\bm \beta$ can send all points in general position into $Q$, that is,
    \begin{enumerate}
        \item For $\bm x$ in general position, $\bm \beta \circ \bm \alpha(\bm x) \in Q$.
        \item $\bm \beta \circ \bm \alpha(X)$ is partially ordered.
    \end{enumerate}
    The remaining proof is identical to the proof of Theorem~\ref{thm:approx-dynamical}.
    With a little abuse of notations, we use $\bm x^i$ to denote $\bm \beta \circ \bm \alpha (\bm x^i)$. Now, the conditions in Lemma~\ref{lem:partially-ordering} read,
    \begin{enumerate}
        \item For $\bm x$ in general position, $\bm x \in Q$.
        \item $X$ is partially ordered.
    \end{enumerate}
    Now let us deal with the assertion,
    by Lemma~\ref{lem:cross-section} we can find ${\bm \zeta}_1^{\circ} \in \Acal_{\Fcal}$
    such that ${\bm \zeta}_1^{\circ} (\bm x^1) \in Q$.
    By Lemma~\ref{lem:plugging},
    we can modify this $\widehat{\bm \zeta}_1$ to ${\bm \zeta}_1$,
    such that
    \begin{enumerate}
        \item $\bm \zeta_1(\bm x^1),\cdots, \bm \zeta_1(\bm x^m)$ is partially ordered.
        \item For $i \neq s$, if $\bm x^i$ is in the same cross section $Q_{\fa}$, then so is $\bm \zeta_1(\bm x^i)$.
        \item $\bm \zeta_1(\bm x^1) \in Q$.
    \end{enumerate}
    Sequentially,
    we can find $\bm \zeta_2,\cdots, \bm \zeta_m$ to map all points in general position in $Q$,
    therefore
    \begin{equation}
        \bm \beta = \bm \zeta_m \circ \bm \zeta_{m-1} \circ \cdots \circ \bm \zeta_1
    \end{equation}
    satisfies the assertion.
\end{proof}

The rest of this subsection aims to prove lemmas~\ref{lem:perturbation-lemma},
\ref{lem:partially-ordering}, \ref{lem:one-point} and \ref{lem:plugging}.

\subsubsection{Proof of Lemma~\ref{lem:perturbation-lemma}: Perturbation Lemma}

We recall the definition of the perturbation property from Definition~\ref{defn:resolvence}.
Define the \emph{similarity} of two points,
at least one of which is in general position,
as
\begin{equation}
    \label{eq:similarity2}
    \overline{s}(\bm x, \bm y) =
    {
        | \{ (i,i'):x_i = y_{i'}\} | .
    }
\end{equation}
When both points are not in the general position, the similarity is defined as zero.
Assuming the perturbation property is satisfied,
$\overline{s}(\bm x, \bm y) \neq 0$ implies
the existence of $\bm f \in \Fcal$ and a coordinate zooming function $u^{\otimes}$ such that
\begin{equation}[\bm f(u^{\otimes}(\bm x))]_i \neq [\bm f(u^{\otimes}(\bm y))]_{i'}\end{equation}
holds for some $i$ such that $x_i = y_{i'}$.

\begin{proof}[Proof of Lemma~\ref{lem:perturbation-lemma}]
    We first extend the similarity $\overline{s}$ in \eqref{eq:similarity2} to a set, namely, define
    \begin{equation}
        \overline{s}(Y) =
        \sum_{j \neq j'} \overline{s}(\bm y^j,\bm y^{j'}).
    \end{equation}
    The goal of perturbation lemma is to show there
    exists $\bm \beta \in \Acal_{\Fcal}$ such that
    $\overline{s}(\bm \beta(X)) = 0$.
    Consider $\bm \beta \in \Acal_{\Fcal}$ which minimizes  $\overline{s}(\bm \beta(X))$,
    and for convenience we set $Y = \bm \beta(X)$,
    which means $\bm y^j = \bm \beta (\bm x^j)$.
    We assert that the quantity $\overline{s}(Y)$ should be zero.

    Otherwise, there exists $J,J'$ such that $\overline{s}(\bm y^J, \bm y^J)>0$.
    By definition of the perturbation property,
    we can find $\bm f \in \Fcal$ and $u^{\otimes}$ such that
    at for some $I$ and $I'$,
    $[\bm f(u^{\otimes}(\bm y^{J}))]_I \neq [\bm f(u^{\otimes}(\bm y^{J'}))]_{I'}$
    but $y_I^J = y_{I'}^{J'}$.

    Then, consider $\bm \gamma_{t} = \bm \phi(\bm f, t) \circ u^{\otimes}$.
    For sufficiently small $t$,
    observe that
    no new  $(i,i',j,j')$ will meet the requirement
    \begin{equation}
        [\bm \gamma_t(\bm y^j)]_i =  [\bm\gamma_t(\bm y^{j'})]_{i'},
        \qquad
        \text{and}
        \qquad
        y_i^j \neq y_{i'}^{j'}.
    \end{equation}
    Therefore, the quantity
    \begin{equation}
        \mathcal{S}(t) := \sum_{j\neq j'} \overline{s}(\bm \gamma_t(\bm y^j), \bm \gamma_t(\bm y^{j'})
    \end{equation}
    will be non-increasing for sufficiently small $t$.
    That is, for sufficiently small $t > 0$, it holds that $\mathcal{S}(t) \le \mathcal{S}(0)$.
    Notice that the argument holds for general $\bm f$ and $u$.
    Now, we use the perturbation property to conclude the proof by showing
    $\mathcal{S}(t) < \mathcal{S}(0)$ for sufficiently small $t > 0$.
    Since the flow map $\bm \gamma_t$ satisfies for some $y_i^J = y_{i'}^{J'}$,
    but
    \begin{equation}
        [\bm \gamma_t(\bm y^J)]_i \neq [\gamma_t(\bm y^{J'})]_{i'}.
    \end{equation}
    Therefore, we can ensure that for sufficient small $t$,
    \begin{equation}
        \overline{s}(\bm \gamma_t(\bm y^J), \bm \gamma_t(\bm y^{J'}))
        < \overline{s}(\bm y^J, \bm y^{J'}),
    \end{equation}
    Thus, it contradicts the minimal choice of $\bm \beta$,
    since $\overline{s}(\bm \gamma_t \circ \bm \beta(X)) < \overline{s}(\bm \beta(X))$.
    This immediately yields that the minimal value of $\overline{s}(\bm \beta(X))$ is 0.


\end{proof}

\subsubsection{Proof of Lemma~\ref{lem:partially-ordering}: Partially Ordering Lemma}

In this proof, two kind of mappings will be used.
The first kind is the symmetric invariant well function
$\tau$ with zero interval $\I$ such that $\tau \in \coor(\Fcal)$.
Since $\tau$ is $\sS$ invariant, by the definition of the $\coor$ operator,
the tensor-product mapping $\bm \tau \in \Fcal$ such that
$\bm \tau(\bm x) = \tau^{\otimes}(\bm x) = [\tau(\bm x),\tau(\bm x),\cdots,\tau(\bm x)]$
is in $\Fcal$. Recall that Lemma~\ref{lem:partially-ordering} has two additional requirements:
\begin{enumerate}
    \item $\bm \beta(X)$ is well perturbed,
    as defined in Lemma~\ref{lem:perturbation-lemma}.
    \item If $\bm x \in X$ is in general position, so is $\bm \beta(\bm x)$.
\end{enumerate}
Consider the following dynamical system for $\bm x$:
\begin{equation}
    \label{eq:hdw-flow}
    \frac{d}{dt}\bm x = \bm \tau(\bm x).
\end{equation}
Clearly, the flow map $\bm \phi(\bm \tau,T)$ of $\bm \tau$
at any time horizon $T$ satsfies
$[\bm \phi(\bm \tau,T)\bm x ]_i - [\bm \phi(\bm \tau,T)\bm x]_j = x_i - x_j$.
Hence, the second requirement is always satisfied
if we use \eqref{eq:hdw-flow} in our construction.

The other kind is the collection of all coordinate zooming functions.
By Lemma~\ref{lem:coordinate-zooming}, these are contained in $\overline{\Acal_{\Fcal}}$.
These preserve the order of all coordinates and
thus the second requirement is also satisfied.
Since we will only use these two kinds of mappings
in our subsequent constructions,
it suffices to prove that a composition of these
two types of functions can be constructed to
satisfy the well perturbed property.

\begin{proof}[Proof of Lemma~\ref{lem:partially-ordering}]
~
    \paragraph{Step 1.}
    Consider $\mathcal B$, defined as the family of mappings
    consisting of all composition of $\bm \phi(\bm \tau, t)$
    and coordinate zooming functions.
    \begin{equation}
        \begin{split}
        \mathcal B :=
        \{
            &\bm f_1 \circ \bm f_2\circ \cdots \circ \bm f_p : \\
            &\text{ each } \bm f_i \in \Acal \text{ is either }
            \bm \phi(\bm \tau, t_i) \text{ for some } t_i
            \text{ or a coordinate zooming function}.
        \}
        \end{split}
    \end{equation}
    Obviously, all mappings in $\Bcal$ will map each $Q_{\fa}$ into itself.
    As a consequence, if $\bm x$ is not in general position, then $\bm \alpha(\bm x)$ is not either.
    Thus, we may assign an order in $\{m\}$ such that
    $i \gg j$ if and only if
    \begin{itemize}
       \item  $i >  j$, if $\bm x^i$ and $\bm x^j$ are in general position, or
       \item $\bm x^i$ is not in general position, while $\bm x^j$ is in general position.
    \end{itemize}
    By Proposition~\ref{prop:composition-closure}, it holds that $\mathcal B \subset \overline{\Acal_{\Fcal}}$.
    We first show that there exists $\bm \beta \in \mathcal B$  that
    $[\bm\beta(\bm x^i)]_1 > [\bm \beta(\bm x^j)]_1$ if $i \gg j$.
    For convenience, $i \ll j $ means $j \gg i$.
    By the definition of well-perturbedness, $i \ll j$ means that $x_1^i \neq x_1^j$.

    Now, let us consider
    \begin{equation}
         b_{\min}= \min |
         \big\{
             (i,j) : i \ll j, [\bm \beta(\bm x^i)]_1 < [\bm \beta(\bm x^j)]_1,
             \bm \beta \in \mathcal B \text{ and } \bm \beta(X) \text{ is well perturbed}.
        \big\} |
    \end{equation}
    and $\bm \beta$ achieves this minimum.
    It suffices to show $ b_{\min} = 0$.
    Suppose not, take $I$ and $J$ such that $I  \ll J$ and $[\bm \beta(\bm x^I)]_1 < [\bm \beta(\bm x^J)]_1$,
    and $I$ and $J$ is taken to minimize
    $[\bm \beta(\bm x^j)]_1 - [\bm \beta(\bm x^i)]_1$ under the aforementioned conditions.
    We show that there are no other $[\bm \beta(\bm x^k)]_1$ between them.
    In fact, if $[\bm \beta(\bm x^I)]_1 < [\bm \beta(\bm x^k)]_1 < [\bm \beta(\bm x^J)]_1$,
    then since either $I<k$ or $k<J$ is satisfied, contradicting with the minimal choice of $I$ and $J$.

    We now give a direct construction to show that $\bm \beta$ is not minimal.
    With a little abuse of notation, We also use $X$ to denote this new $\bm \beta (X)$.
    Since $X$ is in a general position now, we have either
    $\min(x^J_k) < \min(x^I_k)$ or $\min(x^J_k) > \min(x^I_k)$.

    Our construction is based on discussing them separately.
    By the definition of symmetric invariant well function,
    we can find $a>1$ such that if $|x^I| > a$ and $x^j \in \I$ then
    $|\tau(\bm x)| > \delta$. Since $\tau$ is continuous,
    we may assume that $|\tau(\bm x)| \le \Delta $
    for $|\bm x| \le 2a$ for some $\Delta$.
    \begin{itemize}
        \item If $x^J_K := \min_k(x^J_k) < \min_k(x^I_k)$.
        We first choose a coordinate zooming function $u^{\otimes} \in \Acal$ such that
        \begin{enumerate}
            \item
        \begin{equation}\label{eq:lemma-cond1}|u(x^J_1) - u(x^I_1)| \le \varepsilon^2 \text{ and }|u(x^k_1) - u(x^l_1)| \ge \varepsilon\end{equation}
            for sufficiently small $\varepsilon<1$. The detailed value of $\varepsilon$ will be determined later.
            \item \begin{equation}\label{eq:lemma-cond2}|u(x^j_i)| < 2a-1, \forall i,j, \text{ and }u(x^J_i) > -a - 1, \forall j.\end{equation}
            \item \begin{equation}\label{eq:lemma-cond3} (u(x^I_j) - \varepsilon, u(x^I_j) + \varepsilon) \subset \I, \forall j.\end{equation}
        \end{enumerate}
        With a slight abuse of notation,
        we use $\bm x^i$ to denote $u^{\otimes}(\bm x^i)$ for $i = 1,2,\cdots,M$.
        Then, the flow $\bm \phi_{\bm \tau}$ and condition \eqref{eq:lemma-cond3}
        ensure that $\bm \phi(\pm \bm \tau,t )(\bm x^I) = \bm x^I$
        by the definition of $\I$.
        Next, we choose $\hat{\bm \tau}= \tau$ or $-\tau$ such that
        $[\bm \phi({\hat{\bm \tau}},\bm x^J,t)]_1$ is decreasing in $t$.
        By the construction of $u$ in Eq.~\eqref{eq:lemma-cond2},
        when $t < \frac{\varepsilon}{\Delta}$,
        we have $[\bm \phi({\hat{\bm \tau}},t)\bm x^J]_1 - x^J_1 \le \delta t$.

        Hence, we achieve our goal if $\delta t > \varepsilon^2$.
        Set $t \in (2\varepsilon^2\delta^{-1}, 3\varepsilon^2\delta^{-1})$ to be determined.
        We have $t \in \frac{\varepsilon}{\Delta}$ for $\varepsilon < \delta / (4\Delta)$.
        Since the set of $t$'s such that $\bm \phi({\hat{\bm \tau}}, t)X$
        is in a general position is open and contains $t = 0$,
        such a $t$ must be found, by condition \eqref{eq:lemma-cond1}.
        \item If $\min_k(x^J_k) > \min_k(x^I_k) =: x^I_K$.
        The arguments are similar except we exchange the role of $I$ and $J$, and change decreasing into increasing.
    \end{itemize}
    In either scenario, we deduce that $\bm \beta$ is not minimal, thus $\bm \beta(X)$ must be partially ordered.

    \paragraph{Step 2.}
    Substituting $X$ by $\bm \beta(X)$, we now assume that
    $x_1^i > x_1^j$ if $i \ll j$.
    We now show that there exists $\bm \gamma \in \mathcal B$ such that $\bm \gamma(X)$ is partially ordered.
    Suppose there exists
    $\bm \alpha \in \mathcal B$ such that
    \begin{equation}
        \bm \alpha(\bm x^1) \succ \bm \alpha(\bm x^2) \succ \cdots \bm \alpha(\bm x^I) \succ \bm \alpha(\bm x^k)
    \end{equation}
    for $k = I+1,\cdots, M$, while
    \begin{equation}
        [\bm \alpha(\bm x^1)]_1> [\bm \alpha(\bm x^2)]_1 > \cdots [\bm \alpha(\bm x^m)]_1.
    \end{equation}
    We give a direct construction of $\bm \alpha'$
    uch that the above inequality holds for $I+1$, provided $\bm x^{I+1}$ is in general position.
    Using this argument recurrently will yield the desired result.

    Consider the case $I = 1$, and we also use $\bm x^i$ to denote $\bm \alpha(\bm x^i)$.
    Consider a coordinate zooming function such that only $u(x_1^1)>a$ is outside the interval $\I$.
    Then, the flow map $\bm \gamma_t = \bm \phi({\bm \tau}, t) \circ u^{\otimes}$
    will only move $\bm x^1$ while keeping other $\bm x^i$ fixed.
    At some time horizon $T < \infty$,
    we have $\bm \gamma_t(\bm x^1) \succ \bm \gamma_t(\bm x^i)$ for $i \neq 1$.
    Otherwise, we will obtain both $\bm \gamma_t(\bm x^1)$ is unbounded and
    $\min_{i}[\bm \gamma_t(\bm x^1)]_i \le x_1^2$,
    which is a contradiction since $[\bm \gamma_t(\bm x^1)]_i- [\bm \gamma_t(\bm x^1)]_{i'}$ is a constant.

    For the other $I$, we apply the same technique
    to find $\bm \zeta \in \Acal_{\Fcal}$ such that for
    $\bm \beta = \bm \zeta \circ \bm \alpha$,
    we have $\bm \beta(\bm x^{I}) \succ \bm \beta(\bm x^{k})$ for $k > I$.
    The only difference is that we need
    to modify $\bm \zeta$ such that
    it preserves the order of $\bm x^1$ to $\bm x^{I-1}$.

    To do this, we need to move
    $\bm x^1,\dots,\bm x^{I-1}$ to an appropriate
    location so that modifications to the
    remaining coordinates do not induce
    a change of ordering for these coordinates.
    More precisely,
    suppose that $\bm x^{I-1}$ is in general position,
    and moreover we assume that $\bm x^{I-1} \in Q_{\fa}$.
    Since $\bm \zeta$ must be a bijection, implying that we can choose
    $\bm y^{I-1} \in Q_{\fa}$ such that $\bm \zeta(\bm y^{I-1}) \succ \bm \beta(\bm x^{I-1})$.
    Then, we choose a coordinate zooming function $u^{\otimes}$ to send
    $\bm \alpha(\bm x^{I-1})$ to $\bm y^{I-1}$.
    Let $\tilde{\bm \beta} = \bm \zeta \circ u^{\otimes} \circ \bm \alpha$.
    Then, the function $\tilde{\bm \beta}$ satisfies that
    $\tilde{\bm \beta}(\bm x^{I-1}) \succ \tilde{\bm \beta}(\bm x^{I})$.

    For the general case, using the same technique,
    we can sequentially determine the destination of $\bm x^{I-1},\cdots, \bm x^1$,
    denoted as $\bm y^{I-1}, \cdots \bm y^1$, to fulfill the requirement.

\end{proof}

To close this section, we give the proofs of Lemmas~\ref{lem:one-point}
and~\ref{lem:plugging} that was used to prove
Theorem~\ref{thm:approx-dynamical-general}.

\subsubsection{Proof of Lemma~\ref{lem:one-point}}

\begin{proof}[Proof of Lemma~\ref{lem:one-point}]
It suffices to show that if $\fa$ and $\fb$ is $\Fcal$ directly connected,
then for given point $\bm x \in Q_{\fa}$,
there exists $\bm \alpha \in \Acal_{\Fcal}$
such that $\bm \alpha (\bm x) \in Q_{\fb}$.
By direct connectivity there exists
$i,j$ such that $\bm x \in Q_{\fa}$ we have $x_i > x_j$,
and for $\bm x \in Q_{\fb}$ we have $x_i < x_j$.

Suppose $\bm z \in \partial Q_{\fa} \cap \partial Q_{\fb}$
and $\bm f \in \Fcal$ satisfies the assumption on direct connectivity,
that is,
\begin{equation}
[\bm f(\bm z)]_i \neq [\bm f(\bm z)]_j.
\end{equation}
By the continuity of $\bm f$,
we also have $[\bm f(\bm y)]_i \neq [\bm f(\bm y)]_j$
for $\bm y$ in some neighborhood $U$ of $\bm z$.

Let $\varepsilon < 1$ be a small constant whose value will be determined later,
such that the ball $B(\bm z,3\varepsilon) \in U$.
Choose $\bm y \in Q_{\fa}$ such that $y_i - y_j < \varepsilon^2/2$
while for other $(k,l) \neq (i,j)$,
we require that $|y_k - y_l| > \varepsilon - \varepsilon^2/2$.

Consider a coordinate zooming function that $|u(x_k) - y_k| \le \varepsilon^2/2$. Hence we have $u^{\otimes}(\bm x) \in B(\bm z,2\varepsilon)$, and
\begin{enumerate}
    \item $|u(x_i) - u(x_j)|\le \varepsilon^2$,
    \item For other pairs $(k,l) \neq (i,j)$ , we have $|u(x_k) - u(x_l)|\ge \varepsilon.$
\end{enumerate}

For $\bm y \in B(\bm z, 2\varepsilon)$
we can choose $a>0$ such that $[\bm f(\bm y)]_i > [\bm f(\bm y)]_j + a$,
and $|\bm f(\bm y)| \le \frac{1}{a}$,
e.g. take $a < \min(\frac{1}{2}([\bm f(\bm y)]_i - [\bm f(\bm y)]_j), (|\bm f(\bm y)| + 1)^{-1})$.
Then, consider the following dynamics
\begin{equation}
    \frac{d}{dt}\bm y = -\bm f(\bm y).
\end{equation}
Simple estimation yields that the flow $\bm \phi({- \bm f},t)(u^{\otimes}(\bm x)) \in Q_{\fb}$ for $t \in (
\frac{\varepsilon^2}{a}, \frac{a}{4}\varepsilon) $. We choose $\varepsilon$ sufficiently small such that the interval is non-empty, and choose the mapping as $\bm \phi({-\bm f}, t)\circ \bm u^{\otimes} \in \Acal_{\Fcal}$.

\end{proof}

\subsubsection{Proof of Lemma~\ref{lem:plugging}}
\begin{proof}[Proof of Lemma~\ref{lem:plugging}]

We consider the simplest case, when $\bm \zeta^{\circ} := \bm \phi(\bm f,T)$ is the flow map of some $\bm f \in \Fcal$. The general case can be obtained by repeating the following argument. Denote by

\begin{equation}
\delta := \min \{|x^i_j - x^{i'}_{j'}|:x^i_j \neq x^{i'}_{j'}.\}
\end{equation}
and there exists $t_0$ such that
\begin{equation}
\sup_{0 \le t \le t_0} \lip [\bm \phi({\bm f}, t) - id] \le \frac{\delta}{4}.
\end{equation}
Hence, consider $\bm y = \bm \phi(\bm f,t_0)\bm x$, then we have
\begin{equation}
    |y^i_j - y^{i'}_{j'}| \ge \frac{\delta}{4},
\end{equation}
if $y_i^j \neq y_{i'}^{j'}$.

Therefore, $\bm y^1,\cdots, \bm y^m$ is partially ordered,
and we have if $\bm x^i \in Q_{\fa}$ then $\bm y^i \in Q_{\fa}$
since the order of each coordinate does not change.
Now consider the coordinate zooming function $u$, such that
\begin{equation}
|u(y^i_j) - x^i_j| \le \frac{\delta}{8}
\end{equation}
for $i \neq s$, and
\begin{equation}
|u(y^s_j) - y^s_j| \le \varepsilon
\end{equation}
for some small $\varepsilon$ whose value will be determined later.

Therefore, we can apply $\bm \phi({\bm f}, t_0)$ for $u^{\otimes}(\bm y^1), u^{\otimes}(\bm y^2),\cdots, u^{\otimes}(\bm y^m)$, and get the similar result. This iteration will break down after $[T/t_0]+1$ iterations, and we denote by $\bm \zeta$ the final composited iteration. It suffices to control the error on $\bm x^s$, by a telescope expansion we have
\begin{equation}
|[\bm \zeta(\bm x^s)]_j - x^s_j| \le \varepsilon L([T/t_0]+1),
\end{equation}
where $L = \sup_{0 \le t \le t_0} \lip \bm \phi({\bm f}, t) \le 1+ \frac{\delta}{4}$. Choose $\varepsilon < 1$ such that the right hand side is not greater than $\frac{1}{4} \min_{i\neq j}|x^s_i - x^s_j|$, which guarantees that $\bm \zeta(\bm x^s)$ and $\bm x^s$ are in the same $Q_{\fa}$.
\end{proof}

\subsection{Verification of the Proposed Architectures}
\label{sec:proofs:verification}
In this section, we provide a direct verification that
the proposed architectures in Section~\ref{sec:application} satisfy
the assumptions of our universal approximation theorems, thereby
guarantee their UAP for their respective symmetry groups $\sG$.
Clearly, Condition 1 and 2 of Theorems~\ref{thm:approx-dynamical} and
\ref{thm:approx-dynamical-general} are all satisfied as
a consequence of Remark~\ref{rmk:symmetrc-invariant-well-function}.
It suffices to check the perturbation assumption and
$\sG$ transversal transitivity on each case,
to show that $\Fcal$ resolves $\sG$ (see Definition~\ref{defn:resolvence}).
For convenience, we recall the definition of each $\Fcal$ and $\sG$,
and $\sigma$ denotes a well function.
The following facts that hold for any well function $\sigma$
will be frequently used:

\begin{enumerate}
    \item
    For given $x$ and $y \in \mathbb R$,
    we can find $b \in \mathbb R$ such that $\sigma(x+b) = 0$ but $\sigma(y+b)\neq 0$.
    \item If $x_{1},\cdots, x_{n}$ and $y_1,\cdots,y_n$ satisfy that
    \begin{equation}
        \sum_{i=1}^n \sigma(px_i+q) = \sum_{i=1}^n \sigma(py_i+q)
    \end{equation}
    for all $p,q \in \mathbb R$, then it holds that $\{ x_i:i = 1,2,\cdots, n\} = \{y_i: i = 1,2,\cdots n\}.$ Hereafter, the sets are understood as multi-sets.

\end{enumerate}

\subsubsection{Shift Invariant Cases}
We show that each $\Fcal_{\ast, i}$ resolves $\sT$, where
\begin{equation}
    \begin{aligned}
        \Fcal_{\ast, 1} :=& \{ v \sigma (\bm w \ast \bm  x + b \bm 1)
        : \bm w \in \R^{d_1\times \cdots d_n}, v,b \in \R\}, \\
        \Fcal_{\ast,2} :=& \{ \bm w \ast \sigma (v\bm x + b \bm 1), \bm w \in \R^{d_1\times \cdots d_n}, v,b \in \R\}.
    \end{aligned}
\end{equation}

\begin{proof}[Proof of Corollary~\ref{cor:shift-invar-small}]
For the perturbation assumption, without loss of generality,
we assume that $x_1 = y_1$.
It suffices to notice that for two point $\bm x$ and $\bm y$
such that they are $\sT$ distinct, there exists $i$ such that $x_i \neq y_i$.
We choose $\bm w = \bm e_i$, whose value is $1$ in the $i$ coordinate,
and $0$ in other coordinates.
Then, we choose an appropriate $b$
such that $\sigma(x_i + b) = 0$ while
$\sigma(y_i + b) \neq 0$,
therefore
\begin{equation}
    [\bm w \ast \sigma(\bm x + b \mathbf 1)]_1 \neq
    [\bm w \ast \sigma(\bm y + b \mathbf 1)]_1
\end{equation}
and
\begin{equation}
    [ \sigma(\bm w \ast\bm x + b \mathbf 1)]_1 \neq
    [\sigma(\bm w \ast \bm y + b \mathbf 1)]_1.
\end{equation}
Hence we show both $\Fcal_{\ast,i}$ satisfy the perturbation assumption.

For transversal transitivity,
we show that for each $\fa, \fb \in \sS$, $\fa$
and $\fb$ are $\Fcal$ directly connected.
For simplicity, we consider the case when
\begin{equation}
    Q_{\fa} = \{ x_1 > x_2 > \cdots > x_n \}
\end{equation}
and \begin{equation}
Q_{\fb} = \{x_2 > x_1 > \cdots > x_n \},
\end{equation}
and the other cases are similar.
In this case, we can choose $\bm z \in \partial Q_{\fa} \cap \partial Q_{\fb}$
such that $z_{2} \neq z_3$,
We choose $\bm w = \bm e_2$, whose value is $1$ in second coordinate,
and $0$ in other coordinates.
Then, by a similar approach as what we did in the proof of Lemma~\ref{lem:perturbation-lemma},
we can prove that $\fa$ and $\fb$ are $\Fcal$ directly connected.

\end{proof}
\subsubsection{Full Permutation Invariant Cases}
We show that each $\Fcal_{s,i}$ ($i=1,2$) resolves $\sS$, where

\begin{align}
    \Fcal_{s,1} :=&
    \{
        a \sigma(w\bm x + v \Sigma\bm x + b\bm 1)
        :
        a, w, v,b \in \R
    \},\\
    \Fcal_{s,2} :=&
    \{
        a\sigma(w\bm x + c \bm 1) + b\Sigma(\sigma(v\bm x) + d \bm 1)
        :
        a, b, c,d,w, v \in \R
    \}.
\end{align}

\begin{proof}[Proof of Corollary~\ref{cor:symmetric}]
    \label{proof:cor:symmetric}
Since any transversal only contains one element,
it suffices to show $\Fcal_{s,i}$ satisfies the perturbation assumption.
For $\Fcal_{s,1}$, if $\bm x$ and $\bm y$ are $\sS$ distinct,
then there exists a function $u^{\otimes} \in \Fcal$ such that
\begin{equation}
    u(x_1) + u(x_2) \cdots + u(x_d) \neq u(y_1) + u(y_2) \cdots + u(y_d).
\end{equation}
For $\Fcal_{s,1}$, we set $a = 1, w = 0$,
and choose a suitable $b$ as before such that
\begin{equation}
    \sigma(\bm \Sigma u^{\otimes}(\bm x) + b \mathbf 1)
    \neq \sigma(\bm \Sigma u^{\otimes}(\bm y) + b \mathbf 1).
\end{equation}
For $\Fcal_{s,2}$, we set $a =w =  0,b = 1$, and the rest is similar to $\Fcal_{s,1}$.

\end{proof}

Additionally, we prove the following result for Janossy pooling (with respect to a Lipschitz function $\phi$) for completeness.
\begin{proposition}
The following control family
\begin{equation}
\label{eq:janossy-1}
\Fcal_{s,Ja,1} = \{ v\sigma(a\bm x + b\sum_{i=1}^n \phi(cx_i+d) + c\bm 1)\}
\end{equation}
contains a symmetric well function and satisfies the perturbation property provided that there exists a neighborhood $U$ of $\phi(0)$, such that $\phi^{-1}(U)$ is bounded.
\end{proposition}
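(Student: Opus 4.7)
The plan is to verify two items: first, that $\chbar(\coor(\Fcal_{s,Ja,1}))$ contains a symmetric invariant well function (Definition~\ref{def:gen_well_fn}), and second, that $\Fcal_{s,Ja,1}$ satisfies the perturbation property (Definition~\ref{def:pert_prop}). The $\coor$ operator applied to $\Fcal_{s,Ja,1}$ yields the scalar-valued family $v\sigma(ax_1 + b\sum_i\phi(cx_i+d) + e)$, and both items will be approached through this form.

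For the symmetric well function, the key is a first-order Taylor trick that pushes $v\sigma(b'\sum_i x_i + e')$ into $\chbar(\coor(\Fcal_{s,Ja,1}))$ for any scalars $b', e', v$. Fix $d$ at a point where $\phi$ is differentiable with $\phi'(d) \neq 0$; such a point exists because $\phi$ is non-constant (a constant $\phi$ would make $\phi^{-1}(U) = \R$ unbounded, contradicting the hypothesis), and Rademacher's theorem gives differentiability almost everywhere. A first-order expansion gives
\begin{equation}
    \sum_i \phi(cx_i + d) = n\phi(d) + c\phi'(d)\sum_i x_i + o(c)
\end{equation}
uniformly on compact sets in $\bm x$. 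Setting $b' = bc\phi'(d)$ and $e' = bn\phi(d) + e$ and letting $c \to 0$ with $b = b'/(c\phi'(d))$ yields the uniform limit $v\sigma(b\sum_i\phi(cx_i+d) + e) \to v\sigma(b'\sum_i x_i + e')$, so this limit lies in $\chbar(\coor(\Fcal_{s,Ja,1}))$. The limit is of the form $h(\sum_i x_i)$ for a one-dimensional well function $h(t) = v\sigma(b't + e')$, and by Remark~\ref{rmk:symmetrc-invariant-well-function} this is a symmetric invariant well function.

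For the perturbation property, given $\sS$-distinct $\bm x, \bm y$ with a shared coordinate $x_i = y_{i'}$, I would take $a = 0$ in the control family. Since $u(x_i) = u(y_{i'})$, the output difference $[\bm f(u^{\otimes}(\bm x))]_i - [\bm f(u^{\otimes}(\bm y))]_{i'}$ reduces to $v[\sigma(bS_x + e) - \sigma(bS_y + e)]$, where $S_x := \sum_j\phi(cu(x_j) + d)$ and $S_y$ is analogous. Given $S_x \neq S_y$, the standard well-function fact recalled at the start of Section~\ref{sec:proofs:verification} supplies $b, e$ making the two $\sigma$-values differ. The problem therefore reduces to producing $u, c, d$ such that $S_x \neq S_y$.

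The main obstacle is this final reduction. Writing the distinct values of $\bm x \cup \bm y$ as $z_1 < \cdots < z_K$ with multiplicity differences $\alpha_k = m_k - m'_k$ (not all zero, summing to zero), taking $c = 1, d = 0$, and choosing a strictly increasing continuous $u$ with $u(z_k) = w_k$, one has $S_x - S_y = \sum_k \alpha_k \phi(w_k)$. I would prove the one-dimensional claim that there exist $w_1 < \cdots < w_K$ with $\sum_k \alpha_k \phi(w_k) \neq 0$ by contradiction and induction on $K$: if the sum vanished on every admissible ordered tuple, then freezing $w_2, \ldots, w_K$ and varying $w_1 \in (-\infty, w_2)$ would force $\alpha_1 \phi(w_1)$ to be constant on $(-\infty, w_2)$; letting $w_2$ (and the subsequent $w_k$'s) slide arbitrarily large then forces $\phi$ constant on all of $\R$, contradicting the non-constancy already established, hence $\alpha_1 = 0$. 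Iterating on the remaining nonzero indices yields all $\alpha_k = 0$, contradicting the choice of the $\alpha$'s.
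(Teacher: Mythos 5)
Your proposal is correct, but both halves take genuinely different routes from the paper.

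For the symmetric invariant well function, the paper constructs $\tau(\bm x) = \sigma\bigl(k\bigl[\tfrac{1}{n}\sum_i\phi(x_i) - \phi(0)\bigr]\bigr)$ for a suitable $k$ and verifies Definition~\ref{def:gen_well_fn} directly: the zero interval near $0$ comes from Lipschitz continuity of $\phi$, and the ``escape'' condition (Condition 3) comes precisely from the hypothesis that $\phi^{-1}(U)$ is bounded, which ensures $|\phi(z)-\phi(0)|$ is bounded below for $|z|$ large. Your route instead produces $v\sigma(b'\sum_i x_i + e')$ as a limit in $\chbar(\coor(\Fcal))$ via a Taylor expansion at a point of nonzero derivative, invoking Rademacher's theorem plus the observation that a Lipschitz $\phi$ with $\phi'=0$ a.e. is constant. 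This works and the uniform-on-compacts $o(c)$ claim is legitimate, but it is heavier machinery than the paper's direct check, and it only exploits nonconstancy of $\phi$ rather than the full boundedness hypothesis.

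For the perturbation property, both proofs reduce to showing $\sum_j\phi(u(x_j)) \neq \sum_j\phi(u(y_j))$ for some zooming function $u$, but the mechanisms differ. The paper argues by sending all but the single strictly-largest coordinate of $\bm x$ near $u(0)$ and letting that one coordinate escape to infinity, then invoking boundedness of $\phi^{-1}(U)$. Your argument groups the distinct values into multiplicity differences $\alpha_k$ and shows, by an inductive sweep that successively freezes the upper $w_k$'s and slides $w_1$ over $(-\infty,w_2)$, that constancy of $\sum_k\alpha_k\phi(w_k)$ over all admissible ordered tuples would force $\phi$ constant. Your version is arguably cleaner in one respect: the paper's ``largest-coordinate escapes'' step tacitly needs the distinguishing coordinate to be the unique maximum of $\bm x \cup \bm y$, which requires a bit of case handling if the max multiplicities happen to agree, whereas your induction on the $\alpha_k$'s handles all multiplicity patterns uniformly. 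Like your first half, it uses only nonconstancy of $\phi$ rather than the sharper hypothesis; this is not a defect, but worth noting that you have not used the $\phi^{-1}(U)$-boundedness directly anywhere except to rule out the constant case.
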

\begin{proof}
We first prove that the control family \eqref{eq:janossy-1} contains a symmetric invariant well function, by showing the following function
$f = \sigma(k[\frac{1}{n}\sum_{i=1}^n\phi(x_i) - \phi(0)])$ for a suitable $k$ is a symmetric invariant well function. Without loss of generality, we assume that $\sigma$ is a well function such that $\sigma(z) = 0$ for $|z| < r$ and $\sigma(z) \neq 0$ for $|z| > R$. The assumption on $\phi$ now implies that there exists $M$ and $\delta$ such that if $|z|>M$ then $|\phi(z) - \phi(0)|> \delta$.
Thus, we take $k = R/\delta$, so that $f$ is a symmetric well function with $a = r/(k\lip \phi)$ and $m = M$.

We now prove the perturbation property. It suffices to show that if $\bm x$ and $\bm y$ are $\sS$ distinct, then there exists a function $u^{\otimes}$ such that
\begin{equation}
    \phi(u(x_1))+\phi(u(x_2))+\cdots+\phi(u(x_n)) \neq \phi(u(y_1))+\phi(u(y_2))+\cdots+\phi(u(y_n)).
\end{equation}
We prove it by contradiction. Suppose that
\begin{equation}
    \phi(u(x_1))+\phi(u(x_2))+\cdots+\phi(u(x_n)) =  \phi(u(y_1))+\phi(u(y_2))+\cdots+\phi(u(y_n))
\end{equation}
holds for all coordinate zooming functions but $x_1 > y_1$.
Then, we can carefully choose $u$ such that $u(x_2),\cdots,u(x_n)$ and $u(y_1),\cdots,u(y_n)  \in B(u(0),\varepsilon)$ for some small $\varepsilon$, such that $\phi(u(x_1)) \in U$.
However, since the value of $u(x_1)$ can be arbitrarily chosen, the assumption yields the contradiction.
As a result, the control family \eqref{eq:janossy-1} satisfies the perturbation property.
\end{proof}

\subsubsection{Product Permutation Invariant Cases}
We prove that
\begin{equation}
    \Fcal_{s,2D,1}=\{ v \sigma ( w_0\bm x + w_{r,1}\Sigma_{r,1}\bm x + w_{c,1} \Sigma_{c,1}\bm x  + c \bm 1): w_0, w_{r,1},w_{c,1},c \in \mathbb R \}
\end{equation}
resolves $\sS_{2D}$, and the higher dimensional cases are similar.

\begin{proof}[Proof of Corollary~\ref{cor:product-permutation}]
    First, we check the perturbation property.
    As before, it suffices to prove that there exists $(i_1,j_1)$
    and $(i_2,j_2)$, real numbers $c_1,c_2$, and an increasing function $u$, such that $x_{i_1,j_1} = y_{i_2,j_2}$, but
    \begin{equation}
    c_1 \sum_{i' = 1}^{d_1} u(x_{i',j_1}) +c_2 \sum_{j' = 1}^{d_2}u(x_{i_1,j'}) \neq c_1 \sum_{i' = 1}^{d_1} u(y_{i',j_2}) +c_2 \sum_{j' = 1}^{d_2}u(y_{i_2,j'}).
    \end{equation}
    Suppose on the contrary that for all $(i_1,j_1)$ and $(i_2,j_2)$
    such that $x_{i_1,j_1} = y_{i_2,j_2}$, then
    \begin{equation}
        \label{eq:c1-c2}c_1 \sum_{i' = 1}^{d_1} u(x_{i',j_1}) +c_2 \sum_{j' = 1}^{d_2}u(x_{i_1,j'}) = c_1 \sum_{i' = 1}^{d_1} u(y_{i',j_2}) +c_2 \sum_{j' = 1}^{d_2}u(y_{i_2,j'})
    \end{equation}
    holds for all choice of $c_1,c_2$, and increasing function $u$.
    It follows from \eqref{eq:c1-c2} that
    \begin{equation}
        \sum_{i' = 1}^{d_1} u(x_{i',j_1}) =  \sum_{i' = 1}^{d_1} u(y_{i',j_2})
    \end{equation}
    and
    \begin{equation}
        \sum_{j' = 1}^{d_2}u(x_{i_1,j'}) = \sum_{j' = 1}^{d_2}u(y_{i_2,j'}).
    \end{equation}
    After a permutation, we may assume that
    $x_{1,1} = y_{1,1}$ and therefore $x_{i,1} = y_{i,1}$ and $x_{1,j} = y_{1,j}$ for all $i,j$.

    Suppose now, $x_{I,J} \neq y_{I,J}$, then there must exist $i \neq I$ such
    that $x_{I,J} = y_{i,J}$ and $j \neq J$ such that $x_{I,J} = y_{I,j}$.
    Therefore, $\bm y$ is not in general position, similarly we can deduce that
    $\bm x $ is not in general position, contradicting the assumption
    $\overline{s}(\bm x, \bm y)>0$.

    For transversal transitivity, it suffices to consider for example
    \begin{equation}Q_{\fa} = \{ x_{1,1} > x_{1,2} \cdots, > x_{d_1,d_2} \} \end{equation}
    and
    \begin{equation}Q_{\fb} =  \{ x_{1,2} > x_{1,1} \cdots, > x_{d_1,d_2} \}.\end{equation}
    Clearly, we can choose $\bm z \in \partial Q_{\fa} \cap \partial Q_{\fb}$ such that $\sum_{j=1}^{d_2} z_{1,j} \neq \sum_{j=1}^{d_2} z_{2,j}$,
    implying transversal transitivity.

We now check the perturbation property.
As before, it suffices to prove that there exists $(i_1,j_1)$
and $(i_2,j_2)$, real numbers $c_1,c_2$, and an increasing function $u$, such that $x_{i_1,j_1} = y_{i_2,j_2}$,  but
\begin{equation}
c_1 \sum_{i' = 1}^{d_1} u(x_{i',j_1}) +c_2 \sum_{j' = 1}^{d_2}u(x_{i_1,j'}) \neq c_1 \sum_{i' = 1}^{d_1} u(y_{i',j_2}) +c_2 \sum_{j' = 1}^{d_2}u(y_{i_2,j'}).
\end{equation}
Suppose that, for all $(i_1,j_1)$ and $(i_2,j_2)$ such that $x_{i_1,j_1} = y_{i_2,j_2}$, we have
\begin{equation}
    \label{eq:c1-c2}c_1 \sum_{i' = 1}^{d_1} u(x_{i',j_1}) +c_2 \sum_{j' = 1}^{d_2}u(x_{i_1,j'}) = c_1 \sum_{i' = 1}^{d_1} u(y_{i',j_2}) +c_2 \sum_{j' = 1}^{d_2}u(y_{i_2,j'}).
\end{equation}
holds for all choice of $c_1,c_2$, and increasing functions $u$.
It follows from \eqref{eq:c1-c2} that
\begin{equation}
    \sum_{i' = 1}^{d_1} u(x_{i',j_1}) =  \sum_{i' = 1}^{d_1} u(y_{i',j_2})
\end{equation}
and
\begin{equation}
    \sum_{j' = 1}^{d_2}u(x_{i_1,j'}) = \sum_{j' = 1}^{d_2}u(y_{i_2,j'}).
\end{equation}
Since $u$ is an arbitrary increasing function, we deduce that the equalities
\begin{equation}\{x_{i,j_1}:i=1,2,\cdots,d_1\} = \{y_{i,j_2}:i=1,2,\cdots,d_1\}\end{equation}
and
\begin{equation}\{x_{i_1,j}:j=1,2,\cdots,d_2\} = \{y_{i_2,j}:j=1,2,\cdots,d_2\}.\end{equation}
After a permutation, we may assume that
$x_{1,1} = y_{1,1}$ and therefore $x_{i,1} = y_{i,1}$ and
$x_{1,j} = y_{1,j}$ for all $i,j$.

Suppose now, $x_{I,J} \neq y_{I,J}$, then there must exist $i \neq I$ such that $x_{I,J} = y_{i,J}$ and $j \neq J$ such that $x_{I,J} = y_{I,j}$. Therefore, $\bm y$ is not in general position.
Similarly, we can deduce that $\bm x $ is not in general position,
contradicting the assumption $\overline{s}(\bm x, \bm y)>0$.

For transversal transitivity, it suffices to show for example that
\begin{equation}Q_{\fa} = \{ x_{1,1} > x_{1,2} \cdots, > x_{d_1,d_2} \} \end{equation}
and
\begin{equation}Q_{\fb} =  \{ x_{1,2} > x_{1,1} \cdots, > x_{d_1,d_2} \}. \end{equation}
are $\Fcal_{s,2D,1}$-connected.
Clearly, we can choose $\bm z \in \partial Q_{\fa} \cap \partial Q_{\fb}$ such
that $\sum_{j=1}^{d_2} z_{1,j} \neq \sum_{j=1}^{d_2} z_{2,j}$. Therefore, we can
show the transversal transitivity.
\end{proof}

\section*{Acknowledgements}

We are grateful for discussions with Isaac P. S. Tian
and Tonio Buonassisi on the applications of the theory developed to materials modelling.

This research was supported by the National Research Foundation, Singapore
under the NRF fellowship (project No. NRF-NRFF13-2021-0005).
T.L. is partially supported by The Elite Program of
Computational and Applied Mathematics for PhD Candidates in Peking University.
Z.S is supported under the Distinguished Professorship of National University of Singapore.

\newpage

\appendix
\section{Basic Properties of Equivariant Mappings}

In these appendices, we collect a number of auxiliary technical results
required in the proofs of the main results.


\subsection{Closure Properties of Invariant and Equivariant Mappings}
\begin{proposition}
    If $\mathcal F$ is $\sG$ equivariant, then so is $\mathcal A_{\Fcal}$.
\end{proposition}
\begin{proof}
    It suffices to show that the flow map of a equivariant mapping is equivariant,
    since function composition preserves equivariance.
    Consider the ODE system $d\bm x / dt = \bm f(\bm x)$.
    The Picard iteration sequence will converge to the solution of ODE,
    since $\bm f$ is Lipschitz.
    Clearly each function on Picard iteration is equivariant,
    then it follows from the fact that $\bm x$
    is Lipschitz that the flow map is also equivariant.
    Precisely, given $\bm x_0(t) = \bm 0$, define the Picard iteration as follows:
    \begin{equation}
        \bm x_{n}(t) = \bm x_0 + \int_0^{t} \bm f(\bm x_{n-1}(s)) ds
    \end{equation}
    for $n = 1,2,\cdots$. Define the flow maps for the Picard iterates
    \begin{equation}
        \bm \phi_n(\bm f, t)
        :=
        \bm x_0 \mapsto \bm x_n(t).
    \end{equation}
    By induction, each $\bm \phi_n(\bm f, t)$ is $\sG$ equivariant, and $\bm \phi_n(\bm f, t) \to \bm \phi(\bm f, t)$ uniformly in any compact set $K \subset \mathbb R^n$, therefore by Proposition~\ref{prop:negative-equivar}, the mapping $\bm \phi(\bm f, t)$ is $\sG$ equivariant.

\end{proof}

Using $\coor(\Fcal)$ we can make the connection between well function and coordinate zooming function. See Lemma~\ref{lem:coordinate-zooming} below.


\

\subsection{Convex Hull Property of Equivariant Mapping}
\label{sec:equi:chbar}

In this subsection, we show some basic property of convex hull closure.
These propositions imply that condition 2 in
Theorem~\ref{thm:main} can be reduced to requiring $\coor(\Fcal)$ to
contains a symmetric invariant well function.
The argument goes as follows:
by Proposition \ref{prop:chbar-coor-commutation},
the conditions on Theorem~\ref{thm:approx-dynamical} holds for $\chbar(\Fcal)$.
Whence, by Theorem~\ref{thm:approx-dynamical} and Proposition~\ref{thm:level-set},
we can deduce that the assertion in Theorem~\ref{thm:main}
holds for $\chbar(\Fcal)$.
Applying Proposition~\ref{prop:A-closure-chbarxx},
we obtain that the assertion also holds for $\Fcal$.

\begin{proposition}
    \label{prop:chbar-coor-commutation}
Given a group $\sG$, the $\coor$ operator commutes with $\chbar$ operator, that is,
\begin{equation}
    \coor(\chbar(\Fcal)) = \chbar(\coor(\Fcal)).
\end{equation}
\end{proposition}
\begin{proof}
    Notice that by Proposition~\ref{prop:negative-equivar}
    we know that $\chbar(\Fcal)$ is equivariant.
The commutation between $\coor$ and $\ch$ is obvious. We only prove that
\begin{equation}
    \label{eq:commutation-bar-coor}
    \coor(\overline{\Fcal}) = \overline{\coor(\Fcal)}.
\end{equation}
Suppose $f_1 \in \coor(\overline{\Fcal})$, which means
there exists $\bm f \in \overline{\Fcal}$ such that $\coor(\bm f) = f_1$.
For any compact set $K$ and tolerance $\varepsilon$,
by the definition of $\overline{\Fcal}$,
there exists $\bm g \in \Fcal$ such that $\|\bm f - \bm g\|_{C(K)} \le \varepsilon$.
Hence we have $\|f_1 - \coor(\bm g)\|_{C(K)} \le \varepsilon$,
yielding that $f_1 \in \overline{\coor(\Fcal)}$.

Conversely, if $f_1 \in \overline{\coor(\Fcal)}$, then for any compact set $K$
(assumed to be invariant)
and tolerance $\varepsilon>0$, we can find $g_1 \in \coor(\Fcal)$,
then by Proposition~\ref{prop:representation} we can construct
$\bm f$ and $\bm g \in \Fcal$, such that $\|\bm f - \bm g\|_{C(K)} \le \varepsilon$.
Therefore $f_1 \in \coor(\overline{\Fcal})$.
Hence the equality in \eqref{eq:commutation-bar-coor} holds.

\end{proof}

\begin{proposition}\label{prop:composition-closure}
If $\bm f, \bm g\in \overline{\Acal_{\Fcal}}$, then so does $\bm f\circ \bm g $.
\end{proposition}
\begin{proof}
    This follows immediate from the Lipschitz property of $\Fcal$.
\end{proof}

\begin{proposition}
    \label{prop:A-closure-chbarxx}
For any $\hat{\bm \varphi} \in \Acal_{\chbar(\Fcal)}$, compact set $K \in \mathbb{R}^n$ and tolerance $\varepsilon>0$, we can find a mapping $\bm \varphi \in \Acal_{\Fcal}$, such that $\|\bm \varphi - \hat{\bm \varphi}\|_{L^p(K)} \le \varepsilon.$
\end{proposition}
\begin{proof}
The proof of the one-dimensional is found in \cite{li2019deep},
and the proof of higher-dimensional cases are analogous, hence omitted.
\end{proof}

\vskip 0.2in
\endappendix

\bibliographystyle{plain}
\bibliography{library}

\begin{thebibliography}{10}

\bibitem{Arnold1973Ordinary}
V.~I. Arnold.
\newblock {\em Ordinary differential equations}.
\newblock 1973.

\bibitem{atkinson2008introduction}
Kendall~E Atkinson.
\newblock {\em An introduction to numerical analysis}.
\newblock John wiley \& sons, 2008.

\bibitem{azulay2018deep}
Aharon Azulay and Yair Weiss.
\newblock Why do deep convolutional networks generalize so poorly to small
  image transformations?
\newblock {\em arXiv preprint arXiv:1805.12177}, 2018.

\bibitem{bao2019approx}
Chenglong Bao, Qianxiao Li, Cheng Tai, Lei Wu, and Xueshuang Xiang.
\newblock Approximation analysis of convolutional neural networks.
\newblock {\em Submitted.}, 2019.

\bibitem{bloem-reddy_probabilistic_2020}
Benjamin {Bloem-Reddy} and Yee~Whye Teh.
\newblock Probabilistic {{Symmetries}} and {{Invariant Neural Networks}}.
\newblock page~61, 2020.

\bibitem{bogatskiy2020lorentz}
Alexander Bogatskiy, Brandon Anderson, Jan Offermann, Marwah Roussi, David
  Miller, and Risi Kondor.
\newblock Lorentz group equivariant neural network for particle physics.
\newblock In {\em International Conference on Machine Learning}, pages
  992--1002. PMLR, 2020.

\bibitem{bolcskei2019optimal}
Helmut Bolcskei, Philipp Grohs, Gitta Kutyniok, and Philipp Petersen.
\newblock Optimal approximation with sparsely connected deep neural networks.
\newblock {\em SIAM Journal on Mathematics of Data Science}, 1(1):8--45, 2019.

\bibitem{butler2018machine}
Keith~T Butler, Daniel~W Davies, Hugh Cartwright, Olexandr Isayev, and Aron
  Walsh.
\newblock Machine learning for molecular and materials science.
\newblock {\em Nature}, 559(7715):547--555, 2018.

\bibitem{charles_pointnet_2017}
R.~Qi Charles, Hao Su, Mo~Kaichun, and Leonidas~J. Guibas.
\newblock {{PointNet}}: {{Deep Learning}} on {{Point Sets}} for {{3D
  Classification}} and {{Segmentation}}.
\newblock In {\em 2017 {{IEEE Conference}} on {{Computer Vision}} and {{Pattern
  Recognition}} ({{CVPR}})}, pages 77--85, {Honolulu, HI}, July 2017. {IEEE}.

\bibitem{chenGraphNetworksUniversal2019}
Chi Chen, Weike Ye, Yunxing Zuo, Chen Zheng, and Shyue~Ping Ong.
\newblock Graph {{Networks}} as a {{Universal Machine Learning Framework}} for
  {{Molecules}} and {{Crystals}}.
\newblock {\em Chemistry of Materials}, 31(9):3564--3572, May 2019.

\bibitem{cohenwelling16}
Taco Cohen and Max Welling.
\newblock Group equivariant convolutional networks.
\newblock In Maria~Florina Balcan and Kilian~Q. Weinberger, editors, {\em
  Proceedings of The 33rd International Conference on Machine Learning},
  volume~48 of {\em Proceedings of Machine Learning Research}, pages
  2990--2999, New York, New York, USA, 20--22 Jun 2016. PMLR.

\bibitem{weinan2017proposal}
Weinan E.
\newblock {A Proposal on Machine Learning via Dynamical Systems}.
\newblock {\em Communications in Mathematics and Statistics}, 5(1):1--11, 2017.

\bibitem{evans1983introduction}
Lawrence~C Evans.
\newblock An introduction to mathematical optimal control theory version 0.2.
\newblock {\em Lecture notes available at http://math. berkeley. edu/\~{}
  evans/control. course. pdf}, 1983.

\bibitem{finzi2020generalizing}
Marc Finzi, Samuel Stanton, Pavel Izmailov, and Andrew~Gordon Wilson.
\newblock Generalizing convolutional neural networks for equivariance to lie
  groups on arbitrary continuous data.
\newblock In {\em International Conference on Machine Learning}, pages
  3165--3176. PMLR, 2020.

\bibitem{germain_deepsets_2022}
Maximilien Germain, Mathieu Lauri{\`e}re, Huy{\^e}n Pham, and Xavier Warin.
\newblock {{DeepSets}} and their derivative networks for solving symmetric
  {{PDEs}}.
\newblock (arXiv:2103.00838), January 2022.

\bibitem{goh2017deep}
Garrett~B Goh, Nathan~O Hodas, and Abhinav Vishnu.
\newblock Deep learning for computational chemistry.
\newblock {\em Journal of computational chemistry}, 38(16):1291--1307, 2017.

\bibitem{Haber2017}
Eldad Haber and Lars Ruthotto.
\newblock {Stable architectures for deep neural networks}.
\newblock {\em Inverse Problems}, 34(1):14004, 2017.

\bibitem{hall1991crystallographic}
Sydney~R Hall, Frank~H Allen, and I~David Brown.
\newblock The crystallographic information file (cif): a new standard archive
  file for crystallography.
\newblock {\em Acta Crystallographica Section A: Foundations of
  Crystallography}, 47(6):655--685, 1991.

\bibitem{He2016}
Kaiming He, Xiangyu Zhang, Shaoqing Ren, and Jian Sun.
\newblock {Deep residual learning for image recognition}.
\newblock In {\em Proceedings of the IEEE Computer Society Conference on
  Computer Vision and Pattern Recognition}, 2016.

\bibitem{li2017maximum}
Qianxiao Li, Long Chen, Cheng Tai, and Weinan E.
\newblock {Maximum Principle Based Algorithms for Deep Learning}.
\newblock {\em Journal of Machine Learning Research}, 18(1):1--29, 2018.

\bibitem{li2019deep}
Qianxiao Li, Ting Lin, and Zuowei Shen.
\newblock Deep learning via dynamical systems: {{An}} approximation
  perspective.
\newblock {\em Journal of the European Mathematical Society}, April 2022.

\bibitem{li2021on}
Zhong Li, Jiequn Han, Weinan E, and Qianxiao Li.
\newblock On the curse of memory in recurrent neural networks: Approximation
  and optimization analysis.
\newblock In {\em International Conference on Learning Representations}, 2021.

\bibitem{li_approximation_2022}
Zhong Li, Jiequn Han, Weinan E, and Qianxiao Li.
\newblock Approximation and {Optimization} {Theory} for {Linear}
  {Continuous}-{Time} {Recurrent} {Neural} {Networks}.
\newblock {\em Journal of Machine Learning Research}, 23(42):1--85, 2022.
\newblock Publisher: Microtome Publishing.

\bibitem{lu2021deep}
Jianfeng Lu, Zuowei Shen, Haizhao Yang, and Shijun Zhang.
\newblock Deep network approximation for smooth functions.
\newblock {\em SIAM Journal on Mathematical Analysis}, 53(5):5465--5506, 2021.

\bibitem{maron2019universality}
Haggai Maron, Ethan Fetaya, Nimrod Segol, and Yaron Lipman.
\newblock On the universality of invariant networks.
\newblock In {\em International conference on machine learning}, pages
  4363--4371. PMLR, 2019.

\bibitem{noe2020machine}
Frank No{\'e}, Alexandre Tkatchenko, Klaus-Robert M{\"u}ller, and Cecilia
  Clementi.
\newblock Machine learning for molecular simulation.
\newblock {\em Annual review of physical chemistry}, 71:361--390, 2020.

\bibitem{oono2019approximation}
Kenta Oono and Taiji Suzuki.
\newblock Approximation and non-parametric estimation of resnet-type
  convolutional neural networks.
\newblock In {\em International Conference on Machine Learning}, pages
  4922--4931. PMLR, 2019.

\bibitem{oord2016wavenet}
Aaron van~den Oord, Sander Dieleman, Heiga Zen, Karen Simonyan, Oriol Vinyals,
  Alex Graves, Nal Kalchbrenner, Andrew Senior, and Koray Kavukcuoglu.
\newblock Wavenet: A generative model for raw audio.
\newblock {\em arXiv preprint arXiv:1609.03499}, 2016.

\bibitem{osti_1207669}
Kristin Persson.
\newblock Materials data on caco3 (sg:167) by materials project, 7 2014.

\bibitem{osti_1199028}
Kristin Persson.
\newblock Materials data on nacl (sg:225) by materials project, 11 2014.

\bibitem{petersen2020equivalence}
Philipp Petersen and Felix Voigtlaender.
\newblock Equivalence of approximation by convolutional neural networks and
  fully-connected networks.
\newblock {\em Proceedings of the American Mathematical Society},
  148(4):1567--1581, 2020.

\bibitem{qi2017pointnet}
Charles~R Qi, Hao Su, Kaichun Mo, and Leonidas~J Guibas.
\newblock Pointnet: Deep learning on point sets for 3d classification and
  segmentation.
\newblock In {\em Proceedings of the IEEE conference on computer vision and
  pattern recognition}, pages 652--660, 2017.

\bibitem{ravanbakhsh2020universal}
Siamak Ravanbakhsh.
\newblock Universal equivariant multilayer perceptrons.
\newblock In {\em International Conference on Machine Learning}, pages
  7996--8006. PMLR, 2020.

\bibitem{ravanbakhsh2017equivariance}
Siamak Ravanbakhsh, Jeff Schneider, and Barnabas Poczos.
\newblock Equivariance through parameter-sharing.
\newblock In {\em International Conference on Machine Learning}, pages
  2892--2901. PMLR, 2017.

\bibitem{ren2020inverse}
Zekun Ren, Juhwan Noh, Siyu Tian, Felipe Oviedo, Guangzong Xing, Qiaohao Liang,
  Armin Aberle, Yi~Liu, Qianxiao Li, Senthilnath Jayavelu, et~al.
\newblock Inverse design of crystals using generalized invertible
  crystallographic representation.
\newblock {\em arXiv preprint arXiv:2005.07609}, 2020.

\bibitem{sannai2019universal}
Akiyoshi Sannai, Yuuki Takai, and Matthieu Cordonnier.
\newblock Universal approximations of permutation invariant/equivariant
  functions by deep neural networks.
\newblock {\em arXiv preprint arXiv:1903.01939}, 2019.

\bibitem{shen2019deep}
Zuowei Shen, Haizhao Yang, and Shijun Zhang.
\newblock Deep network approximation characterized by number of neurons.
\newblock {\em arXiv preprint arXiv:1906.05497}, 2019.

\bibitem{shen2021deep}
Zuowei Shen, Haizhao Yang, and Shijun Zhang.
\newblock Deep network approximation: Achieving arbitrary accuracy with fixed
  number of neurons.
\newblock {\em arXiv preprint arXiv:2107.02397}, 2021.

\bibitem{shen2021deepb}
Zuowei Shen, Haizhao Yang, and Shijun Zhang.
\newblock Deep network with approximation error being reciprocal of width to
  power of square root of depth.
\newblock {\em Neural Computation}, 33(4):1005--1036, 2021.

\bibitem{shen2021neural}
Zuowei Shen, Haizhao Yang, and Shijun Zhang.
\newblock Neural network approximation: Three hidden layers are enough.
\newblock {\em Neural Networks}, 141:160--173, 2021.

\bibitem{shen2021optimal}
Zuowei Shen, Haizhao Yang, and Shijun Zhang.
\newblock Optimal approximation rate of relu networks in terms of width and
  depth.
\newblock {\em arXiv preprint arXiv:2103.00502}, 2021.

\bibitem{tian2022tbd}
Siyu Isaac~Parker Tian, Ting Lin, Tonio Buonassisi, and Qianxiao Li.
\newblock Material property prediction using permutation invariant dynamical
  systems.
\newblock {\em In preparation}, 2022.

\bibitem{vinyals_order_2015}
Oriol Vinyals, Samy Bengio, and Manjunath Kudlur.
\newblock Order {{Matters}}: {{Sequence}} to sequence for sets.
\newblock November 2015.

\bibitem{wagstaff2022universal}
Edward Wagstaff, Fabian~B Fuchs, Martin Engelcke, Michael~A Osborne, and Ingmar
  Posner.
\newblock Universal approximation of functions on sets.
\newblock {\em Journal of Machine Learning Research}, 23(151):1--56, 2022.

\bibitem{weiler20183d}
Maurice Weiler, Mario Geiger, Max Welling, Wouter Boomsma, and Taco Cohen.
\newblock 3d steerable cnns: Learning rotationally equivariant features in
  volumetric data.
\newblock {\em arXiv preprint arXiv:1807.02547}, 2018.

\bibitem{xieCrystalGraphConvolutional2018}
Tian Xie and Jeffrey~C. Grossman.
\newblock Crystal {{Graph Convolutional Neural Networks}} for an {{Accurate}}
  and {{Interpretable Prediction}} of {{Material Properties}}.
\newblock {\em Physical Review Letters}, 120(14):145301, April 2018.

\bibitem{yang2020approximation}
Yunfei Yang and Yang Wang.
\newblock Approximation in shift-invariant spaces with deep relu neural
  networks.
\newblock {\em arXiv preprint arXiv:2005.11949}, 2020.

\bibitem{yarotsky2018optimal}
Dmitry Yarotsky.
\newblock Optimal approximation of continuous functions by very deep relu
  networks.
\newblock In {\em Conference on Learning Theory}, pages 639--649. PMLR, 2018.

\bibitem{yarotsky2018universal}
Dmitry Yarotsky.
\newblock Universal approximations of invariant maps by neural networks.
\newblock {\em arXiv preprint arXiv:1804.10306}, 2018.

\bibitem{yeomans1992statistical}
Julia~M Yeomans.
\newblock {\em Statistical mechanics of phase transitions}.
\newblock Clarendon Press, 1992.

\bibitem{NIPS2017_f22e4747}
Manzil Zaheer, Satwik Kottur, Siamak Ravanbakhsh, Barnabas Poczos, Russ~R
  Salakhutdinov, and Alexander~J Smola.
\newblock Deep sets.
\newblock In I.~Guyon, U.~V. Luxburg, S.~Bengio, H.~Wallach, R.~Fergus,
  S.~Vishwanathan, and R.~Garnett, editors, {\em Advances in Neural Information
  Processing Systems}, volume~30. Curran Associates, Inc., 2017.

\bibitem{zhou2020universality}
Ding-Xuan Zhou.
\newblock Universality of deep convolutional neural networks.
\newblock {\em Applied and computational harmonic analysis}, 48(2):787--794,
  2020.

\bibitem{zweig2021functional}
Aaron Zweig and Joan Bruna.
\newblock A functional perspective on learning symmetric functions with neural
  networks.
\newblock In {\em International Conference on Machine Learning}, pages
  13023--13032. PMLR, 2021.

\end{thebibliography}

\end{document}